\crefname{assumption}{assumption}{assumptions}
\setlist[itemize]{leftmargin=*}
\setlist[enumerate]{leftmargin=*}
\let\svthefootnote\thefootnote
\newcommand\freefootnote[1]{%
  \let\thefootnote\relax%
  \footnotetext{#1}%
  \let\thefootnote\svthefootnote%
}
\newcommand\footnoteref[1]{\protected@xdef\@thefnmark{\ref{#1}}\@footnotemark}
\newcommand{\newreptheorem}[2]{%
    \newtheorem{#1}{#2}%
    \newenvironment{rep#1}[1]{%
        \newtheorem*{rep@#1}{#2 \ref{##1}}%
        \begin{rep@#1}%
    }
    {\end{rep@#1}}%
}
\newcommand{\E}{\mathbb{E}}
\newcommand{\N}{\mathbb{N}} 
\newcommand{\R}{\mathbb{R}} 
\newcommand{\SSS}{\mathbb{S}}
\newcommand{\Z}{\mathbb{Z}} 
\newcommand{\ab}{\mathbf{a}}
\newcommand{\bb}{\mathbf{b}}
\newcommand{\cb}{\mathbf{c}}
\newcommand{\db}{\mathbf{d}}
\newcommand{\eb}{\mathbf{e}}
\newcommand{\fb}{\mathbf{f}}
\newcommand{\rb}{\mathbf{r}}
\newcommand{\ub}{\mathbf{u}}
\newcommand{\vb}{\mathbf{v}}
\newcommand{\xb}{\mathbf{x}}
\newcommand{\yb}{\mathbf{y}}
\newcommand{\zb}{\mathbf{z}}
\newcommand{\Ab}{\mathbf{A}}
\newcommand{\Bb}{\mathbf{B}}
\newcommand{\Gb}{\mathbf{G}}
\newcommand{\Hb}{\mathbf{H}}
\newcommand{\Ib}{\mathbf{I}}
\newcommand{\Pb}{\mathbf{P}}
\newcommand{\Vb}{\mathbf{V}}
\newcommand{\Xb}{\mathbf{X}}
\newcommand{\Dcal}{\mathcal{D}}
\newcommand{\Ncal}{\mathcal{N}}
\newcommand{\Scal}{{\mathcal{S}}}
\newcommand{\Vcal}{\mathcal{V}}
\newcommand{\Wcal}{\mathcal{W}}
\newcommand{\Xcal}{\mathcal{X}}
\newcommand{\Ycal}{\mathcal{Y}}
\newcommand{\gammab}{\boldsymbol{\gamma}}
\newcommand{\zetab}{\boldsymbol{\zeta}}
\newcommand{\thetab}{\boldsymbol{\theta}}
\newcommand{\xib}{\boldsymbol{\xi}}
\newcommand{\Gammab}{\boldsymbol{\Gamma}}
\newcommand{\Lambdab}{\boldsymbol{\Lambda}}
\newcommand{\Sigmab}{\boldsymbol{\Sigma}}
\newcommand{\Phib}{\boldsymbol{\Phi}}
\renewcommand{\le}{\leqslant}
\renewcommand{\ge}{\geqslant}
\renewcommand{\leq}{\leqslant}
\renewcommand{\geq}{\geqslant}
\newcommand{\dfeq}{\triangleq}
\newcommand{\aleq}{\preccurlyeq}
\newcommand*{\argmin}{\mathop{\mathrm{argmin}}}
\newcommand*{\argmax}{\mathop{\mathrm{argmax}}}
\newcommand{\tr}{\mathop{\mathrm{tr}}}
\newcommand{\diag}{\mathop{\mathrm{diag}}}
\newcommand{\rank}{\mathop{\mathrm{rank}}}
\newcommand{\range}{\mathop{\mathrm{Range}}}
\newcommand{\spn}{\mathop{\mathrm{span}}}
\newcommand{\nucnorm}[1]{{\left\vert\kern-0.25ex\left\vert\kern-0.25ex\left\vert #1 
    \right\vert\kern-0.25ex\right\vert\kern-0.25ex\right\vert}}
\newcommand{\exrisk}{\mathbf{ER}}
\newcommand{\tsvd}[2]{\ssbr{#1}_{#2}}
\newcommand{\wh}[1]{\widehat{#1}}
\newcommand{\wt}[1]{\widetilde{#1}}
\newcommand{\eg}{\emph{e.g.}\xspace}
\newcommand{\ie}{\emph{i.e.}\xspace}
\newcommand{\iid}{\emph{i.i.d.}\xspace}
\newcommand{\cf}{\emph{cf.}\xspace}
\newcommand{\wrt}{\emph{w.r.t.}\xspace}
\newcommand{\rbr}[1]{\left(#1\right)}
\newcommand{\sbr}[1]{\left[#1\right]}
\newcommand{\cbr}[1]{\left\{#1\right\}}
\newcommand{\nbr}[1]{\left\|#1\right\|}
\newcommand{\abbr}[1]{\left\vert#1\right\vert}
\newcommand{\ssbr}[1]{\left\llbracket #1 \right\rrbracket}
\newcommand{\nnbr}[1]{{\left\vert\kern-0.25ex\left\vert\kern-0.25ex\left\vert #1 \right\vert\kern-0.25ex\right\vert\kern-0.25ex\right\vert}}
\newcommand{\csep}[2]{\left\{#1 \middle\vert #2 \right\}}
\newcommand{\bmat}[1]{\begin{bmatrix} #1 \end{bmatrix}}
\definecolor{commentcolor}{RGB}{110,154,155}   
\newcommand{\blue}[1]{\textcolor{blue}{#1}}
\newcommand{\red}[1]{\textcolor{red}{#1}}
\renewcommand{\b}{\textbf}
\renewcommand{\t}[1]{\text{#1}}
\newcommand{\wts}{{\mathrm{w2s}}}
\newcommand{\vari}{\mathbf{Var}}
\newcommand{\bias}{\mathbf{Bias}}
\newcommand{\pgr}{\mathbf{PGR}}
\newcommand{\opr}{\mathbf{OPR}}
\newcommand{\ortho}{\mathrm{ortho}}
\title{Discrepancies are Virtue: Weak-to-Strong Generalization through Lens of Intrinsic Dimension}
\author{%
Yijun Dong\textsuperscript{1} \and
Yicheng Li\textsuperscript{1} \and
Yunai Li\textsuperscript{2} \and
Jason D. Lee\textsuperscript{3} \and
Qi Lei\textsuperscript{1}
}
\date{}
\begin{document}

\maketitle
\vspace{-2em}
\begin{center}
    \textsuperscript{1}\text{New York University} \quad
    \textsuperscript{2}\text{Shanghai Jiaotong University} \quad
    \textsuperscript{3}\text{Princeton University}
    \vspace{2mm} \\
    \texttt{\{yd1319,yl9315\}@nyu.edu} \quad
    \texttt{liyunai\_8528@sjtu.edu} \quad
    \texttt{jasonlee@princeton.edu} \quad
    \texttt{ql518@nyu.edu}
    \vspace{2em}
\end{center}

\begin{abstract}\vspace{1em}
    Weak-to-strong (W2S) generalization is a type of finetuning (FT) where a strong (large) student model is trained on pseudo-labels generated by a weak teacher. Surprisingly, W2S FT often outperforms the weak teacher. We seek to understand this phenomenon through the observation that FT often occurs in intrinsically low-dimensional spaces. Leveraging the low intrinsic dimensionality of FT, we analyze W2S in the ridgeless regression setting from a variance reduction perspective. For a strong student-weak teacher pair with sufficiently expressive low-dimensional feature subspaces $\mathcal{V}_s, \mathcal{V}_w$, we provide an exact characterization of the variance that dominates the generalization error of W2S. This unveils a virtue of discrepancy between the strong and weak models in W2S: the variance of the weak teacher is inherited by the strong student in $\mathcal{V}_s \cap \mathcal{V}_w$, while reduced by a factor of $\dim(\mathcal{V}_s)/N$ in the subspace of discrepancy $\mathcal{V}_w \setminus \mathcal{V}_s$ with $N$ pseudo-labels for W2S. Our analysis further casts light on the sample complexities and the scaling of performance gap recovery in W2S. The analysis is supported by experiments on synthetic regression problems, as well as real vision and NLP tasks.
\end{abstract}
\section{Introduction}
As the capabilities of modern machine learning models grow and exceed human performance in many domains, an emerging problem is whether it would be possible to align the strong superhuman models with weaker supervisors such as human feedback. The weak-to-strong (W2S) framework introduced in \cite{burns2023weak} is a feasible analogy for this problem, inquiring how much capacity of a strong student model can be evoked under the supervision of a weak teacher model.
W2S is related to various learning paradigms like co-training~\citep{blum1998combining}, self-training~\citep{scudder1965probability}, knowledge distillation~\citep{hinton2015distilling}, and self-distillation~\citep{zhang2019your,zhang2021self}, yet being critically dissimilar. 

Formalizing the \emph{discrepancy} between the student and the teacher in their \emph{model capacities} is essential for understanding W2S. 
Most existing theories for W2S treat model capacity as an absolute notion with respect to the downstream task, \eg the weak teacher lacks the robustness to perturbation~\citep{lang2024theoretical,shin2024weak} or the ability to fit the target function~\citep{ildiz2024high,wu2024provable}. 
Nevertheless, empirical observations suggest W2S models also surpass weak models' performance on less challenging tasks~\citep{burns2023weak}, where the weak teacher has sufficient capacity to achieve good performance.
This gap of understanding motivates some natural questions:
\begin{center}
    \emph{Why W2S happens when both the teacher and student have sufficient capacities for the downstream task? \\
    What affects W2S generalization beyond the absolute notion of model capacity?}
\end{center}
To answer the above questions, we analyze W2S generalization through the lens of intrinsic dimension beyond the absolute notion of model capacity. We develop a theoretical framework that incorporates student-teacher correlation, providing a more nuanced explanation of when and why W2S model surpasses the weak teacher's performance.  

Our framework is built on two inspiring observations on finetuning (FT): 
\begin{enumerate*}[label=(\roman*)]
    \item FT tends to fall in the kernel regime~\citep{jacot2018neural,wei2022more,malladi2023kernel}; and
    \item for a downstream task, relevant features in a stronger pretrained model tend to concentrate in a subspace of lower dimension, known as the \emph{intrinsic dimension}, even when FT is highly overparametrized~\citep{aghajanyan2020intrinsic}. 
\end{enumerate*}
Leveraging these properties, we cast FT as a ridgeless regression problem over subgaussian features.
In particular, we consider two subspaces $\Vcal_s, \Vcal_w \subset \R^d$ of low dimensions $d_s, d_w \ll d$ that encapsulate relevant features in the strong student and weak teacher, respectively.
The ``absolute'' model capacities are measured from two aspects: 
\begin{enumerate*}[label=(\roman*)]
    \item the intrinsic dimensions $d_s, d_w$ that quantify the representation ``complexity'' and
    \item the approximation errors $\rho_s < \rho_w$ that quantify the representation ``accuracy''
\end{enumerate*}
of the strong and weak models, respectively.
In addition, the student-teacher correlation is measured by alignment between the strong and weak feature subspaces through their canonical angles (see \Cref{apx:canonical_angles}), $d_{s \wedge w} = \sum \cos(\angle(\Vcal_s, \Vcal_w))$ such that $d_{s \wedge w} \in [0, \min\{d_s, d_w\}]$.

This framework reveals the roles of low intrinsic dimensions and student-teacher correlation in W2S.
Decomposing the W2S generalization error into variance and bias, the bias is due to the approximation errors, $\rho_s, \rho_w$, which are low when both student and teacher have sufficient capabilities; whereas the variance comes from noise in the labeled samples for finetuning the weak teacher.
When finetuning the strong student with $N \gtrsim d_s$ pseudo-labels generated by a weak teacher supervisedly finetuned with $n \gtrsim d_w$ noisy labels, the variance of W2S is proportional to:
\begin{align*}
    \tikz[baseline=(A.base)]{
    \node[fill=red!10, draw, rounded corners, inner sep=2pt] (A)
       {$\displaystyle \frac{d_{s \wedge w}}{n}$};
    \node[below=15pt, anchor=north] {\footnotesize \red{Var. in $\Vcal_s \cap \Vcal_w$}};
    }
    + 
    \tikz[baseline=(A.base)]{
    \node[draw, rounded corners, inner sep=2pt] (A)
       {$\displaystyle \frac{d_s}{N}$};
    \node[below=15pt, anchor=north] {\footnotesize W2S};
    }
    \tikz[baseline=(B.base)]{
    \node[fill=blue!10, draw, rounded corners, inner sep=2pt] (B)
       {$\displaystyle \frac{d_w - d_{s \wedge w}}{n}$}; 
    \node[below=15pt, anchor=north] {\footnotesize\blue{Var. in $\Vcal_w \setminus \Vcal_s$}};
    }.
\end{align*}
Specifically, the student mimics variance of the weak teacher in the overlapped feature subspace $\Vcal_s \cap V_w$ but reduces the variance by a factor of $d_s/N$ in the discrepancy between $\Vcal_w$ and $\Vcal_s$.
Compared to the weak teacher variance that scales as $d_w/n$, W2S happens (\ie the student outperforms its weak teacher) with sufficient sample sizes $n, N$ when:
\begin{enumerate*}[label=(\roman*)]
    \item the strong student has a lower intrinsic dimension, $d_s < d_w$ (as empirically observed in \cite{aghajanyan2020intrinsic} on NLP tasks), or
    \item the student-teacher correlation is low, $d_{s \wedge w} < d_w$.
\end{enumerate*}
This unveils the benefit of discrepancy between the teacher and student features for W2S: 
\begin{center}
    \emph{In the variance-dominated regime, W2S comes from variance reduction in the discrepancy of weak teacher features from strong student features}.
\end{center}

To provide intuitions for such variance reduction, let's consider $\Vcal_s$ and $\Vcal_w$ with large discrepancy as two distinct aspects of a downstream task that both provide sufficient information. For example, to classify the brand of a car in an image, one can use either the simple information in the logo (strong features $\Vcal_s$ with a lower intrinsic dimension $d_s$) or the complex information in the design (weak features $\Vcal_w$ with a higher intrinsic dimension $d_w$). In a high-dimensional feature space, $\Vcal_s$ and $\Vcal_w$ that encode irrelevant information are likely almost orthogonal, leading to a small $d_{s \wedge w}$. Then, the error of weak teacher induced by noise in the $n$ labeled samples is only correlated to the design features in $\Vcal_w$ but almost independent of the logo features in $\Vcal_s$. {Therefore, the error in weak supervision can be viewed as independent label noise for the student. With an intrinsic dimension of $d_s$, the generalization error of student induced by such independent noise vanishes at a rate of $O(d_s/N)$.}    

Our main contributions are summarized as follows:
\begin{itemize}
    \item We introduce a theoretical framework for W2S based on the low intrinsic dimensions of FT, where we characterize model capacities from three aspects: approximation errors for ``accuracy'', intrinsic dimensions for ``complexity'', and student-teacher correlation for ``alignment'' (\Cref{sec:ridgeless_regression}).
    
    \item We provide a generalization analysis for W2S with an exact characterization of the variance under a Gaussian feature assumption, unveiling the virtue of discrepancy between the student and teacher in W2S (\Cref{sec:generalization_errors}).
    
    \item We investigate the relative W2S performance in terms of performance gap recovery (PGR)~\citep{burns2023weak} and outperforming ratio (OPR) compared to the strong baseline model supervisedly finetuned with $n$ labels. A case study provides insights into the scaling of PGR and OPR with respect to the sample sizes $n, N$ and sample complexities in W2S (\Cref{sec:w2s_performance}).
\end{itemize}

\subsection{Related works}
In this section, we review literature directly related to W2S and intrinsic dimension, while deferring detailed discussions on other related topics to \Cref{apx:related_works}.

\paragraph{W2S alignment: emergence \& growing influence.}
W2S generalization was first introduced by \cite{burns2023weak}, offering a promising avenue for aligning superhuman models. 
A rapidly expanding body of work has empirically validated this phenomenon across diverse tasks in vision and language models since then. 
\citet{guo2024visionsuperalignmentweaktostronggeneralization,liu2024cosupervisedlearningimprovingweaktostrong} propose loss functions and multi-teacher algorithms. 
\citet{guo2024improvingweaktostronggeneralizationreliabilityaware,yang2024weaktostrongreasoning} refine training data to improve W2S alignment, while \citet{li2024superfilteringweaktostrongdatafiltering,sun2024easytohardgeneralizationscalablealignment} use weak models for data filtering and reranking.
In contrast, \citet{yang2024superficialalignmentstrongmodelsdeceive} highlight the issue of W2S deception, where strong models superficially align with weak teachers but fail in new or conflicting cases. This calls for theoretical understanding of the mechanism behind W2S generalization and better strategies to mitigate misalignment. 
Coinciding with our theoretical findings, the strong negative correlation between model similarity and W2S performance is empirically observed in the concurrent work \cite{goel2025great} in extensive experiments.

\paragraph{Theoretical perspectives on W2S generalization.} 
Existing theories on W2S interpret the difference between strong and weak models in terms of the quality of their representations (from the bias perspective in our context). 
\citet{lang2024theoretical} study W2S in classification through the lens of neighborhood expansion~\citep{wei2020theoretical,cai2021theory} where model capacity is interpreted as the robustness to perturbation.
Within this framework, \citet{shin2024weak} highlights the importance of data selection in W2S while proposing metrics and algorithms for data selection in W2S.
In the same classification setting, \citet{somerstep2024statistical} takes a transfer learning perspective and highlights the limitation of naive FT in W2S.
\citet{wu2024provable} take a benign overfitting~\citep{bartlett2020benign,muthukumar2021classification} perspective and show the asymptotic transition between W2S generalization and random guessing.
For regression tasks, \citet{Charikar2024QuantifyingTG} reveals the connection between W2S gain and misfit error of the strong student on weak pseudo-labels.
\citet{ildiz2024high} treats W2S as a special case of knowledge distillation, showing its limitation in terms of improving the data scaling law~\citep{spigler2020asymptotic,bahri2024explaining}.
We consider a similar ridgeless regression setting as \citet{ildiz2024high} but from a fundamentally different aspect -- variance reduction. This offers a fresh take on the roles of intrinsic dimension and student-teacher correlation in W2S. 
Parallel to this work, \citet{medvedev2025weak,yao2025understanding} analyze W2S in the context of early stopping and loss functions, respectively.

\paragraph{Intrinsic dimension.} 
There has been prevailing empirical and theoretical evidence that natural high-dimensional systems often exhibit low-dimensional structures~\citep{udell2019big}.
The concept of intrinsic dimension has been widely studied in manifold learning~\citep{tenenbaum2000global}, dimensionality reduction~\citep{van2008visualizing}, and representation learning~\citep{bengio2013representation}.
In the context of neural network training, \citet{li2018measuring} propose a method to measure the intrinsic dimension of the objective landscape based on the Johnson-Lindenstrauss-type transforms~\citep{johnson1984extensions}. 
This offers a structural perspective on task complexity, which is largely absent from prior W2S studies. 
\citet{aghajanyan2020intrinsic} investigate the intrinsic dimensions of FT, showing that FT over large models usually has surprisingly low intrinsic dimensions, while good pretraining tends to reduce the intrinsic dimension.
Our work extends these insights by linking the intrinsic dimension to W2S, decomposing generalization error into bias and variance, and building upon findings from \citet{yang2020rethinking, amari2020does} on variance-dominated risks in learning from noisy labels.

\subsection{Notations}
Given any $n \in \Z_+$, we denote $[n] = \cbr{1,\cdots,n}$. 
Let $\eb_n$ be the $n$-th canonical basis of conformable dimension; $\Ib_n$ is the $n \times n$ identity matrix; and $\b0_n, \b1_n \in \R^n$ are vectors with all zeroes and ones. 
For any distribution $p$ and $n \in \Z_+$, let $p^n \dfeq \bigotimes_{i=1}^n p$ as the $n$-fold product distribution of $p$, sampling which yields $n$ $\iid$ samples from $p$.
For any matrix $\Ab \in \R^{n \times d}$, let $\Ab^\dagger$ be the Moore-Penrose pseudoinverse.
We adapt the standard asymptotic notations: for any functions $f,g: \R_+ \to \R_+$, we write $f = O\rbr{g}$ or $f \lesssim g$ if there exists some constant $C>0$ such that $f(x) \leq C g(x)$ for all $x \in \R_+$; $f = \Omega\rbr{g}$ or $f \gtrsim g$ if $g = O\rbr{f}$; $f \asymp g$ if $f = O\rbr{g}$ and $f = \Omega\rbr{g}$. Also, we denote $f = o(g)$ or $f/g = o_x(1)$ if $\lim_{x \to \infty} f(x)/g(x) = 0$.
\section{Problem setup}\label{sec:ridgeless_regression}
In this section, we cast FT as a ridgeless regression problem. The setup is introduced in three parts: model capacity, FT algorithms, and metrics for W2S performance.

Consider the problem of learning an unknown data distribution $\Dcal(f_*): \Xcal \times \Ycal \to [0,1]$ (where $\Xcal$ is a set and $\Ycal \subseteq \R$) associated with a downstream task characterized by an unknown ground truth function $f_*: \Xcal \to \R$. Every sample $(\xb, y) \sim \Dcal(f_*)$ satisfies $y = f_*(\xb) + z$ where $z \sim \Ncal(0, \sigma^2)$ is an independent Gaussian label noise. Let $\Dcal: \Xcal \to [0,1]$ be the marginal distribution over $\Xcal$. We assume that $f_*$ is bounded, \ie, $\abbr{f_*(\xb)} \le 1$ for $\xb \sim \Dcal$ almost surely (under normalization without loss of generality), and can be expressed as a linear function over an unknown ground truth feature $\phi_*: \Xcal \to \R^d$ such that $f_*(\cdot) = \phi_*(\cdot)^\top \thetab_*$ for some fixed $\thetab_* \in \R^d$.

\subsection{Measures for model capacity}
Model capacity is a key notion in W2S that distinguishes the weak and strong models. Intuitively, a stronger model is capable of representing a downstream task $\Dcal(f_*)$ more accurately and efficiently. We formalize such ``accuracy'' and ``complexity'' through the notions of intrinsic dimensions and FT approximation errors, as introduced below.

Consider two pretrained models, a weak model $\phi_w$ and a strong model $\phi_s$, that output features $\Xcal \to \R^d$:
\begin{assumption}[Sub-gaussian features]\label{asm:features}
    For $\xb\sim \Dcal$, we assume $\phi_w(\xb)$, $\phi_s(\xb)$, and $\phi_*(\xb)$ are zero-mean sub-gaussian random vectors with $\E[\phi_w(\xb)] = \E[\phi_s(\xb)] = \E[\phi_*(\xb)] = \b{0}_d$, and there exist symmetric positive semidefinite matrices $\Sigmab_w, \Sigmab_s, \Sigmab_* \in \R^{d \times d}$ such that for any $a,b \in \cbr{w,s,*}$,
    \begin{align*}
        &\E[\phi_a(\xb) \phi_a(\xb)^\top] = \Sigmab_a, \qquad
        \E[\phi_a(\xb) \phi_b(\xb)^\top] = \E[\phi_b(\xb) \phi_a(\xb)^\top]^\top = \Sigmab_a^{1/2} \Sigmab_b^{1/2}.
    \end{align*}
\end{assumption}

Approximation errors measure the model capacity from the ``accuracy'' perspective: how accurately can the downstream task $\Dcal(f_*)$ be represented by the pretrained features of $\phi_s$ and $\phi_w$ over the population.
\begin{definition}[FT approximation error]\label{def:ft_est_err}
   Given $\Dcal(f_*)$, let the FT approximation errors of $\phi_s$ and $\phi_w$ be
    \begin{align*}
        &\rho_s = \min_{\thetab \in \R^d} \E_{\xb \sim \Dcal}\sbr{(\phi_s(\xb)^\top \thetab - f_*(\xb))^2}, \\
        &\rho_w = \min_{\thetab \in \R^d} \E_{\xb \sim \Dcal}\sbr{(\phi_w(\xb)^\top \thetab - f_*(\xb))^2},
    \end{align*}
    such that $\rho_s, \rho_w \in [0,1]$ (given $\Pr_{\xb \sim \Dcal}[|f_*(\xb)| \le 1]=1$ by assumption). 
    We assume both $\rho_s$ and $\rho_w$ are small compared to label noise: $\rho_s + \rho_w \ll \sigma^2$; while the stronger model $\phi_s$ has a lower FT approximation error: $\rho_s < \rho_w$.
\end{definition}
Notice that FT approximation error is different from approximation error of the full model. Precisely, \Cref{def:ft_est_err} quantifies the approximation error of \emph{finetuning the pretrained model}, whose dynamics~\citep{wei2022more,malladi2023kernel} fall in the kernel regime~\citep{jacot2018neural}. 
Since feature learning is limited in the kernel regime~\citep{woodworth2020kernel}, a low FT approximation error requires the pretrained features $\phi_s$ and $\phi_w$ to provide an expressive set of features for the downstream task $\Dcal(f_*)$.

In addition to ``accuracy'', a strong model ought to be able to represent a downstream task concisely. We quantify such ``complexity'' through intrinsic dimension -- the minimum dimension of a feature subspace that can represent the downstream task $\Dcal(f_*)$ accurately. In light of the ubiquitous observations on low intrinsic dimensions of FT~\citep{aghajanyan2020intrinsic}, we introduce a common assumption for FT~\citep{xia2024less,dong2024sketchy} that the pretrained features of $\phi_s, \phi_w$ are concentrated in low-dimensional subspaces, as formalized below.
\begin{definition}[Intrinsic dimensions]\label{def:low_intrinsic_dim}
    Let $d_s = \rank(\Sigmab_s)$ and $d_w = \rank(\Sigmab_w)$ be the \b{intrinsic dimensions} of $\phi_s$ and $\phi_w$. Assume low intrinsic dimensions\footnote{\label{fn:ridge_regression}
        In practice, $\Sigmab_s$ and $\Sigmab_w$ usually admit fast-decaying eigenvalues, but not exactly low-rank. In this more realistic case, ridge regression with suitable choices of regularization hyperparameters intuitively performs ``soft'' truncation of the small singular values, effectively leading to low intrinsic dimensions $d_s, d_w \ll d$. 
        For conciseness of the main message, we focus on the ideal case of exactly low-rank $\Sigmab_s$ and $\Sigmab_w$ in the main text, while deferring the ridge regression analysis for general $\Sigmab_s$ and $\Sigmab_w$ to \Cref{apx:ridge_regression}.
    }: $d_s, d_w \ll d$.
\end{definition}
Moreover, \citet{aghajanyan2020intrinsic} observed that the stronger pretrained models tend to have lower intrinsic dimensions, \ie we often have $d_s < d_w$ in practice.

Beyond the absolute notion of model capacity in terms of intrinsic dimensions and FT approximation errors, we introduce a relative measure for the similarity between weak and strong models -- the correlation dimension characterizing the overlap between feature subspaces of $\phi_s$ and $\phi_w$.
\begin{definition}[Correlation dimension]\label{def:correlation_dim}
    Consider spectral decompositions $\Sigmab_s = \Vb_s \Lambdab_s \Vb_s^\top$ and $\Sigmab_w = \Vb_w \Lambdab_w \Vb_w^\top$, where $\Lambdab_s \in \R^{d_s \times d_s}$ and $\Lambdab_w \in \R^{d_w \times d_w}$ are diagonal matrices with positive eigenvalues in decreasing order; while $\Vb_s \in \R^{d \times d_s}$ and $\Vb_w \in \R^{d \times d_w}$ consist of the corresponding orthonormal eigenvectors.
    Let $d_{s \wedge w} = \nbr{\Vb_s^\top \Vb_w}_F^2$ be the \b{correlation dimension} between $\phi_s$ and $\phi_w$ such that $0 \le d_{s \wedge w} \le \min\cbr{d_s, d_w}$.
\end{definition}

\begin{remark}[Extension to general FT]\label{rmk:lp_to_general_ft}
    While we focus on learning $\Dcal(f_*)$ via linear probing over $\phi_w$ and $\phi_s$, since the finetuning dynamics fall approximately in the kernel regime~\citep{wei2022more,malladi2023kernel}, the linear probing analysis naturally extends to general FT. 
    Precisely, let $f_w(\cdot | \b0_d): \Xcal \to \R$ and $f_s(\cdot | \b0_d): \Xcal \to \R$ be the pretrained weak and strong models, where $d$ is the number of finetunable parameters. By denoting $\phi_w(\xb) = \nabla_{\thetab} f_w(\xb | \b0_d)$ and $\phi_s(\xb) = \nabla_{\thetab} f_s(\xb | \b0_d)$, the general FT process effectively reduces to linear probing over $\phi_w$ and $\phi_s$.
\end{remark}

\subsection{W2S and supervised finetuning}\label{sec:ft_algorithms}
With the task $\Dcal(f_*)$ and models $\phi_s, \phi_w$ specified, we are ready to formalize the data and algorithms for FT.

We consider two sample sets drawn $\iid$ from $\Dcal(f_*)$: a small labeled set $\wt\Scal = \csep{(\wt\xb_i,\wt{y}_i)}{i \in [n]} \sim \Dcal(f_*)^n$ and a large sample set $\Scal = \csep{(\xb_i, y_i)}{i \in [N]} \sim \Dcal(f_*)^N$ where the labels $y_i$ are inaccessible, denoting the unlabeled part as $\Scal_x = \csep{\xb_i}{i \in [N]}$.
The goal is to learn a function $f: \Xcal \to \R$ using $\wt\Scal$ and $\Scal_x$ that generalizes well to $\Dcal(f_*)$.

For $\wt\Scal$, let $\wt\Phib_w = [\phi_w(\wt\xb_1),...,\phi_w(\wt\xb_n)]^\top$, $\wt\Phib_s = [\phi_s(\wt\xb_1),...,\phi_s(\wt\xb_n)]^\top \in \R^{n \times d}$ be the weak and strong features with associated labels $\wt\yb = [\wt{y}_1, \ldots, \wt{y}_n]^\top \in \R^n$. 
Analogously for $\Scal$, let $\Phib_w = [\phi_w(\xb_1), \ldots, \phi_w(\xb_N)]^\top$, $\Phib_s = [\phi_s(\xb_1), \ldots, \phi_s(\xb_N)]^\top \in \R^{N \times d}$ be the weak and strong features with unknown labels $\yb = [y_1, \ldots, y_N]^\top \in \R^N$.
For conciseness of notations, we introduce a mild regularity assumption on the ranks of these feature matrices.
\begin{assumption}[Sufficient finetuning data]\label{asm:ft_data}
    Assume $\wt\Scal$ and $\Scal$ are sufficiently large such that $\rank(\wt\Phib_w) = \rank(\Phib_w) = d_w$ and $\rank(\wt\Phib_s) = \rank(\Phib_s) = d_s$ almost surely\footnote{
        Assuming the distributions of $\Vb_w^\top \phi_w(\xb)$ and $\Vb_s^\top \phi_s(\xb)$ are absolutely continuous with respect to Lebesgue measure on $\R^{d_w}$ and $\R^{d_s}$, respectively, for any $\min\cbr{n, N} > \max\cbr{d_w,d_s}$, $\rank(\wt\Phib_w)=\rank(\Phib_w)=d_w$ and $\rank(\wt\Phib_s)=\rank(\Phib_s)=d_s$ almost surely~\citep[\S 3.3.1]{vershynin2018high}.
    }.
\end{assumption}

Given regularization hyperparameters $\alpha_w, \alpha_\wts, \alpha_s, \alpha_c > 0$, we consider the following FT algorithms: 
\begin{enumerate}[label=(\alph*)]
    \item \b{Weak teacher model} $f_w(\xb) = \phi_w(\xb)^\top \thetab_w$ is supervisedly finetuned over $\wt\Scal$: 
    \begin{align}\label{eq:sft_weak}
        \thetab_w = \argmin_{\thetab \in \R^d} \frac{1}{n}\nbr{\wt\Phib_w \thetab - \wt\yb}_2^2 + \alpha_w \nbr{\thetab}_2^2.
    \end{align}
    \item \b{W2S model} $f_\wts(\xb) = \phi_s(\xb)^\top \thetab_\wts$ is finetuned over the strong feature $\phi_s$ through $\Scal_x$ and their pseudo-labels generated by the weak teacher model:
    \begin{align}\label{eq:w2s_ft}
        \thetab_\wts = \argmin_{\thetab \in \R^d} \frac{1}{N}\nbr{\Phib_s \thetab - \Phib_w \thetab_w}_2^2 + \alpha_\wts \nbr{\thetab}_2^2
    \end{align}
    \item \b{Strong SFT model} $f_s(\xb) = \phi_s(\xb)^\top \thetab_s$ is a strong baseline where the strong feature $\phi_s$ is supervisedly finetuned over the small labeled set $\wt\Scal$ directly:
    \begin{align}\label{eq:sft_strong}
        \thetab_s = \argmin_{\thetab \in \R^d} \frac{1}{n}\nbr{\wt\Phib_s \thetab - \wt\yb}_2^2 + \alpha_s \nbr{\thetab}_2^2.
    \end{align}

    \item \b{Strong ceiling model} $f_c(\xb) = \phi_s(\xb)^\top \thetab_c$ is a reference for the ceiling performance where $\phi_s$ is supervisedly finetuned over $\Scal \cup \wt\Scal$, assuming access to the unknown labels $\yb = [y_1, \ldots, y_N]^\top$:
    \begin{align}\label{eq:sft_ceiling}
        \thetab_c = \argmin_{\thetab \in \R^d} \frac{1}{n+N}\nbr{\bmat{\wt\Phib_s \\ \Phib_s} \thetab - \bmat{\wt\yb \\ \yb}}_2^2 + \alpha_c \nbr{\thetab}_2^2.
    \end{align}
\end{enumerate}

For any $f$ with randomness from its training samples $\Scal_f \sim \Dcal(f_*)^{|\Scal_f|}$, let $\Xb_f$ be the unlabeled part of $\Scal_f$, and let $\Scal_t \sim \Dcal(f_*)^{|\Scal_t|}$ be a test set.
We measure the generalization error via the expected excess risk of $f$ over $\Scal_f$ and $\Scal_t$:
\begin{align*}
    \exrisk(f) = \E_{\Scal_f, \Scal_t}\sbr{\frac{1}{\abbr{\Scal_t}}\sum_{\xb \in \Scal_t} (f(\xb) - f_*(\xb))^2}.
\end{align*}
Notice that $\exrisk(f) = \vari(f) + \bias(f)$ can be decomposed into variance and bias, where
\begin{align*}
    &\vari(f) = \E_{\Scal_f, \Scal_t}\sbr{\frac{1}{\abbr{\Scal_t}}\sum_{\xb \in \Scal_t} (f(\xb) - \E_{\Scal_f \mid \Xb_f}\sbr{f(\xb)})^2} \\
    &\bias(f) = \E_{\Xb_f, \Scal_t}\sbr{\frac{1}{\abbr{\Scal_t}}\sum_{\xb \in \Scal_t} (\E_{\Scal_f \mid \Xb_f}\sbr{f(\xb)} - f_*(\xb))^2}
\end{align*}
For clarity of the main message, we set $\Scal_t = \Scal$ for $f_w$ in \eqref{eq:sft_weak}, $f_\wts$ in \eqref{eq:w2s_ft}, $f_s$ in \eqref{eq:sft_strong} for fair comparison, and $\Scal_t = \wt\Scal \cup \Scal$ for $f_c$ in \eqref{eq:sft_ceiling} for simplicity\footnote{
    The strong ceiling performance $\exrisk(f_c)$ only serves as a reference in \eqref{eq:pgr}, irrelevant of the rest of the analysis.
}.

\begin{remark}[Regularization prevents W2S from overfitting]\label{rmk:regularization}
    As pointed out in \cite{burns2023weak}, suitable regularization is crucial to prevent W2S from overfitting the weak teacher.
    For overparametrized problems\footnote{
        While the feature dimension $d$ can be either larger (overparametrized) or smaller (underparametrized) than the sample sizes $n, N, n+N$, with the low intrinsic dimensions $d_s, d_w \ll d$, \cref{eq:sft_weak,eq:w2s_ft,eq:sft_strong,eq:sft_ceiling} are always underdetermined.
    }, even without explicit regularization, gradient descent implicitly biases toward the minimum $\ell_2$-norm solutions in the kernel regime~\citep{woodworth2020kernel}, equivalent to solving \cref{eq:sft_weak,eq:w2s_ft,eq:sft_strong,eq:sft_ceiling} with $\alpha_w, \alpha_\wts, \alpha_s, \alpha_c \to 0$. Therefore, we focus on ridgeless regression here under the idealized intrinsic dimension assumption in \Cref{def:low_intrinsic_dim}. 
    In \Cref{apx:ridge_regression}, we extend our analysis to the more general scenario: when $\Sigmab_s, \Sigmab_w$ are not exactly low-rank, a careful choice of $\alpha_w, \alpha_\wts > 0$ brings a W2S generalization bound, \Cref{thm:w2s_ridge}, that conveys the same message as \Cref{thm:w2s_ft} in the ridgeless case.
\end{remark}

\subsection{Metrics for W2S performance}\label{sec:w2s_metrics}
In addition to the absolute generalization error of W2S, $\exrisk(f_\wts)$, we quantify the W2S performance of $f_\wts$ relative to $f_w$, $f_s$, and $f_c$ through the following metrics:
\begin{enumerate}[label=(\alph*)]
    \item \b{Performance gap recovery (PGR)} introduced in \cite{burns2023weak} measures the ratio between excess risk reductions from the weak teacher $f_w$ of the W2S model $f_\wts$ and the strong ceiling model $f_c$:
    \begin{align}\label{eq:pgr}
        \pgr = \frac{\exrisk(f_w) - \exrisk(f_\wts)}{\exrisk(f_w) - \exrisk(f_c)}.
    \end{align}
    In practice, $\exrisk(f_\wts)$ typically falls between $\exrisk(f_c)$ and $\exrisk(f_w)$~\citep{burns2023weak}. Therefore, it usually holds that $0 \le \pgr \le 1$. A higher $\pgr$ indicates better W2S generalization: the W2S model $f_\wts$ can recover more of the excess risk gap between the weak teacher $f_w$ and the strong ceiling model $f_c$.
    \item \b{Outperforming ratio (OPR)} compares excess risks of the strong baseline $f_s$ and the W2S model $f_\wts$:
    \begin{align}\label{eq:w2s_gain}
        \opr = \exrisk(f_s) / \exrisk(f_\wts).
    \end{align}
    A higher $\opr$ implies better W2S generalization: $f_\wts$ outperforms $f_s$ when $\opr > 1$. 
    This metric could be of interest in practice when the labeled samples $\wt\Scal$ are limited -- if $\opr < 1$, SFT the strong model over $\wt\Scal$ would be a better choice than W2S both in terms of generalization and computational efficiency.
\end{enumerate}

\section{Main results}\label{sec:single_task_ft}
In this section, we first analyze the generalization errors of W2S and its reference models in \Cref{sec:generalization_errors}. Then in \Cref{sec:w2s_performance}, we conduct a case study on the W2S performance in terms of the metrics introduced in \Cref{sec:w2s_metrics}.

\subsection{Generalization errors}\label{sec:generalization_errors}
We start with the W2S model $f_\wts(\xb) = \phi_s(\xb)^\top \thetab_\wts$ finetuned as in \eqref{eq:sft_weak}, \eqref{eq:w2s_ft} with both $\alpha_w, \alpha_\wts \to 0$. For demonstration purposes, we consider an idealized Gaussian feature case in the main text, where the variance of $f_\wts$ can be exactly characterized (instead of upper bounded)\footnote{\label{fn:gaussian_features}
    The analogous generalization bound holds up to constants for sub-gaussian features in \Cref{asm:features}, see \Cref{thm:w2s_ft_formal}.
}. 
\begin{theorem}[W2S model (formally in \ref{apx:pf_w2s_ft})]\label{thm:w2s_ft}
    Under \Cref{asm:features,asm:ft_data}, when $\phi_w(\xb)$, $\phi_s(\xb)$, and $\phi_*(\xb)$ are jointly Gaussian, for $n > d_w + 1$, $\exrisk(f_\wts) = \vari(f_\wts) + \bias(f_\wts)$ satisfies
    \begin{align*}
        &\vari(f_\wts) = \frac{\sigma^2}{n-d_w-1} \rbr{d_{s \wedge w} + \frac{d_s}{N} (d_w-d_{s \wedge w})}, \\
        &\bias(f_\wts) \le \bias(f_w) + \rho_s \le \rho_w \rbr{1 + \frac{d_w}{n-d_w-1}} + \rho_s.
    \end{align*}
\end{theorem}

\begin{remark}[Discrepancy is virtue]\label{rmk:variance_decomposition}
    Notice that $\vari(f_\wts)$ consists of two terms.
    (a) In the overlapped subspace $\range(\Sigmab_s) \cap \range(\Sigmab_w)$ with correlation dimension $d_{s \wedge w}$, the variance $\sigma^2 d_{s \wedge w} / (n-d_w-1)$ mimics that of the weak teacher, where more pseudo-labels $N$ fail to reduce the variance.
    (b) Whereas variance in the subspace of discrepancy $\range(\Sigmab_w) \setminus \range(\Sigmab_s)$ takes the form $\sigma^2 (d_s / N)(d_w - d_{s \wedge w}) / (n-d_w-1)$, reduced by a factor of $d_s/N$ and vanishing as $N$ grows.
\end{remark}

As a reference, we also look into the weak teacher model $f_w(\xb) = \phi_w(\xb)^\top \thetab_w$ in \eqref{eq:sft_weak} with $\alpha_w \to 0$:
\begin{proposition}[Weak teacher (\ref{apx:pf_sft_weak})]\label{pro:sft_weak}
    Under \Cref{asm:features,asm:ft_data}, when $\phi_w(\xb)$ and $\phi_*(\xb)$ are jointly Gaussian, for $n > d_w + 1$, $\exrisk(f_w) = \vari(f_w) + \bias(f_w)$ satisfies
    \begin{align*}
        \vari(f_w) = \frac{\sigma^2 d_w}{n - d_w - 1}, \quad 
        \bias(f_w) = \rho_w \rbr{1 + \frac{d_w}{n - d_w - 1}},
    \end{align*}
    while under the small-ball condition in \Cref{thm:w2s_ft_formal}, $\vari(f_w) \lesssim \frac{\sigma^2 d_w}{n}$ and $\bias(f_w) \lesssim_{d_w,n} \rho_w$.
\end{proposition}

To measure the W2S performance in a relative sense, another two necessary references are the strong SFT baseline $f_s(\xb) = \phi_s(\xb)^\top \thetab_s$ in \eqref{eq:sft_strong} and strong ceiling model $f_c(\xb) = \phi_s(\xb)^\top \thetab_c$ in \eqref{eq:sft_ceiling}, with both $\alpha_s, \alpha_c \to 0$:
\begin{corollary}[Strong SFT and ceiling]\label{cor:sft_strong}
    Under \Cref{asm:features,asm:ft_data}, when $\phi_s(\xb)$ and $\phi_*(\xb)$ are jointly Gaussian, further assuming $n > d_s + 1$, $\exrisk(f_s) = \vari(f_s) + \bias(f_s)$ satisfies
    \begin{align*}
    \begin{split}
        & \vari(f_s) = \frac{\sigma^2 d_s}{n - d_s - 1}, \quad 
        \bias(f_s) = \rho_s \rbr{1 + \frac{d_s}{n - d_s - 1}},
    \end{split}
    \end{align*}
    while under the small-ball condition in \Cref{thm:w2s_ft_formal}, $\vari(f_s) \lesssim \frac{\sigma^2 d_s}{n}$ and $\bias(f_s) \lesssim_{d_s,n} \rho_s$.
    Meanwhile, for the strong ceiling model $f_c$, $\exrisk(f_c) = \vari(f_c) + \bias(f_c)$ satisfies
    \begin{align*}
    \begin{split}
        &\vari(f_c) = \frac{\sigma^2 d_s}{N+n}, \quad
        \bias(f_c) \le \rho_s.
    \end{split}
    \end{align*}
\end{corollary}

\paragraph{W2S in variance.}
Assuming $\rho_s + \rho_w \ll \sigma^2$ (\Cref{def:ft_est_err}), variance dominates the generalization error. 
\Cref{thm:w2s_ft} and \Cref{pro:sft_weak} suggest that W2S generalization generally occurs in variance, \ie $\vari(f_\wts) < \vari(f_w)$, as long as the W2S FT sample size is reasonably large, $N > d_s$.
Meanwhile, with a low correlation dimension $d_{s \wedge w}$, W2S in variance is more pronounced, especially when $N$ is much larger than $d_s$.

\paragraph{W2S in bias.}
When the strong student has zero FT approximation error, $\rho_s = 0$, as long as $\bias(f_w) > 0$, and the strong student has a lower intrinsic dimension, $d_s < d_w$, \Cref{thm:w2s_ft} and \Cref{pro:sft_weak} further suggest that W2S also enjoys a strictly lower bias than the weak teacher: $\bias(f_\wts) < \bias(f_w)$\footnote{\label{fn:bias_strong}
    Quantifying the advantage of W2S in bias requires further assumptions on the downstream task $\Dcal(f_*)$ and the covariance matrices $\Sigmab_w, \Sigmab_s$, analogous to the settings in \cite{ildiz2024high,wu2024provable}, which is deviating from our focus on variance but could be an interesting future direction.
}.

\subsection{W2S performance: a case study}\label{sec:w2s_performance}
With the generalization analysis, we are ready to take a closer look at the W2S performance in terms of $\pgr$ and $\opr$ defined in \Cref{sec:w2s_metrics}. 

\begin{proposition}[$\pgr$ and $\opr$ lower bounds (\ref{apx:pf_pgr})]\label{cor:pgr}
    Given $f_w, f_\wts, f_c$, and $f_s$ as in \Cref{thm:w2s_ft}, \Cref{pro:sft_weak}, and \Cref{cor:sft_strong}, and $n = d_w + q + 1$ for some constant $q \in \N$ with $q \ge d_s - d_w + 1$, when $\phi_w(\xb)$, $\phi_s(\xb)$, and $\phi_*(\xb)$ are jointly Gaussian, we have
    \begin{align*}
        \pgr \ge 1 - \frac{d_{s \wedge w}}{d_w} - \frac{d_s}{N} \frac{d_w - d_{s \wedge w}}{d_w} - \frac{q}{d_w} \frac{\rho_s}{\sigma^2} - \frac{q + d_w}{d_w} \frac{\rho_w}{\sigma^2},
    \end{align*}
    and
    \begin{align*}
        \opr \ge \rbr{\frac{n}{q}\ \frac{d_{s \wedge w} + (d_w - d_{s \wedge w}) {d_s}/{N}}{d_s} + \frac{n}{d_s}\ \rbr{\rbr{1 + \frac{d_w}{q}} \frac{\rho_w}{\sigma^2} + \frac{\rho_s}{\sigma^2}}}^{-1}.
    \end{align*}
\end{proposition}

We recall from \Cref{sec:w2s_metrics} that the larger $\pgr$ and $\opr$ imply better W2S generalization. Then, a natural question hinted by \Cref{cor:pgr} is \emph{how do the sample sizes $n, N$ affect the W2S performance?} The concrete answers to this question depend on the relative magnitude of the FT approximation errors and label noise, $(\rho_w + \rho_s)/\sigma^2$. 

\paragraph{Case I: negligible FT approximation error.}
In the ideal case where the FT approximation errors are negligible compared to label noise, $(\rho_w + \rho_s)/\sigma^2 \to 0$, \Cref{cor:pgr} suggests better lower bounds for $\pgr$, $\opr$ as $n,N$ increase:
\begin{align*}
    \pgr &\ge 1 - {\frac{d_{s \wedge w} + (d_w - d_{s \wedge w}) d_s / N}{d_w}}, \\
    \opr &\ge \frac{n - d_w - 1}{n}\ \frac{d_s}{d_{s \wedge w} + (d_w - d_{s \wedge w}) {d_s}/{N}}.
\end{align*}
Depending on $d_{s \wedge w}$, we have the following cases:
\begin{enumerate}[label=(\alph*)]
    \item When $d_{s \wedge w} > 0$, with sample sizes $n \gtrsim d_w$ and $N \gtrsim (d_w / d_{s \wedge w} - 1) d_s$, $\pgr \ge 1 - O(d_{s \wedge w}/d_w)$ and $\opr \ge \Omega(d_s / d_{s \wedge w})$ imply good W2S performance if $d_{s \wedge w} \ll \min\cbr{d_s, d_w}$.
    \item When $d_{s \wedge w} = 0$, a labeled sample size of $n \gtrsim d_w$ leads to $\pgr \ge 1 - O(d_s/N)$ and $\opr \ge \Omega(N/d_w)$, implying good W2S performance when $N \gg \max\cbr{d_w, d_s}$.
\end{enumerate}

\paragraph{Case II: small non-negligible FT approximation error.}
In a more realistic scenario where $0 < (\rho_s + \rho_w)/\sigma^2 \ll 1$ is small but non-negligible, the lower bound of $\pgr$ decreases with $q = n - d_w - 1$, while the lower bound of $\opr$ remains non-monotonic due to the trade-off between variance reduction and weak-teacher bias amplification:
\begin{corollary}[Scaling \wrt $n$ (\ref{apx:pf_non_monotonic_scaling})]\label{cor:non_monotonic_scaling}
    For conciseness, denote $d_\wts(N) = d_{s \wedge w} + (d_w - d_{s \wedge w}) {d_s}/{N}$. Under the same setting as \Cref{cor:pgr}, with $n = d_w + q + 1$, the $\pgr$ lower bound is maximized at the smallest admissible $q_{\min} = \max\cbr{1, d_s - d_w + 1}$, where
    \begin{align*}
        \pgr \ge 1 - \frac{d_\wts(N)}{d_w} - \frac{q_{\min}}{d_w} \frac{\rho_s}{\sigma^2} - \frac{q_{\min} + d_w}{d_w} \frac{\rho_w}{\sigma^2}.
    \end{align*}
    Viewing $q$ as a positive real, the $\opr$ lower bound is maximized at $q = \sqrt{\frac{(d_w + 1)\rbr{d_\wts(N) + d_w \rho_w / \sigma^2}}{(\rho_w + \rho_s)/\sigma^2}}$, where
    \begin{align*}
        \opr &\ge d_s \rbr{\sqrt{d_\wts(N) + \frac{d_w \rho_w}{\sigma^2}} + \sqrt{\frac{(d_w + 1) (\rho_w + \rho_s)}{\sigma^2}}}^{-2}.
    \end{align*}
\end{corollary}
Such non-monotonic scaling for the $\opr$ lower bound with respect to $n$ coincides with some empirical observations in \cite{burns2023weak} on NLP tasks.
While the variance of $f_\wts$ in \Cref{thm:w2s_ft} decreases monotonically as $n$ grows, so do those of the reference models $f_w$, $f_s$, and $f_c$. With non-negligible FT approximation errors, as $n$ increases, the $\opr$ lower bound decrease with the improvements in bias but increase with the improvements in variance.
Therefore, the optimal $n$ for the $\opr$ lower bound is determined by the trade-off between variance and bias.

Again, consider two cases depending on $d_{s \wedge w}$:
\begin{enumerate}[label=(\alph*)]
    \item If $d_{s \wedge w} > 0$, we have $d_\wts(N) \lesssim d_{s \wedge w}$ when $N \gtrsim (d_w / d_{s \wedge w} - 1) d_s$, implying large $\pgr$ and $\opr$ when $d_{s \wedge w} \ll \min\cbr{d_s, d_w}$ and $(\rho_w + \rho_s)/\sigma^2 \ll 1$.
    \item If $d_{s \wedge w} = 0$, we have $d_\wts(N) = d_w d_s / N$, implying large $\pgr$ and $\opr$ when $N \gg \max\cbr{d_w, d_s}$ and $(\rho_w + \rho_s)/\sigma^2 \ll 1$.
\end{enumerate}


\section{Experiments}\label{sec:experiments}
We conduct experiments to validate the theoretical findings on both synthetic and real tasks. In this section, we focus on two illustrative settings: synthetic regression (\Cref{sec:exp_synthetic}) and real-world image regression (\Cref{sec:exp_img_reg}). For brevity, we defer more experiments on image and sentiment classification tasks to \Cref{apx:exp_img_cls,apx:exp_nlp_cls}, respectively.

\subsection{Synthetic regression}\label{sec:exp_synthetic}
We start by grounding the theoretical framework introduced in \Cref{sec:ridgeless_regression} with synthetic regression tasks. 

\paragraph{Setup.}
We concretize the downstream task $\Dcal(f_*)$ as a regression problem over Gaussian features. 
Let $f_*: \R^d \to \R$ be a linear function in a high-dimensional feature space $d=20,000$ of form $f_*(\xb) = \xb^\top \Lambda_*^{1/2} \thetab_*$ where $\Lambda_* = \diag(\lambda^*_1, \cdots, \lambda^*_d)$ is a diagonal matrix with a low rank $d_* = 300$ such that $\lambda^*_i = i^{-1}$ for $i \le d_*$ and $\lambda^*_i = 0$ otherwise; and $\thetab_* \in \R^{d}$ is a random unit vector.
Every sample $(\xb, y) \sim \Dcal(f_*)$ is generated by $\xb \sim \Ncal(\b0_d, \Ib_d)$ and $y = f_*(\xb) + z$ with $z \sim \Ncal(0, \sigma^2)$.
Given $\xb$, the associated strong and weak features in \Cref{asm:features} are generated by $\phi_s(\xb) = \Sigmab_s^{1/2} \xb$ and $\phi_w(\xb) = \Sigmab_w^{1/2} \xb$, with intrinsic dimensions $d_s = 100$ and $d_w = 200$ such that $\Sigmab_s = \sum_{i=1}^{d_s} \lambda^*_i \eb_i \eb_i^\top$ and $\Sigmab_w = \sum_{i=d_s - d_{s \wedge w} + 1}^{d_w + d_s - d_{s \wedge w}} \lambda^*_i \eb_i \eb_i^\top$. 
For all synthetic experiments, we have $\rho_s + \rho_w < 0.0004$.

In the experiments, we vary $d_{s \wedge w}$ to control the student-teacher correlation and $\sigma^2$ to control the dominance of variance over bias (characterized by $\rho_s, \rho_w$). Each error bar reflects the standard deviation over $40$ runs. 

\begin{figure}[!ht]
    \centering
    \includegraphics[width=\columnwidth]{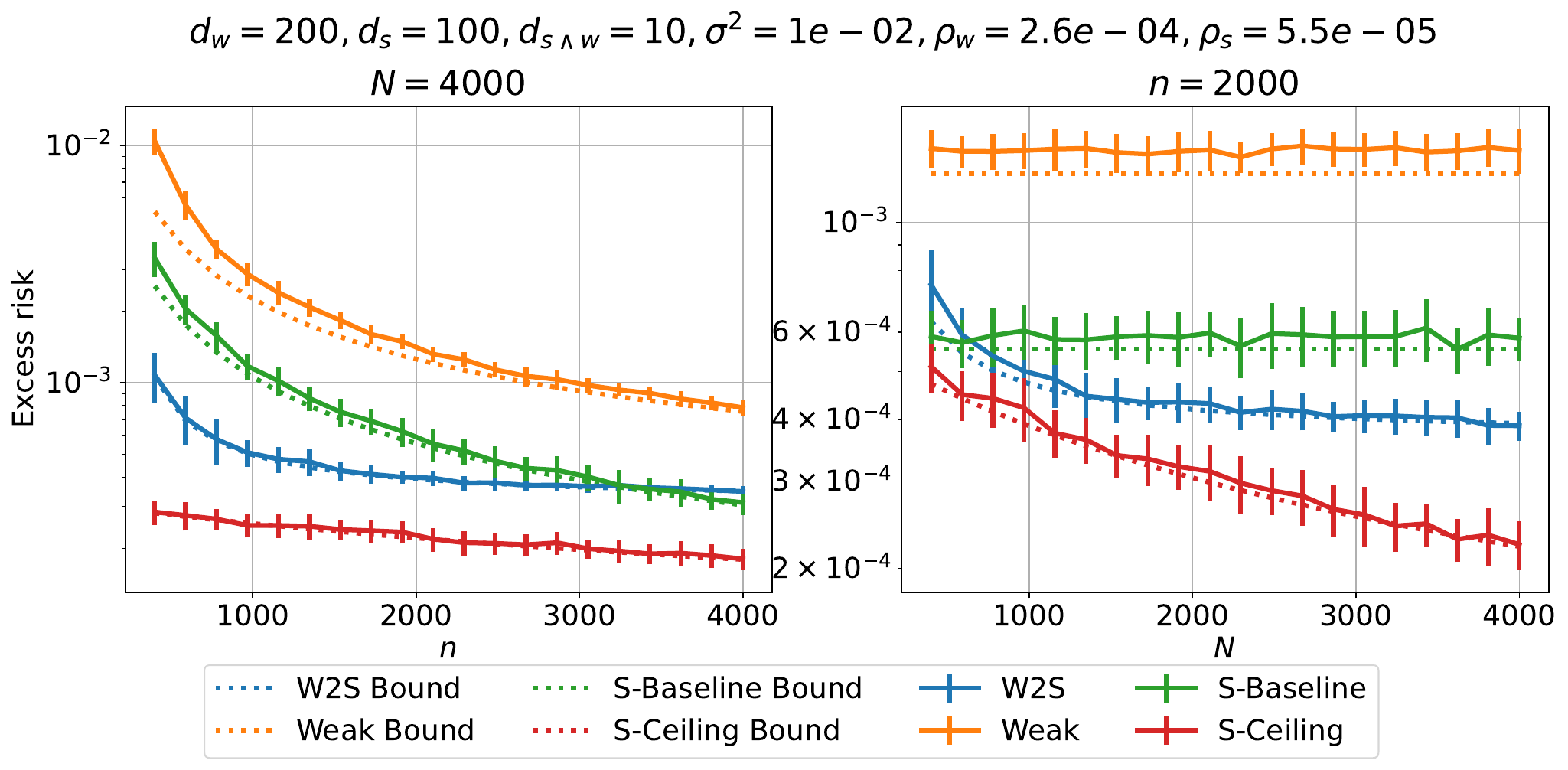}
    \caption{Scaling for excess risks on the synthetic regression task in a \emph{variance-dominated regime} with a \emph{low correlation dimension}.}\label{fig:exrisk_dsw10}
\end{figure}

\begin{figure}[!ht]
    \centering
    \includegraphics[width=\columnwidth]{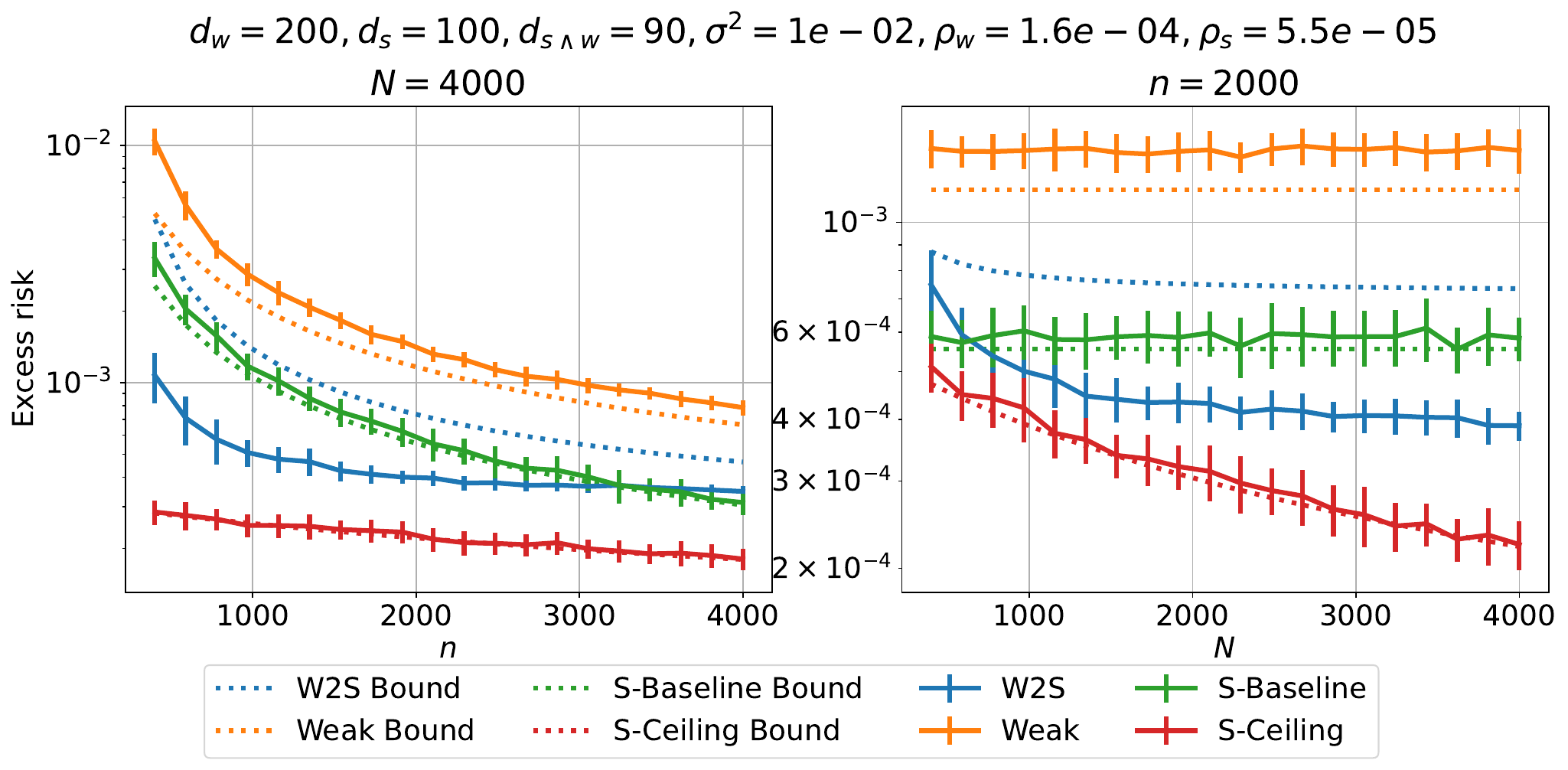}
    \caption{Scaling for excess risks on the synthetic regression task in a \emph{variance-dominated regime} with a \emph{high correlation dimension}.}\label{fig:exrisk_dsw90}
\end{figure}

\begin{figure}[!ht]
    \centering
    \includegraphics[width=\columnwidth]{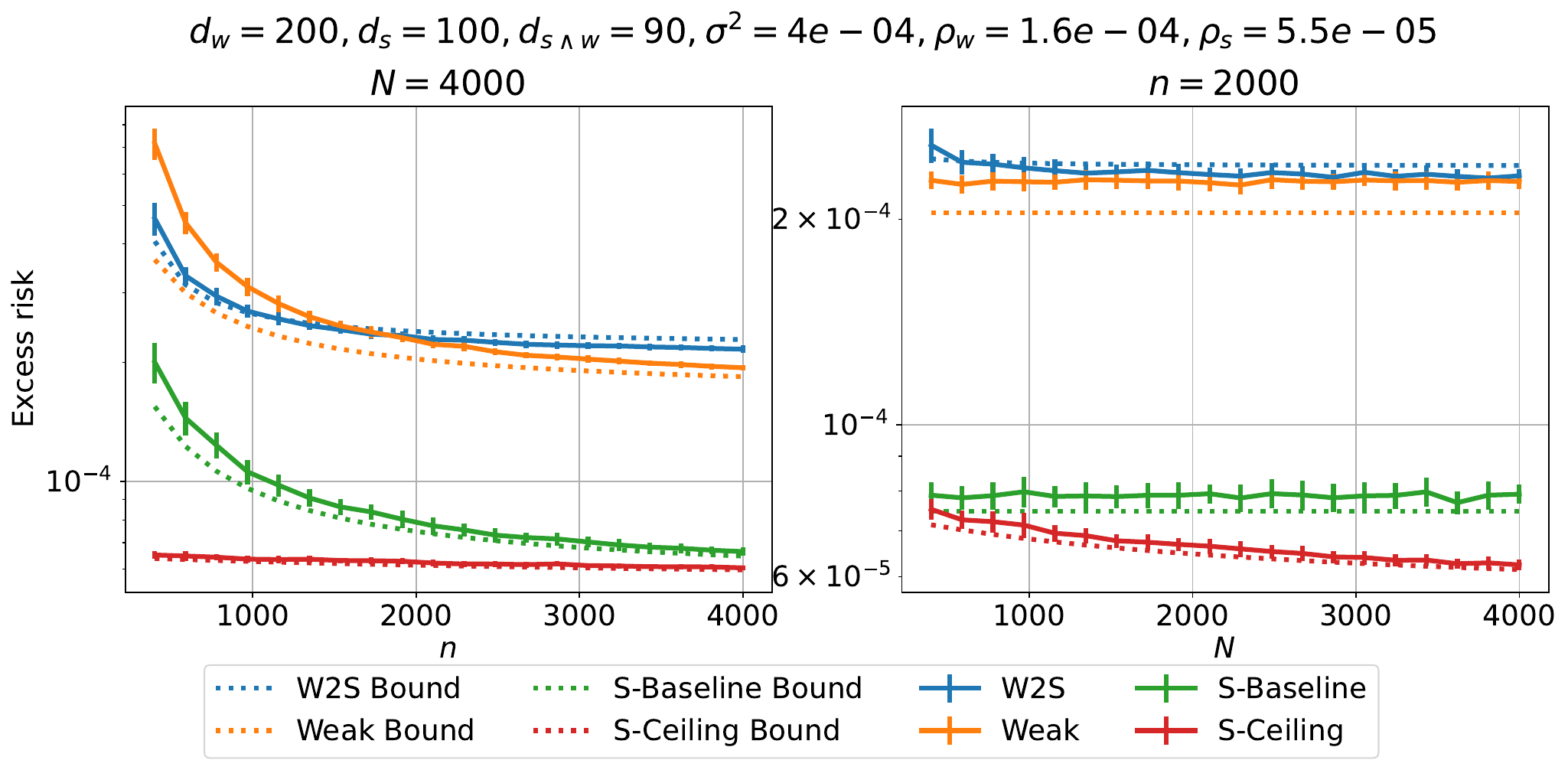}
    \caption{Scaling for excess risks on the synthetic regression task when \emph{the variance is not dominant}, $\sigma^2 \approx \rho_s + \rho_w$.}\label{fig:exrisk_dsw90_biased}
\end{figure}

\paragraph{Scaling for generalization errors.}
\Cref{fig:exrisk_dsw10,fig:exrisk_dsw90,fig:exrisk_dsw90_biased} show scaling for $\exrisk(f_\wts)$ (W2S), $\exrisk(f_w)$ (Weak), $\exrisk(f_s)$ (S-Baseline), and $\exrisk(f_c)$ (S-Ceiling) with respect to the sample sizes $n, N$. The dashes show theoretical predictions in \Cref{thm:w2s_ft,pro:sft_weak,cor:sft_strong}, consistent with the empirical measurements shown in the solid lines.
In particular, we consider three cases:
\begin{itemize}
    \item \Cref{fig:exrisk_dsw10}: When variance dominates ($\sigma^2 = 0.01 \gg \rho_w + \rho_s$), with a low correlation dimension $d_{s \wedge w} = 10$, $f_\wts$ outperforms both $f_w$ and $f_s$ for a moderate $n$ and a large enough $N$. However, larger sample sizes do not necessarily lead to better W2S generalization in a relative sense. For example, when $n$ keeps increasing, the strong baseline $f_s$ eventually outperforms $f_\wts$.
    \item \Cref{fig:exrisk_dsw90}: When variance dominates, with a high correlation dimension $d_{s \wedge w} = 90$, $f_\wts$ still generalizes better than $f_w$ but fails to outperform the strong baseline $f_s$. 
    \item \Cref{fig:exrisk_dsw90_biased}: When the variance is low (not dominant, \eg $\sigma^2 = 0.0004 \approx \rho_s + \rho_w$), $f_\wts$ can fail to outperform $f_w$. This suggests that variance reduction is a key advantage of W2S over supervised FT.
\end{itemize}

\begin{figure}[!ht]
    \centering
    \includegraphics[width=\columnwidth]{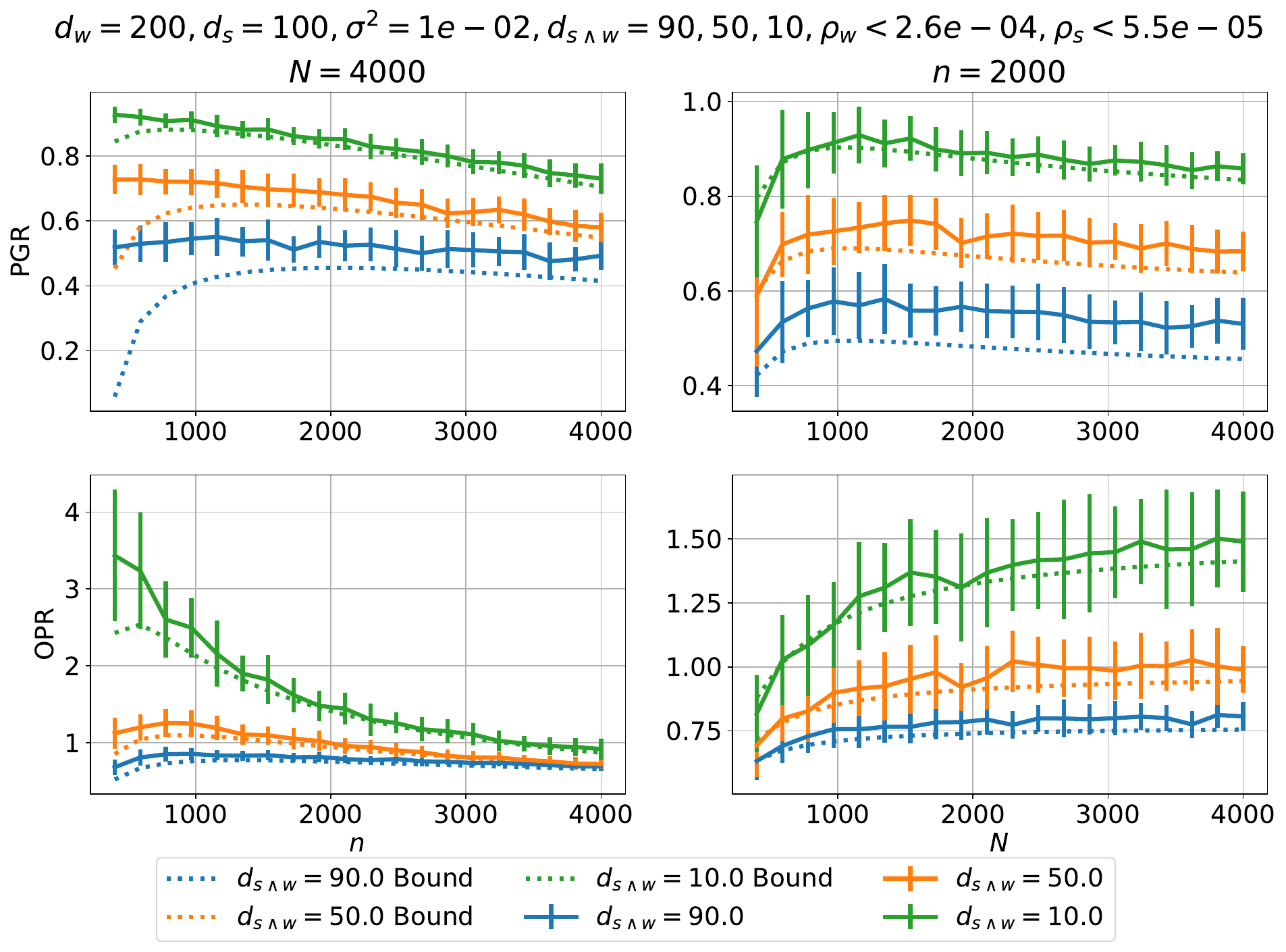}
    \caption{Scaling for $\pgr$ and $\opr$ under different $d_{s \wedge w}$ on the synthetic regression task in a \emph{variance-dominated regime}.}\label{fig:pgr_opr_vardom}
\end{figure}

\paragraph{Scaling for $\pgr$ and $\opr$.}
\Cref{fig:pgr_opr_vardom} show the scaling for $\pgr$ and $\opr$ with respect to sample sizes $n, N$ in the variance-dominated regime (with small non-negligible FT approximation errors), at three different correlation dimensions $d_{s \wedge w} = 90, 50, 10$. The solid and dashed lines represent the empirical measurements and lower bounds in \eqref{eq:pgr_lower_tight}, \eqref{eq:opr_lower_tight}, respectively.
\begin{itemize}
    \item Coinciding with the theoretical predictions in \Cref{cor:non_monotonic_scaling} and the performance gaps between W2S and the references in \Cref{fig:exrisk_dsw10}, we observe that the relative W2S performance in terms of $\pgr$ and $\opr$ can degenerate as $n$ increases, while the larger $N$ generally leads to better W2S generalization in the relative sense. 
    \item The lower correlation dimension $d_{s \wedge w}$ leads to higher $\pgr$ and $\opr$, \ie larger discrepancy between the strong and weak features improves W2S generalization.
\end{itemize}

\subsection{UTKFace regression}\label{sec:exp_img_reg}
Beyond the synthetic regression, we investigate W2S on a real-world image regression task -- age estimation on the UTKFace dataset~\citep{zhang2017age}. Each error bar in this section reflects standard deviation of $10$ runs. 

\paragraph{Dataset.} 
UTKFace (Aligned \& Cropped)~\citep{zhang2017age} consists of $23,708$ face images with age labels ranging from $0$ to $116$. We preprocess the images to $224 \times 224$ pixels and split the dataset into training and testing sets of sizes $20,000$ and $3,708$.
Generalization errors are estimated with the mean squared error (MSE) over the test set. 

\paragraph{Linear probing over pretrained features.}
We fix the strong student as CLIP ViT-B/32~\citep{radford2021learning} (\texttt{CLIP-B32}) and vary the weak teacher among the ResNet series~\citep{he2015deepresiduallearningimage} (\texttt{ResNet18}, \texttt{ResNet34}, \texttt{ResNet50}, \texttt{ResNet101}, \texttt{ResNet152}). We treat the backbones of these models (excluding the classification layers) as $\phi_s,\phi_w$ and finetune them via linear probing. We use ridge regression with a small fixed regularization hyperparameter $\alpha_w, \alpha_\wts, \alpha_s, \alpha_c = 10^{-6}$, close to the machine epsilon of single precision floating point numbers.

\paragraph{Intrinsic dimension.}
The intrinsic dimensions $d_w, d_s$ are measured based on the empirical covariance matrices $\Sigmab_w, \Sigmab_s$ of the weak and strong features over the entire dataset (including training and testing).
As mentioned in \Cref{fn:ridge_regression}, these covariances generally have fast-decaying eigenvalues (but not exactly low-rank) in practice, effectively leading to low intrinsic dimensions under ridge regression. We estimate such low intrinsic dimensions as the minimum rank for the best low-rank approximation of $\Sigmab_w, \Sigmab_s$ with a relative error in trace less than $\tau=0.01$.

\paragraph{Correlation dimension.}
The pretrained feature dimensions (or the finetunable parameter counts) of the weak and strong models can be different in practice (see \Cref{apx:exp_img_reg}, \Cref{tab:img_reg_dim}). 
We introduce an estimation for $d_{s \wedge w}$ in this case.
Consider the (truncated) spectral decompositions $\tsvd{\Sigmab_s}{d_s} = \Vb_s \Lambdab_s \Vb_s^\top$ and $\tsvd{\Sigmab_w}{d_w} = \Vb_w \Lambdab_w \Vb_w^\top$ of two empirical covariances with different feature dimensions $D_s, D_w$ such that $\Vb_s \in \R^{D_s \times d_s}$ and $\Vb_w \in \R^{D_w \times d_w}$ consists of the top $d_s, d_w$ orthonormal eigenvectors, respectively. We estimate the correlation dimension $d_{s \wedge w}$ under different feature dimensions $D_s \ne D_w$ by matching the dimensions through a random unitary matrix~\citep{vershynin2018high} $\Gammab \in \R^{D_s \times D_w}$: $d_{s \wedge w} = \|\Vb_s^\top \Gammab \Vb_w\|_F^2$. This provides a good estimation for $d_{s \wedge w}$ because with low intrinsic dimensions $\max\{d_s, d_w\} \ll D_s, D_w$ in practice, mild dimension reduction through $\Gammab$ well preserves the essential information in $\Vb_s, \Vb_w$.

\begin{figure}[!ht]
    \centering
    \includegraphics[width=\columnwidth]{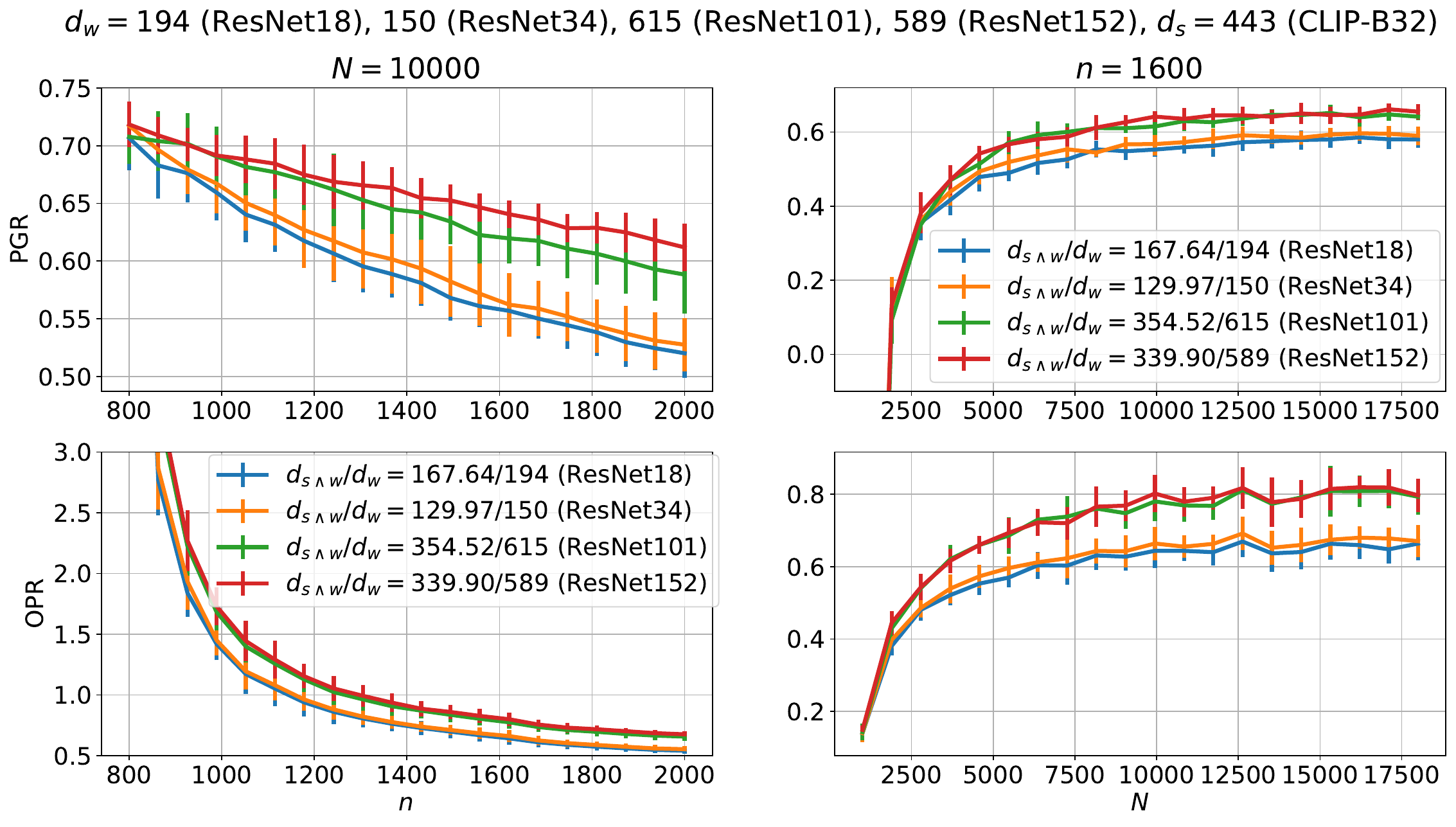}
    \caption{Scaling for $\pgr$ and $\opr$ of different weak teachers with a fixed strong student on UTKFace. The legends show the comparison between $d_{s \wedge w}$ and $d_w$.}\label{fig:pgr_opr_utkface_resnet-clip}
\end{figure}

\paragraph{Discrepancies lead to better W2S.}
\Cref{fig:pgr_opr_utkface_resnet-clip} shows the scaling of $\pgr$ and $\opr$ with respect to the sample sizes $n, N$ for different weak teachers in the ResNet series with respect to a fixed student, \texttt{CLIP-B32}. 
We first observe that the relative W2S performance in terms of $\pgr$ and $\opr$ is closely related to the correlation dimension $d_{s \wedge w}$ and the intrinsic dimensions $d_s, d_w$. 
\begin{itemize}
    \item When the strong student has a lower intrinsic dimension than the weak teacher (as widely observed in practice~\citep{aghajanyan2020intrinsic}), \ie $d_s < d_w$, the relative W2S performance tends to be better than when $d_s > d_w$.
    \item The relative W2S performance tends to be better when $d_{s \wedge w}/d_w$ is lower, \ie the larger discrepancy between weak and strong features leads to better W2S generalization.
\end{itemize}
Meanwhile, both $\pgr$ and $\opr$ scale inversely with the labeled sample size $n$ and exhibit diminishing return with respect to the increasing pseudolabel size $N$, consistent with the theoretical predictions in \Cref{cor:non_monotonic_scaling} and the synthetic experiments in \Cref{fig:pgr_opr_vardom}.

\begin{figure}[!ht]
    \centering
    \includegraphics[width=\columnwidth]{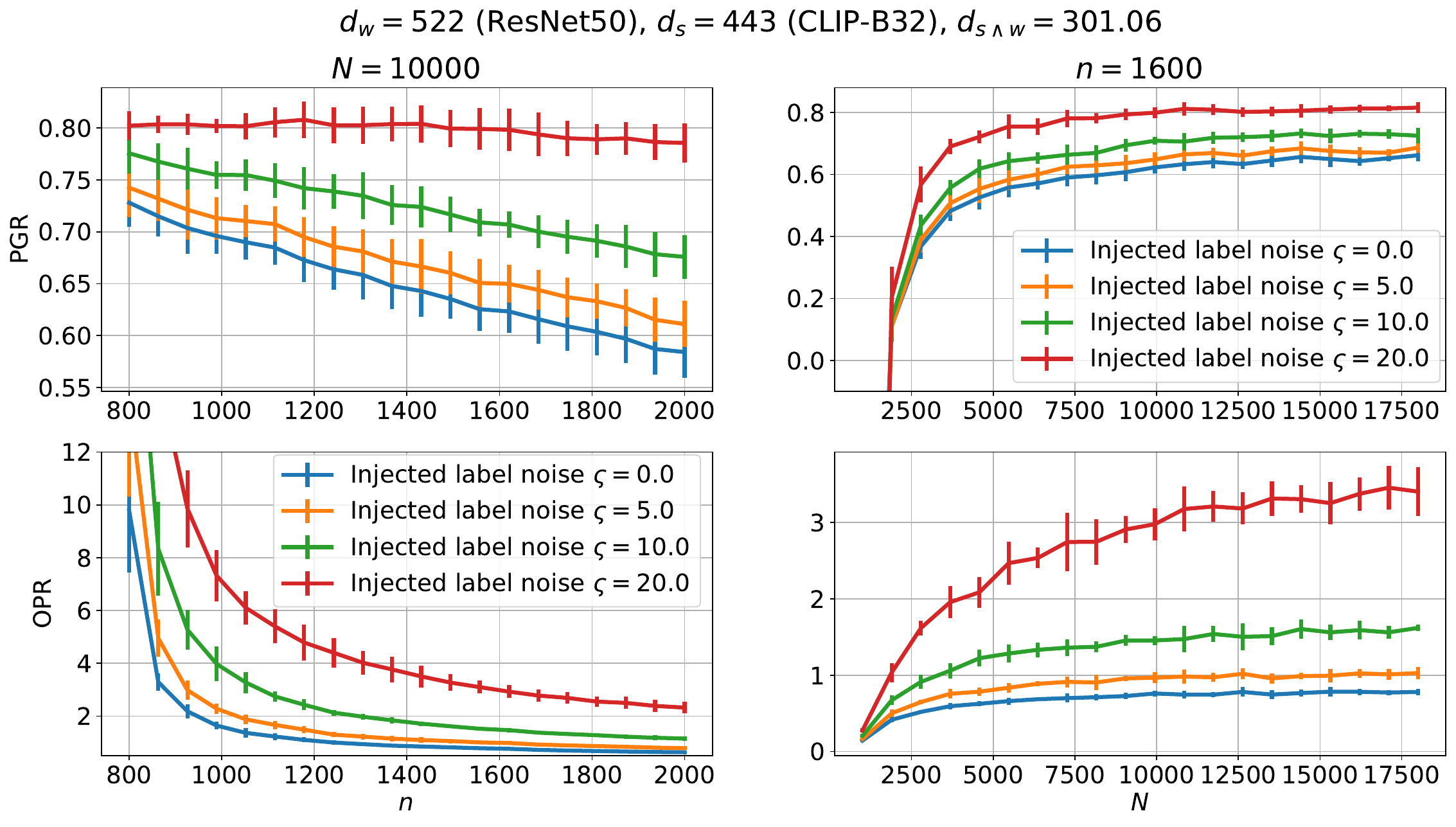}
    \caption{Scaling for $\pgr$ and $\opr$ on UTKFace with injected label noise: $y_i \gets y_i + \zeta_i$ where $\zeta_i \sim \Ncal(0, \varsigma^2)~\iid$.}\label{fig:pgr_opr_utkface_vardom_resnet-clip}
\end{figure}

\paragraph{Variance reduction is a key advantage of W2S.}
To investigate the impact of variance on W2S generalization, we inject noise to the training label by $y_i \gets y_i + \zeta_i$ where $\zeta_i \sim \Ncal(0, \varsigma^2)~\iid$, and $\varsigma$ controls the injected labels noise level.
In \Cref{fig:pgr_opr_utkface_vardom_resnet-clip}, we show the scaling for $\pgr$ and $\opr$ with respect to the sample sizes $n, N$ under different noise levels $\varsigma$. We observe that the relative W2S performance in terms of $\pgr$ and $\opr$ improves as the noise level $\varsigma$ increases. This provides empirical evidence that variance reduction is a key advantage of W2S over supervised FT, highlighting the importance of understanding the mechanisms of W2S in the variance-dominated regime.

\section{Limitations and future directions}
In this work, we introduce a theoretical framework for understanding the mechanism of weak-to-strong (W2S) generalization in the variance-dominated regime where both the student and teacher have sufficient capacities for the downstream task. Leveraging the low intrinsic dimensionality of finetuning (FT), we characterize model capacities from three perspectives: FT approximation errors for ``accuracy'', intrinsic dimensions for ``complexity'', and student-teacher correlation for ``alignment''. Our analysis shows that W2S generalization is driven by variance reduction in the discrepancy between the weak teacher and strong student features. 
This generalization analysis is followed by a case study on the relative W2S performance in terms of performance gap recovery (PGR) and outperforming ratio (OPR). We show that while larger sample sizes imply better W2S generalization in an absolute sense, the relative W2S performance can degenerate as the sample size increases.
Our results provide theoretical insights into the choice of weak teachers and sample sizes in W2S pipelines. 

An interesting implication of our analysis is that the mechanism of W2S may differ as the balance between variance and bias shifts. In the variance-dominated regime studied in this work, W2S can benefit from a lower intrinsic dimension of the strong student due to the resulting variance reduction in the subspace of discrepancy from the weak teacher. In contrast, in the bias-dominated regime, the lower approximation error of the strong student is generally brought by the larger ``capacity'' of the strong model corresponding to a higher intrinsic dimension~\citep{ildiz2024high,wu2024provable}. 
This calls for future studies on unified views and transitions between the two regimes, which will provide a more comprehensive understanding of W2S.
Toward this goal, a limitation of our analysis is the quantification of the advantage of W2S in bias (see \Cref{fn:bias_strong}), which could be a promising next step.

\section*{Acknowledgements}
The authors would like to thank Denny Wu, Stephen Tu, and Mohammad Tinati for insightful discussions and feedback on the paper.
The experiments are supported by the PLI computing cluster.
YD acknowledges support of NYU Courant Instructorship.
JDL acknowledges support of Open Philanthropy, NSF IIS 2107304,  NSF CCF 2212262, NSF CAREER Award 2144994, and NSF CCF 2019844.
This material is based upon work supported by the U.S. Department of Energy, Office of Science Energy Earthshot Initiative as part of the project “Learning reduced models under extreme data conditions for design and rapid decision-making in complex systems” under Award \#DE-SC0024721.

\section*{Impact Statement}
This paper presents work whose goal is to advance the field of machine learning. There are many potential societal consequences of our work, none of which we feel must be specifically highlighted here.

\bibliography{ref}
\bibliographystyle{icml2025}

\onecolumn
\clearpage
\appendix
\appendixpage  
\hypersetup{linkcolor=black}
\startcontents[sections]
\printcontents[sections]{l}{1}

\hypersetup{linkcolor=hrefblue}
\glsresetall
\section{Additional related works}\label{apx:related_works}

\paragraph{Knowledge distillation.}
Knowledge distillation (KD)~\citep{hinton2015distilling,gou2021knowledge} is closely connected to W2S generalization regarding the teacher-student setup, while W2S reverts the capacities of teacher and student in KD. In KD, a strong teacher model guides a weak student model to learn the teacher's knowledge. In contrast, W2S generalization occurs when a strong student model surpasses a weak teacher model under weak supervision.
\citet{phuong2019towards,stanton2021does,ojha2023knowledge,nagarajan2023student,dong2024cluster,ildiz2024high} conducted rigorous statistical analyses for the student's generalization from knowledge distillation. 
From the analysis perspective, a key difference between KD and W2S is that W2S is usually analyzed in the context of finetuning since the notions of “weak” and “strong” are built upon pretraining. This finetuning perspective introduces distinct angles from KD for examining intrinsic dimension~\citep{li2018measuring} and student-teacher correlation in W2S. 

\paragraph{Self-distillation and self-training.}
In contrast to W2S, which considers distinct student and teacher models, self-distillation~\citep{zhang2019your,zhang2021self} and related paradigms such as Born-Again Networks~\citep{furlanello2018born} use the same or progressively refined architectures to iteratively distill knowledge from a ``previous version'' of the model. There have been extensive theoretical analyses toward understanding the mechanism behind self-distillation~\citep{mobahi2020self,das2023understanding,borup2023self,pareek2024understanding}.

Self-training~\citep{scudder1965probability,lee2013pseudo} is a closely related method to self-distillation that takes a single model's confident predictions to create pseudo-labels for unlabeled data and refines that model iteratively. 
\citet{wei2020theoretical,oymak2021theoretical,frei2022self} provide theoretical insights into the generalization of self-training. 
In particular, \citet{wei2020theoretical} introduced a theoretical framework based on neighborhood expansion, which was later on extended to various settings of weakly supervised learning, including domain adaptation~\citep{cai2021theory}, contrastive learning~\citep{shen2022connect,huang2021towards}, consistency regularization~\citep{yang2023sample,dong2023adaptively}, and recently weak-to-strong generalization~\citep{lang2024theoretical,shin2024weak}.

\section{Proofs in \Cref{sec:single_task_ft}}\label{apx:pf_single_task_ft}
With respect to any sample size $n \in \N$, let 
\begin{align*}
    &\rho_s(n) = \E_{\Xb \sim \Dcal^n}[\| \phi_s(\Xb) \phi_s(\Xb)^\dagger f_*(\Xb) - f_*(\Xb) \|_2^2], \\
    &\rho_w(n) = \E_{\Xb \sim \Dcal^n}[\| \phi_w(\Xb) \phi_w(\Xb)^\dagger f_*(\Xb) - f_*(\Xb) \|_2^2],
\end{align*}
where $\phi_s(\Xb)$ and $\phi_w(\Xb)$ are $n \times d$ feature matrices; and $f_*(\Xb) \in \R^n$ is a vector of the noiseless ground truth labels.

\begin{lemma}\label{lem:low_est_err_ft}    
    Given the FT approximation errors $\rho_s$ and $\rho_w$ in \Cref{def:ft_est_err}, we have
    \begin{align*}
        \rho_s(n) \le n \rho_s \quad \text{and} \quad \rho_w(n) \le n \rho_w \quad \forall\ n \in \N.
    \end{align*}
\end{lemma}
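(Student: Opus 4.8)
The plan is to observe that $\rho_s(n)$ is nothing but the \emph{expected in-sample} least-squares residual of approximating $f_*$ by a linear functional of $\phi_s$, and to upper bound it by evaluating that residual at the \emph{population} optimum $\thetab_s^\star$. I will spell out the strong case; the weak case is word-for-word identical with $\phi_w$, $\thetab_w^\star$ in place of $\phi_s$, $\thetab_s^\star$.

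The first step is the linear-algebra identity that $\phi_s(\Xb)\phi_s(\Xb)^\dagger$ is the orthogonal projector onto the column space $\col(\phi_s(\Xb)) = \{\phi_s(\Xb)\thetab : \thetab \in \R^d\}$, so that, for \emph{every} realization $\Xb = (\xb_1,\dots,\xb_n)$,
\[
    \nbr{\phi_s(\Xb)\phi_s(\Xb)^\dagger f_*(\Xb) - f_*(\Xb)}_2^2 \;=\; \min_{\thetab \in \R^d} \nbr{\phi_s(\Xb)\thetab - f_*(\Xb)}_2^2 .
\]
The second step is to fix a population minimizer $\thetab_s^\star \in \argmin_{\thetab \in \R^d} \E_{\xb \sim \Dcal}[(\phi_s(\xb)^\top\thetab - f_*(\xb))^2]$, which attains value $\rho_s$ by \Cref{def:ft_est_err}. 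Since $\thetab_s^\star$ is feasible in the in-sample minimization above, we obtain the pointwise bound $\min_{\thetab} \nbr{\phi_s(\Xb)\thetab - f_*(\Xb)}_2^2 \le \sum_{i=1}^n (\phi_s(\xb_i)^\top\thetab_s^\star - f_*(\xb_i))^2$. The third step is to take $\E_{\Xb \sim \Dcal^n}$ on both sides: because $\thetab_s^\star$ is deterministic and the $\xb_i$ are i.i.d.\ $\sim \Dcal$, linearity of expectation turns the right-hand side into $n\,\E_{\xb \sim \Dcal}[(\phi_s(\xb)^\top\thetab_s^\star - f_*(\xb))^2] = n\rho_s$, giving $\rho_s(n) \le n\rho_s$, and likewise $\rho_w(n) \le n\rho_w$.

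There is no serious obstacle here. The only subtlety worth flagging is that the projection identity in the first step must be invoked pathwise (for each fixed $\Xb$), so that substituting the fixed vector $\thetab_s^\star$ is legitimate \emph{before} taking expectations; once this is done the inequality survives the expectation trivially. If one prefers to avoid the pseudoinverse formalism, the same conclusion follows from the fact that orthogonal projection is a contraction, i.e.\ $\nbr{(\Ib_n - \Pb)\,f_*(\Xb)}_2 \le \nbr{f_*(\Xb) - \phi_s(\Xb)\thetab_s^\star}_2$ for the projector $\Pb$ onto $\col(\phi_s(\Xb))$, since $\phi_s(\Xb)\thetab_s^\star \in \col(\phi_s(\Xb))$.
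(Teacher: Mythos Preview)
Your proof is correct and follows essentially the same approach as the paper: fix the population minimizer, use the fact that $\phi(\Xb)\phi(\Xb)^\dagger f_*(\Xb)$ realizes the in-sample least-squares minimum so substituting the fixed $\thetab^\star$ can only increase the residual, then take expectations and use the i.i.d.\ structure to get $n\rho$. The only cosmetic difference is that the paper writes out the weak case and cites the strong one as analogous, whereas you do the reverse.
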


\begin{proof}[Proof of \Cref{lem:low_est_err_ft}]
    Let $\thetab_* = \argmin_{\thetab \in \R^d}\ \E_{\xb \sim \Dcal}[(\phi_w(\xb)^\top \thetab - f_*(\xb))^2]$ such that
    \begin{align*}
        \E_{\xb \sim \Dcal}[(\phi_w(\xb)^\top \thetab_* - f_*(\xb))^2] = \rho_w.
    \end{align*}
    Then, by observing that conditioned on $\Xb$,
    \begin{align*}
        \phi_w(\Xb)^\dagger f_*(\Xb) = \argmin_{\thetab \in \R^d}\ \| \phi_w(\Xb) \thetab - f_*(\Xb) \|_2^2,
    \end{align*} 
    we have
    \begin{align*}
        \rho_w(n) &= \E_{\Xb \sim \Dcal^n}\sbr{\| \phi_w(\Xb) \phi_w(\Xb)^\dagger f_*(\Xb) - f_*(\Xb) \|_2^2} \\
        &\le \E_{\Xb \sim \Dcal^n}\sbr{\| \phi_w(\Xb) \thetab_* - f_*(\Xb) \|_2^2} \\
        &= n\ \E_{\Xb \sim \Dcal^n}\sbr{\frac{1}{n} \| \phi_w(\Xb) \thetab_* - f_*(\Xb) \|_2^2} \\
        &= n\ \E_{\xb \sim \Dcal}\sbr{(\phi_w(\xb)^\top \thetab_* - f_*(\xb))^2} \\
        &= n\ \rho_w.
    \end{align*}
    The proof for $\rho_s(n)$ follows analogously.
\end{proof}

\subsection{Proof of \Cref{thm:w2s_ft}}\label{apx:pf_w2s_ft}

\begin{theorem}[Formal restatement of \Cref{thm:w2s_ft}]\label{thm:w2s_ft_formal}
    Consider $f_\wts(\xb) = \phi_s(\xb)^\top \thetab_\wts$ finetuned as in \eqref{eq:sft_weak}, \eqref{eq:w2s_ft} with both $\alpha_w, \alpha_\wts \to 0$. 
    Under \Cref{asm:features,asm:ft_data}, for $\Sigmab_s = \Vb_s \Lambdab_s \Vb_s^\top$ and $\Sigmab_w = \Vb_w \Lambdab_w \Vb_w^\top$ from \Cref{def:correlation_dim}, we further assume that the whitened weak and strong features $\gammab_w = \Lambdab_w^{-1/2} \Vb_w^\top \phi_w(\xb)$ and $\gammab_s = \Lambdab_s^{-1/2} \Vb_s^\top \phi_s(\xb)$ both satisfy the following small-ball condition: for the whitened feature $\gammab \in \cbr{\gammab_w, \gammab_s}$ of dimension $r \in \cbr{d_w, d_s}$, respectively, there exist constants $C \ge 1$ and $\alpha \in (0,1]$ depending on $r$ such that
    \begin{align}\label{eq:small_ball_condition}
        \Pr\sbr{\rvert \gammab^\top \vb \rvert \le \tau} \le (C \tau)^\alpha \quad \forall~ \vb \in \SSS^{r-1},\ \tau > 0.
    \end{align}
    Then, when $n = \Omega(d_w)$, the excess risk $\exrisk(f_\wts) = \vari(f_\wts) + \bias(f_\wts)$ satisfies
    \begin{align*}
        &\vari(f_\wts) \lesssim \frac{\sigma^2}{n} \rbr{d_{s \wedge w} + \frac{d_s}{N} (d_w - d_{s \wedge w})}, \\
        &\bias(f_\wts) \le \bias(f_w) + \rho_s.
    \end{align*}

    Moreover, when $\phi_w(\xb)$ and $\phi_*(\xb)$ are jointly Gaussian, for any $n > d_w + 1$, we have
    \begin{align*}
        &\vari(f_\wts) = \frac{\sigma^2}{n-d_w-1} \rbr{d_{s \wedge w} + \frac{d_s}{N} (d_w - d_{s \wedge w})}, \\
        &\bias(f_\wts) \le \rho_w \rbr{1 + \frac{d_w}{n-d_w-1}} + \rho_s.
    \end{align*}
\end{theorem}

\begin{proof}[Proof of \Cref{thm:w2s_ft} and \Cref{thm:w2s_ft_formal}]
    We first observe that the solution of \eqref{eq:sft_weak} as $\alpha_w \to 0$ is given by
    \begin{align*}
        \thetab_w = \wt\Phib_w^\dagger \wt\yb = \wt\Phib_w^\dagger (\wt\fb_* + \wt\zb),
    \end{align*}
    where $\wt\zb \sim \Ncal(\b0_n, \sigma^2 \Ib_n)$.
    Meanwhile, the solution of \eqref{eq:w2s_ft} as $\alpha_\wts \to 0$ is given by
    \begin{align*}
        \thetab_\wts = \Phib_s^\dagger \Phib_w \thetab_w = \Phib_s^\dagger \Phib_w \wt\Phib_w^\dagger (\wt\fb_* + \wt\zb).
    \end{align*}  
    
    Then, the excess risk of $f_\wts$ can be decomposed into variance and bias as follows:
    \begin{align*}
        \exrisk(f_\wts) &= \E_{\Scal_x,\wt\Scal}\sbr{\frac{1}{N}\nbr{\Phib_s \thetab_\wts - \fb_*}_2^2} \\
        &=\E_{\Scal_x, \wt\Scal}\sbr{\frac{1}{N} \nbr{(\Phib_s \Phib_s^\dagger \Phib_w \wt\Phib_w^\dagger \wt\fb_* - \fb_*) + \Phib_s \Phib_s^\dagger \Phib_w \wt\Phib_w^\dagger \wt\zb}_2^2} \\
        &= \underbrace{\frac{1}{N} \E_{\Scal_x, \wt\Scal}\sbr{\nbr{\Phib_s \Phib_s^\dagger \Phib_w \wt\Phib_w^\dagger \wt\zb}_2^2}}_{\vari(f_\wts)} + \underbrace{\frac{1}{N} \E_{\Scal_x, \wt\Scal}\sbr{\nbr{\Phib_s \Phib_s^\dagger \Phib_w \wt\Phib_w^\dagger \wt\fb_* - \fb_*}_2^2}}_{\bias(f_\wts)}.
    \end{align*}

    Recall the spectral decomposition $\Sigmab_w = \Vb_w \Lambdab_w \Vb_w^\top$. 
    Since $\E_{\xb \sim \Dcal}[\phi_w(\xb) \phi_w(\xb)^\top] = \Sigmab_w$, for each $\xb \sim \Dcal$, we can write $\phi_w(\xb) = \Sigmab_w^{1/2} \gammab$, where $\gammab \in \R^{d}$ is an independent random vector that is zero-mean and isotropic. 
    Under \Cref{asm:features}, the same holds for $\Sigmab_s = \Vb_s \Lambdab_s \Vb_s^\top$ and $\phi_s(\xb) = \Sigmab_s^{1/2} \gammab$.
    In particular, for each $\xb \sim \Dcal$, there exists a random vector $\gammab = \gammab(\xb) \in \R^d$ with $\E[\gammab] = \b0_{d}$ and $\E[\gammab \gammab^\top] = \Ib_{d}$ such that
    \begin{align*}
        \rbr{\phi_w(\xb),\phi_s(\xb)} \overset{d}{=} \rbr{\Sigmab_w^{1/2} \gammab, \Sigmab_s^{1/2} \gammab}.
    \end{align*}

    Then, for $\Scal$ and $\wt\Scal$, there exist independent random matrices $\Gammab = [\gammab_1, \ldots, \gammab_N]^\top \in \R^{N \times d}$ and $\wt\Gammab = [\wt\gammab_1, \ldots, \wt\gammab_n]^\top \in \R^{n \times d}$ consisting of $\iid$ zero-mean isotropic rows such that
    \begin{align}\label{eq:pf_var_w2s_subgaussian_asm}
    \begin{split}
        &\Phib_w \Sigmab_w^{\dagger/2} = \Gammab \Sigmab_w^{1/2} \Sigmab_w^{\dagger/2} = \Gammab \Vb_w \Vb_w^\top, \\
        &\wt\Phib_w \Sigmab_w^{\dagger/2} = \wt\Gammab \Sigmab_w^{1/2} \Sigmab_w^{\dagger/2} = \wt\Gammab \Vb_w \Vb_w^\top, \\
        &\Phib_s \Sigmab_s^{\dagger/2} = \Gammab \Sigmab_s^{1/2} \Sigmab_s^{\dagger/2} = \Gammab \Vb_s \Vb_s^\top, \\
        &\wt\Phib_s \Sigmab_s^{\dagger/2} = \wt\Gammab \Sigmab_s^{1/2} \Sigmab_s^{\dagger/2} = \wt\Gammab \Vb_s \Vb_s^\top,
    \end{split}
    \end{align}
    where $\Sigmab_w^{\dagger/2}$ denotes the pseudo-inverse of $\Sigmab_w^{1/2}$.
    Let $\Gammab_w = \Gammab \Vb_w \in \R^{N \times d_w}$ and $\wt\Gammab_w = \wt\Gammab \Vb_w \in \R^{n \times d_w}$ throughout the proof.

    \paragraph{Bias.}
    For the bias term, by observing that $\Pb_s = \Phib_s \Phib_s^\dagger$ is an $N \times N$ orthogonal projection, we can decompose the bias term as
    \begin{align*}
        \bias(f_\wts) &= \E_{\Scal_x, \wt\Scal}\sbr{\frac{1}{N} \nbr{\Pb_s \rbr{\Phib_w \wt\Phib_w^\dagger \wt\fb_* - \fb_*}}_2^2} + \frac{1}{N} \E_{\Scal_x}\sbr{\nbr{\rbr{\Ib_N - \Pb_s} \fb_*}_2^2},
    \end{align*}
    where $\E_{\Scal_x}\sbr{\nbr{\rbr{\Ib_N - \Pb_s} \fb_*}_2^2} = \rho_s(N)$ by \Cref{def:ft_est_err}, and 
    \begin{align*}
        \frac{1}{N} \E_{\Scal_x}\sbr{\nbr{\rbr{\Ib_N - \Pb_s} \fb_*}_2^2} = \frac{\rho_s(N)}{N} \le \rho_s.
    \end{align*}

    For the first term, since $\Pb_s$ is an orthogonal projection, we have
    \begin{align*}
        \E_{\Scal_x, \wt\Scal}\sbr{\frac{1}{N} \nbr{\Pb_s \rbr{\Phib_w \wt\Phib_w^\dagger \wt\fb_* - \fb_*}}_2^2} 
        \le &\E_{\Scal_x, \wt\Scal}\sbr{\frac{1}{N} \nbr{\Phib_w \wt\Phib_w^\dagger \wt\fb_* - \fb_*}_2^2} = \bias(f_w).
    \end{align*}
    Overall, we have
    \begin{align*}
        \bias(f_\wts) \le \bias(f_w) + \rho_s,
    \end{align*}
    where the explicit finite-sample Gaussian bound on $\bias(f_w)$ is proved in \Cref{pro:sft_weak}.

    \paragraph{Variance.}
    For the variance term, we observe that
    \begin{align*}
    \begin{split}
        \vari(f_\wts) &= \frac{1}{N} \E_{\Scal_x, \wt\Scal}\sbr{\nbr{\Pb_s \Phib_w \wt\Phib_w^\dagger \wt\zb}_2^2} \\
        &= \frac{1}{N} \E_{\Scal_x, \wt\Scal}\sbr{\tr\rbr{\Phib_w^\top \Pb_s \Phib_w \wt\Phib_w^\dagger \wt\zb \wt\zb^\top (\wt\Phib_w^\dagger)^\top}} \\
        &= \frac{\sigma^2}{N} \E_{\Scal_x, \wt\Scal}\sbr{\tr\rbr{\Phib_w^\top \Pb_s \Phib_w (\wt\Phib_w^\top \wt\Phib_w)^\dagger}},
    \end{split}
    \end{align*}
    which implies
    \begin{align}\label{eq:pf_var_w2s}
    \begin{split}
        \vari(f_\wts) = \frac{\sigma^2}{N} \tr\rbr{\E_{\Scal_x}\sbr{\Sigmab_w^{\dagger/2} \Phib_w^\top \Pb_s \Phib_w \Sigmab_w^{\dagger/2}} \E_{\wt\Scal}\sbr{\rbr{\Sigmab_w^{\dagger/2} \wt\Phib_w^\top \wt\Phib_w \Sigmab_w^{\dagger/2}}^\dagger}}.
    \end{split}
    \end{align}

    We observe that
    \begin{align*}
        \E_{\wt\Scal}\sbr{\rbr{\Sigmab_w^{\dagger/2} \wt\Phib_w^\top \wt\Phib_w \Sigmab_w^{\dagger/2}}^\dagger}
        = \E_{\wt\Scal}\sbr{\rbr{\Vb_w \wt\Gammab_w^\top \wt\Gammab_w \Vb_w^\top}^\dagger} 
        = \Vb_w \E_{\wt\Scal}\sbr{\rbr{\wt\Gammab_w^\top \wt\Gammab_w}^\dagger} \Vb_w^\top.
    \end{align*}

    Now, we consider the following two cases for the feature distribution of $\phi_w(\xb)$, corresponding to the distribution of $\Gammab_w$ and $\wt\Gammab_w$:
    \begin{enumerate}[label=(\alph*)]
        \item \b{Gaussian features}: In \Cref{thm:w2s_ft}, assuming $\phi_w(\xb) \sim \Ncal(\b0_d, \Sigmab_w)$ such that $\wt\Gammab_w$ consists of $\iid$ Gaussian rows, we have $\wt\gammab_i \sim \Ncal(\b0_{d_w}, \Ib_{d_w})$. Notice that under the assumption $n > d_w + 1$, $\rank(\wt\Gammab_w) = d_w$ almost surely, and therefore $\wt\Gammab_w^\top \wt\Gammab_w$ is invertible.
        
        Meanwhile, with $\wt\gammab_i \sim \Ncal(\b0_{d_w}, \Ib_{d_w})$ for all $i \in [n]$, $(\wt\Gammab_w^\top \wt\Gammab_w) \sim \Wcal(\Ib_{d_w},n)$ follows the Wishart distribution~\citep[Definition 3.4.1]{wishart1928generalised} with $n$ degrees of freedom and scale matrix $\Ib_{d_w}$. 
        Therefore, $(\wt\Gammab_w^\top \wt\Gammab_w)^{-1} \sim \Wcal^{-1}(\Ib_{d_w},n)$ follows the inverse Wishart distribution~\citep[\S 3.8]{mardia2024multivariate}, whose mean takes the form~\citep[(3.8.3)]{mardia2024multivariate}
        \begin{align*}
            \E_{\wt\Scal}\sbr{(\wt\Gammab_w^\top \wt\Gammab_w)^\dagger} = \frac{1}{n - d_w -1} \Ib_{d_w}.
        \end{align*}
        Then, we have
        \begin{align*}
            \E_{\wt\Scal}\sbr{\rbr{\Sigmab_w^{\dagger/2} \wt\Phib_w^\top \wt\Phib_w \Sigmab_w^{\dagger/2}}^\dagger}
            = \frac{1}{n - d_w -1} \Vb_w \Vb_w^\top.
        \end{align*}
        Therefore, \eqref{eq:pf_var_w2s} implies
        \begin{align}\label{eq:pf_var_w2s_1}
        \begin{split}
            \vari(f_\wts) &= \frac{\sigma^2}{N}\ \frac{1}{n - d_w -1}\ \tr\rbr{\Vb_w^\top \E_{\Scal_x}\sbr{\Sigmab_w^{\dagger/2} \Phib_w^\top \Pb_s \Phib_w \Sigmab_w^{\dagger/2}} \Vb_w} \\
            &= \frac{\sigma^2}{N}\ \frac{1}{n - d_w -1}\ \tr\rbr{\E_{\Scal_x}\sbr{\Vb_w^\top \Vb_w \Gammab_w^\top \Pb_s \Gammab_w \Vb_w^\top \Vb_w}} \\
            &= \frac{\sigma^2}{N}\ \frac{1}{n - d_w -1}\ \tr\rbr{\E_{\Scal_x}\sbr{\Gammab_w^\top \Pb_s \Gammab_w}}.
        \end{split}
        \end{align}
        Recall that $\Pb_s = \Phib_s \Phib_s^\dagger$. Let $\Gammab_s = \Gammab \Vb_s \in \R^{N \times d_s}$, and we can write
        \begin{align*}
            \Pb_s = (\Phib_s \Sigmab_s^{\dagger/2}) (\Phib_s \Sigmab_s^{\dagger/2})^\dagger = (\Gammab_s \Vb_s^\top) (\Gammab_s \Vb_s^\top)^\dagger = \Gammab_s \Gammab_s^\dagger.
        \end{align*}
        Therefore, with $\Gammab_w = \Gammab \Vb_w$ and $\Gammab_s = \Gammab \Vb_s$, we can decompose
        \begin{align*}
            \tr\rbr{\E_{\Scal_x}\sbr{\Gammab_w^\top \Pb_s \Gammab_w}} 
            &= \E_{\Scal_x}\sbr{\tr\rbr{\Gammab_w^\top \Gammab_s \Gammab_s^\dagger \Gammab_w}} \\
            &= \E_{\Scal_x}\sbr{\tr\rbr{\Vb_w^\top \Vb_s \Vb_s^\top \Vb_w \Gammab_w^\top \Gammab_s \Gammab_s^\dagger \Gammab_w}} \\
            &+ \E_{\Scal_x}\sbr{\tr\rbr{\Vb_w^\top (\Ib_d - \Vb_s \Vb_s^\top) \Vb_w \Gammab_w^\top \Gammab_s \Gammab_s^\dagger \Gammab_w}}.
        \end{align*}
        For the first term, since $\Gammab_w \Vb_w^\top \Vb_s = \Gammab \Vb_w \Vb_w^\top \Vb_s$ and $\Gammab_s = \Gammab \Vb_s$, the range of $\Gammab_w \Vb_w^\top \Vb_s$ is a subspace of that of $\Gammab_s$ and therefore,
        \begin{align*}
            \E_{\Scal_x}\sbr{\tr\rbr{\Vb_w^\top \Vb_s \Vb_s^\top \Vb_w \Gammab_w^\top \Gammab_s \Gammab_s^\dagger \Gammab_w}} 
            &= \E_{\Scal_x}\sbr{\tr\rbr{ \Vb_s^\top \Vb_w \Gammab_w^\top \Gammab_s \Gammab_s^\dagger \Gammab_w \Vb_w^\top \Vb_s}} \\
            &= \E_{\Scal_x}\sbr{\tr\rbr{ \Vb_s^\top \Vb_w \Gammab_w^\top \Gammab_w \Vb_w^\top \Vb_s}} \\
            &= \tr\rbr{\Vb_s^\top \Vb_w \E_{\Scal_x}\sbr{\Gammab_w^\top \Gammab_w} \Vb_w^\top \Vb_s}.
        \end{align*}
        Since $\E_{\Scal_x}\sbr{\Gammab_w^\top \Gammab_w} = N \Ib_{d_w}$, we have
        \begin{align*}
            \E_{\Scal_x}\sbr{\tr\rbr{\Vb_w^\top \Vb_s \Vb_s^\top \Vb_w \Gammab_w^\top \Gammab_s \Gammab_s^\dagger \Gammab_w}} 
            &= N \tr\rbr{\Vb_s^\top \Vb_w \Vb_w^\top \Vb_s} \\
            &= N \nbr{\Vb_s^\top \Vb_w}_F^2 \\
            &= N d_{s \wedge w}.
        \end{align*}
        For the second term, we first observe that the row space of $\Gammab_w \Vb_w^\top (\Ib_d - \Vb_s \Vb_s^\top)$ is orthogonal to that of $\Gammab_s = \Gammab \Vb_s$, and therefore, $\Gammab_w \Vb_w^\top (\Ib_d - \Vb_s \Vb_s^\top)$ and $\Gammab_s$ are independent, which implies
        \begin{align*}
            \E_{\Scal_x}\sbr{\tr\rbr{\Vb_w^\top (\Ib_d - \Vb_s \Vb_s^\top) \Vb_w \Gammab_w^\top \Gammab_s \Gammab_s^\dagger \Gammab_w}} 
            &= \tr\rbr{\E\sbr{\Gammab_w \Vb_w^\top (\Ib_d - \Vb_s \Vb_s^\top) \Vb_w \Gammab_w^\top} \E\sbr{\Gammab_s \Gammab_s^\dagger}}.
        \end{align*}
        Since $\Gammab$ consists of independent isotropic rows, so do $\Gammab_s = \Gammab \Vb_s \in \R^{N \times d_s}$ and $\Gammab_w = \Gammab \Vb_w \in \R^{N \times d_w}$, which implies
        \begin{align*}
            \E\sbr{\Gammab_s \Gammab_s^\dagger} = \frac{d_s}{N}\ \Ib_N \quad \t{and} \quad \E\sbr{\Gammab_w^\top \Gammab_w} = N\ \Ib_{d_w}.
        \end{align*}
        Then, we have
        \begin{align*}
            \E_{\Scal_x}\sbr{\tr\rbr{\Vb_w^\top (\Ib_d - \Vb_s \Vb_s^\top) \Vb_w \Gammab_w^\top \Gammab_s \Gammab_s^\dagger \Gammab_w}} 
            &= \tr\rbr{\E\sbr{\Gammab_w \Vb_w^\top (\Ib_d - \Vb_s \Vb_s^\top) \Vb_w \Gammab_w^\top} \E\sbr{\Gammab_s \Gammab_s^\dagger}} \\
            &= \frac{d_s}{N} \tr\rbr{\E\sbr{\Gammab_w \Vb_w^\top (\Ib_d - \Vb_s \Vb_s^\top) \Vb_w \Gammab_w^\top}} \\
            &= \frac{d_s}{N} \tr\rbr{\Vb_w^\top (\Ib_d - \Vb_s \Vb_s^\top) \Vb_w \E\sbr{\Gammab_w^\top \Gammab_w}} \\
            &= \frac{d_s}{N} N \tr\rbr{\Vb_w^\top (\Ib_d - \Vb_s \Vb_s^\top) \Vb_w} \\
            &= d_s (d_w - d_{s \wedge w}).
        \end{align*}
        Combining the two terms, we have
        \begin{align*}
            \tr\rbr{\E_{\Scal_x}\sbr{\Gammab_w^\top \Pb_s \Gammab_w}} = N d_{s \wedge w} + d_s (d_w - d_{s \wedge w}).
        \end{align*}
        Then, by \eqref{eq:pf_var_w2s_1}, the variance is exactly characterized by
        \begin{align*}
            \vari(f_\wts) 
            &= \frac{\sigma^2}{N}\ \frac{N d_{s \wedge w} + d_s (d_w - d_{s \wedge w})}{n - d_w -1} \\
            &= \frac{\sigma^2}{n-d_w-1} \rbr{d_{s \wedge w} + \frac{d_s}{N} (d_w - d_{s \wedge w})}.
        \end{align*}

        \item \b{Non-Gaussian features with a small-ball condition}: Relaxing the Gaussian feature assumption, suppose the rows of $\wt\Gammab_w$ are $\iid$ zero-mean isotropic random vectors satisfying \eqref{eq:small_ball_condition}. Then, when $n = \Omega(d_w)$, \Cref{lem:trace_inv_smallball} implies that
        \begin{align*}
            \E_{\wt\Scal}\sbr{(\wt\Gammab_w^\top \wt\Gammab_w)^\dagger} \aleq O\rbr{\frac{1}{n}} \Ib_{d_w},
        \end{align*}
        and therefore,
        \begin{align*}
            \E_{\wt\Scal}\sbr{\rbr{\Sigmab_w^{\dagger/2} \wt\Phib_w^\top \wt\Phib_w \Sigmab_w^{\dagger/2}}^\dagger} \aleq O\rbr{\frac{1}{n}} \Vb_w \Vb_w^\top.
        \end{align*}
        Then, via an analogous argument as \eqref{eq:pf_var_w2s_1}, \eqref{eq:pf_var_w2s} implies that 
        \begin{align}\label{eq:pf_var_w2s_2}
        \begin{split}
            \vari(f_\wts) \le \frac{\sigma^2}{N}\ O\rbr{\frac{1}{n}}\ \tr\rbr{\E_{\Scal_x}\sbr{\Gammab_w^\top \Pb_s \Gammab_w}}.
        \end{split}
        \end{align}
        We observe that in the analysis of the Gaussian feature case, the characterization
        \begin{align*}
            \tr\rbr{\E_{\Scal_x}\sbr{\Gammab_w^\top \Pb_s \Gammab_w}} = (N - d_s) d_{s \wedge w} + d_s d_w
        \end{align*}
        does not involve the Gaussianity of $\Gammab$ and therefore holds for general isotropic features.
        This leads to an upper bound on the variance:
        \begin{align*}
            \vari(f_\wts) 
            &\le \frac{\sigma^2}{N}\ O\rbr{\frac{1}{n}}\ \rbr{N d_{s \wedge w} + d_s (d_w - d_{s \wedge w})} \\
            &\lesssim \frac{\sigma^2}{n} \rbr{d_{s \wedge w} + \frac{d_s}{N} (d_w - d_{s \wedge w})}.
        \end{align*}
    \end{enumerate}
\end{proof}

\begin{lemma}[Adapting \citet{mourtada2022exact}, Theorem 4]\label{lem:trace_inv_smallball}
    Let $\wt\Gammab = [\wt\gammab_1, \ldots, \wt\gammab_n]^\top$ be an $n \times r$ matrix whose rows $\wt\gammab_1, \ldots, \wt\gammab_n$ are $\iid$ zero-mean isotropic random vectors in $\R^r$ satisfying \eqref{eq:small_ball_condition}.
    If $n > \max\cbr{\frac{6r}{\alpha}, \frac{12}{\alpha}}$, then with $C_0 = 3 C^4 e^{1+9/\alpha}$,
    \begin{align*}
        \E\sbr{\nbr{\rbr{\wt\Gammab^\top \wt\Gammab}^\dagger}_2} \le \frac{2 C_0}{n}, \qquad
        \E\sbr{\tr\rbr{\rbr{\wt\Gammab^\top \wt\Gammab}^\dagger}} \le \frac{2 C_0 r}{n}.
    \end{align*}
\end{lemma}

\begin{proof}[Proof of \Cref{lem:trace_inv_smallball}]
    Let $\wh{\Sigmab}_n = \frac{1}{n} \wt\Gammab^\top \wt\Gammab$. Since the rows of $\wt\Gammab$ are isotropic, $\E[\wh{\Sigmab}_n] = \Ib_r$. 
    Under the anti-concentration condition \eqref{eq:small_ball_condition}, \citet[Theorem~4]{mourtada2022exact} implies that for every $t \in (0,1)$,
    \begin{align*}
        \Pr\sbr{\lambda_{\min}(\wh{\Sigmab}_n) \le t} \le (C_0 t)^{\alpha n/6},
    \end{align*}
    where $C_0 = 3 C^4 e^{1+9/\alpha}$. In particular, pushing $t \to 0$ shows that $\lambda_{\min}(\wh{\Sigmab}_n) > 0$ almost surely.

    Next, \citet[Corollary~4]{mourtada2022exact} yields that for every $1 \le q \le \alpha n/12$,
    \begin{align*}
        \nbr{\max\cbr{1,\lambda_{\min}(\wh{\Sigmab}_n)^{-1}}}_{L_q} \le 2^{1/q} C_0.
    \end{align*}
    Setting $q = 1$ gives
    \begin{align*}
        \E\sbr{\lambda_{\min}(\wh{\Sigmab}_n)^{-1}}
        \le \E\sbr{\max\cbr{1,\lambda_{\min}(\wh{\Sigmab}_n)^{-1}}}
        \le 2 C_0.
    \end{align*}

    Since $\lambda_{\min}(\wh{\Sigmab}_n) > 0$ almost surely, $(\wt\Gammab^\top \wt\Gammab)^\dagger = (\wt\Gammab^\top \wt\Gammab)^{-1}$ almost surely, and therefore
    \begin{align*}
        \nbr{\rbr{\wt\Gammab^\top \wt\Gammab}^\dagger}_2
        = \frac{1}{n} \lambda_{\min}(\wh{\Sigmab}_n)^{-1}.
    \end{align*}
    Taking expectations proves the first claim:
    \begin{align*}
        \E\sbr{\nbr{\rbr{\wt\Gammab^\top \wt\Gammab}^\dagger}_2} \le \frac{2 C_0}{n}.
    \end{align*}
    For the trace bound, we use that $(\wt\Gammab^\top \wt\Gammab)^\dagger$ is positive semidefinite almost surely, so
    \begin{align*}
        \tr\rbr{\rbr{\wt\Gammab^\top \wt\Gammab}^\dagger}
        \le r\ \nbr{\rbr{\wt\Gammab^\top \wt\Gammab}^\dagger}_2.
    \end{align*}
    Taking expectations and applying the first bound yields the second claim.
\end{proof}

\subsection{Proof of \Cref{pro:sft_weak} and \Cref{cor:sft_strong}}\label{apx:pf_sft_weak}
\begin{proof}[Proof of \Cref{pro:sft_weak} and \Cref{cor:sft_strong}]
    The excess risk of the finetuned weak teacher $f_w(\xb) = \phi_w(\xb)^\top \thetab_w$ can be expressed as
    \begin{align*}
        \exrisk(f_w) &= \E_{\Scal_x, \wt\Scal}\sbr{\frac{1}{N}\nbr{\Phib_w \thetab_w - \fb_*}_2^2},
    \end{align*}
    where $\fb_* = [\fb_*(\xb_1), \ldots, \fb_*(\xb_N)]^\top \in \R^N$; and we recall that $\Phib_w = [\phi_w(\xb_1), \ldots, \phi_w(\xb_N)]^\top$. Notice that the randomness of $\thetab_w$ comes from the SFT samples $\wt\Scal \sim \Dcal(f_*)^n$.

    Observe that the solution of \eqref{eq:sft_weak} as $\alpha_w \to 0$ is given by $\thetab_w = \wt\Phib_w^\dagger \wt\yb$, where $\wt\yb = \wt\fb_* + \wt\zb$ is the noisy label vector with $\wt\zb \sim \Ncal(\b0_n, \sigma^2 \Ib_n)$.
    Therefore, with the randomness over $\wt\Scal \sim \Dcal(f_*)^n$, we have
    \begin{align*}
        \exrisk(f_w) &= \E \sbr{\frac{1}{N}\nbr{\Phib_w \wt\Phib_w^\dagger \wt\yb - \fb_*}_2^2} \\
        &= \E \sbr{\frac{1}{N}\nbr{\Phib_w \wt\Phib_w^\dagger \wt\zb + \rbr{\Phib_w \wt\Phib_w^\dagger \wt\fb_* - \fb_*}}_2^2} \\
        &= \underbrace{\E \sbr{\frac{1}{N}\nbr{\Phib_w \wt\Phib_w^\dagger \wt\zb}_2^2}}_{\vari(f_w)} + \underbrace{\E\sbr{\frac{1}{N}\nbr{\Phib_w \wt\Phib_w^\dagger \wt\fb_* - \fb_*}_2^2}}_{\bias(f_w)}.
    \end{align*}
    
    \paragraph{Bias.}
    For the bias term, when $\phi_w(\xb)$ and $\phi_*(\xb)$ are jointly Gaussian, let
    \begin{align*}
        \ab_* = \Sigmab_*^{1/2} \thetab_*, \qquad \Pb_w = \Vb_w \Vb_w^\top, \qquad \ab_{w,\perp} = (\Ib_d - \Pb_w) \ab_*.
    \end{align*}
    Then, $f_*(\xb) = \gammab^\top \ab_*$ and
    \begin{align*}
        \rho_w = \min_{\thetab \in \R^d} \E_{\xb \sim \Dcal}\sbr{(\phi_w(\xb)^\top \thetab - f_*(\xb))^2}
        = \min_{\thetab \in \R^d} \nbr{\Sigmab_w^{1/2} \thetab - \ab_*}_2^2
        = \nbr{\ab_{w,\perp}}_2^2,
    \end{align*}
    where the last equality follows since $\range(\Sigmab_w) = \range(\Vb_w)$.

    Since $\Phib_w = \Gammab_w \Lambdab_w^{1/2} \Vb_w^\top$ and $\wt\Phib_w = \wt\Gammab_w \Lambdab_w^{1/2} \Vb_w^\top$, we have
    \begin{align*}
        \Phib_w \wt\Phib_w^\dagger = \Gammab_w \wt\Gammab_w^\dagger.
    \end{align*}
    Moreover,
    \begin{align*}
        \wt\fb_* = \wt\Gammab \ab_* = \wt\Gammab_w \Vb_w^\top \ab_* + \wt\rb, \qquad
        \fb_* = \Gammab \ab_* = \Gammab_w \Vb_w^\top \ab_* + \rb,
    \end{align*}
    where $\wt\rb = \wt\Gammab \ab_{w,\perp} \in \R^n$ and $\rb = \Gammab \ab_{w,\perp} \in \R^N$. Therefore,
    \begin{align*}
        \Phib_w \wt\Phib_w^\dagger \wt\fb_* - \fb_* = \Gammab_w \wt\Gammab_w^\dagger \wt\rb - \rb.
    \end{align*}

    Since $\ab_{w,\perp} \in \range(\Vb_w)^\perp$, Gaussian orthogonality implies that $\wt\Gammab_w \perp \wt\rb$ and $\Gammab_w \perp \rb$; moreover, $(\wt\Gammab_w, \wt\rb)$ is independent of $(\Gammab_w, \rb)$. With $\rho_w = \|\ab_{w,\perp}\|_2^2$, we have
    \begin{align*}
        \wt\rb \sim \Ncal(\b0_n, \rho_w \Ib_n), \qquad \rb \sim \Ncal(\b0_N, \rho_w \Ib_N).
    \end{align*}
    Then, conditioned on $\wt\Gammab_w$ and $\Gammab_w$,
    \begin{align*}
        &\E\sbr{\nbr{\Gammab_w \wt\Gammab_w^\dagger \wt\rb - \rb}_2^2 \mid \wt\Gammab_w, \Gammab_w} \\
        = &\E\sbr{\nbr{\Gammab_w \wt\Gammab_w^\dagger \wt\rb}_2^2 \mid \wt\Gammab_w, \Gammab_w} + \E\sbr{\nbr{\rb}_2^2} \\
        = &\rho_w \tr\rbr{\Gammab_w \wt\Gammab_w^\dagger (\wt\Gammab_w^\dagger)^\top \Gammab_w^\top} + N \rho_w.
    \end{align*}
    Therefore,
    \begin{align*}
        \bias(f_w)
        &= \E\sbr{\frac{1}{N} \nbr{\Phib_w \wt\Phib_w^\dagger \wt\fb_* - \fb_*}_2^2} \\
        &= \rho_w + \frac{\rho_w}{N} \E\sbr{\tr\rbr{\Gammab_w \wt\Gammab_w^\dagger (\wt\Gammab_w^\dagger)^\top \Gammab_w^\top}} \\
        &= \rho_w + \frac{\rho_w}{N} \tr\rbr{\E_{\Scal_x}\sbr{\Gammab_w^\top \Gammab_w}\ \E_{\wt\Scal}\sbr{\wt\Gammab_w^\dagger (\wt\Gammab_w^\dagger)^\top}} \\
        &= \rho_w + \rho_w\ \E_{\wt\Scal}\sbr{\tr\rbr{\rbr{\wt\Gammab_w^\top \wt\Gammab_w}^{-1}}},
    \end{align*}
    where the last equality uses $\E_{\Scal_x}[\Gammab_w^\top \Gammab_w] = N \Ib_{d_w}$ and $\rank(\wt\Gammab_w) = d_w$ almost surely when $n > d_w + 1$.
    Since $\wt\Gammab_w^\top \wt\Gammab_w \sim \Wcal(\Ib_{d_w}, n)$, the inverse-Wishart mean identity~\citep[(3.8.3)]{mardia2024multivariate} gives
    \begin{align*}
        \E_{\wt\Scal}\sbr{\rbr{\wt\Gammab_w^\top \wt\Gammab_w}^{-1}} = \frac{1}{n - d_w - 1} \Ib_{d_w},
    \end{align*}
    and therefore
    \begin{align*}
        \bias(f_w) = \rho_w \rbr{1 + \frac{d_w}{n - d_w - 1}}.
    \end{align*}

    \paragraph{Variance.}
    For variance, we observe that 
    \begin{align*}
        \vari(f_w) = &\frac{1}{N} \E\sbr{\nbr{\Phib_w \wt\Phib_w^\dagger \wt\zb}_2^2} 
        = \E\sbr{\tr\rbr{\Sigmab_w \wt\Phib_w^\dagger \wt\zb \wt\zb^\top (\wt\Phib_w^\dagger)^\top}} \\
        = &\sigma^2 \E\sbr{\tr\rbr{(\Sigmab_w^{\dagger/2} \wt\Phib_w^\top \wt\Phib_w \Sigmab_w^{\dagger/2})^\dagger}}
        = \sigma^2 \E\sbr{\tr\rbr{(\wt\Gammab_w^\top \wt\Gammab_w)^{-1}}}.
    \end{align*}
    Assuming $\phi_w(\xb) \sim \Ncal(\b0_d, \Sigmab_w)$, we then have $\E\sbr{\tr\rbr{(\wt\Gammab_w^\top \wt\Gammab_w)^{-1}}} = \frac{d_w}{n - d_w - 1}$ and
    \begin{align*}
        \vari(f_w) = \frac{\sigma^2 d_w}{n - d_w - 1}.
    \end{align*} 
    Meanwhile, assuming the whitened weak features satisfy \eqref{eq:small_ball_condition}, \Cref{lem:trace_inv_smallball} implies that 
    \begin{align*}
        \E\sbr{\tr\rbr{(\wt\Gammab_w^\top \wt\Gammab_w)^{-1}}} \lesssim O\rbr{\frac{d_w}{n}},
    \end{align*}
    and therefore,
    \begin{align*}
        \vari(f_w) \lesssim \frac{\sigma^2 d_w}{n}, \quad
        \bias(f_w) \lesssim \rho_w + \frac{d_w}{n} \rho_w \lesssim_{d_w,n} \rho_w.
    \end{align*}

    The analogous Gaussian calculation holds for the strong SFT model. Replacing $(\Vb_w, \Gammab_w, \wt\Gammab_w, \rho_w)$ above by $(\Vb_s, \Gammab_s, \wt\Gammab_s, \rho_s)$, where $\Gammab_s = \Gammab \Vb_s$ and $\wt\Gammab_s = \wt\Gammab \Vb_s$, yields for $n > d_s + 1$ that
    \begin{align*}
        \bias(f_s) = \rho_s \rbr{1 + \frac{d_s}{n - d_s - 1}}, \qquad
        \vari(f_s) = \frac{\sigma^2 d_s}{n - d_s - 1}.
    \end{align*}
    Meanwhile, under the small-ball condition \eqref{eq:small_ball_condition}, the same argument above implies that
    \begin{align*}
        \vari(f_s) \lesssim \frac{\sigma^2 d_s}{n}, \quad
        \bias(f_s) \lesssim_{d_s,n} \rho_s.
    \end{align*}

    For the strong ceiling model $f_c$, the variance, $\vari(f_c)$, follows from the standard generalization analysis for fixed design linear regression, and the bias, $\bias(f_c)$, follows directly from the definition of $\rho_s(n+N)$.
\end{proof}

\subsection{Proof of \Cref{cor:pgr}}\label{apx:pf_pgr}
\begin{proof}[Proof of \Cref{cor:pgr}]
    For conciseness, denote as in \Cref{cor:non_monotonic_scaling} that
    \begin{align*}
        d_\wts(N) = d_{s \wedge w} + (d_w - d_{s \wedge w}) \frac{d_s}{N}.
    \end{align*}
    Noticing that with $\rank(\wt\Phib_w) = d_w$ and $\rank(\wt\Phib_s) = \rank(\Phib_s) = d_s$ almost surely, the excess risks of $f_w, f_s, f_c$ are characterized exactly in \Cref{pro:sft_weak} and \Cref{cor:sft_strong}, and $\exrisk(f_\wts)$ is upper bounded by \Cref{thm:w2s_ft}.
    Therefore, by directly plugging in the excess risks to the definitions of PGR and OPR, we have
    \begin{align}\label{eq:pgr_lower_tight}
    \begin{split}
        \pgr = &\frac{\exrisk(f_w) - \exrisk(f_\wts)}{\exrisk(f_w) - \exrisk(f_c)} \\
        \ge &\rbr{\frac{\sigma^2}{n-d_w-1} \rbr{d_w - d_\wts(N)} - \rho_s} \Big/ \rbr{\frac{\sigma^2 d_w}{n-d_w-1} + \rho_w \rbr{1 + \frac{d_w}{n-d_w-1}}} \\
        = &\frac{d_w - d_\wts(N) - (n-d_w-1) \rho_s / \sigma^2}{d_w + (n-1) \rho_w / \sigma^2}.
    \end{split}
    \end{align}
    and 
    \begin{align}\label{eq:opr_lower_tight}
    \begin{split}
        \opr = &\frac{\exrisk(f_s)}{\exrisk(f_\wts)} 
        \ge \frac{\sigma^2 d_s}{n - d_s - 1} \Big/ \rbr{\sigma^2 \frac{d_\wts(N)}{n-d_w-1} + \rho_w \rbr{1 + \frac{d_w}{n-d_w-1}} + \rho_s}.
    \end{split}
    \end{align} 

    When taking $n = d_w + q + 1$ for some small constant $q \in \N$, we observe that 
    \begin{align*}
        \pgr &\ge \frac{d_w - d_\wts(N) - q \rho_s / \sigma^2}{d_w + (q + d_w) \rho_w / \sigma^2} \\
        &\ge 1 - \frac{d_\wts(N)}{d_w} - \frac{q}{d_w} \frac{\rho_s}{\sigma^2} - \frac{q + d_w}{d_w} \frac{\rho_w}{\sigma^2} \\
        &= 1 - \frac{d_{s \wedge w}}{d_w} - \frac{d_s}{N} \frac{d_w - d_{s \wedge w}}{d_w} - \frac{q}{d_w} \frac{\rho_s}{\sigma^2} - \frac{q + d_w}{d_w} \frac{\rho_w}{\sigma^2},
    \end{align*}
    and
    \begin{align*}
        \opr &\ge \frac{\sigma^2 d_s}{n - d_s - 1} \Big/ \rbr{\sigma^2 \frac{d_\wts(N)}{q} + \rho_w \rbr{1 + \frac{d_w}{q}} + \rho_s} \\
        &\ge \frac{d_s}{n} \Big/ \rbr{\frac{d_\wts(N)}{q} + \rbr{1 + \frac{d_w}{q}} \frac{\rho_w}{\sigma^2} + \frac{\rho_s}{\sigma^2}} \\
        &= \rbr{\frac{n}{q}\ \frac{d_\wts(N)}{d_s} + \frac{n}{d_s}\ \rbr{\rbr{1 + \frac{d_w}{q}} \frac{\rho_w}{\sigma^2} + \frac{\rho_s}{\sigma^2}}}^{-1}.
    \end{align*}
\end{proof}

\subsection{Proof of \Cref{cor:non_monotonic_scaling}}\label{apx:pf_non_monotonic_scaling}
\begin{proof}[Proof of \Cref{cor:non_monotonic_scaling}]
    Recall the notation introduced for conciseness:
    \begin{align*}
        d_\wts(N) = d_{s \wedge w} + (d_w - d_{s \wedge w}) \frac{d_s}{N}.
    \end{align*}
    Then, the lower bounds for $\pgr$ and $\opr$ in \Cref{cor:pgr} can be expressed in terms of $d_\wts(N)$, $\rho_w / \sigma^2$, and $\rho_s / \sigma^2$ as
    \begin{align*}
        \pgr \ge 1 - \frac{d_\wts(N)}{d_w} - \frac{q}{d_w} \frac{\rho_s}{\sigma^2} - \frac{q + d_w}{d_w} \frac{\rho_w}{\sigma^2},
    \end{align*}
    and 
    \begin{align*}
        \opr \ge \rbr{\frac{d_w + q + 1}{d_s}\ \rbr{\frac{d_\wts(N) + d_w \rho_w / \sigma^2}{q} + \frac{\rho_w + \rho_s}{\sigma^2}}}^{-1}.
    \end{align*}
    The lower bound of $\pgr$ decreases with $q$ and is therefore maximized at the smallest admissible value $q_{\min} = \max\cbr{1, d_s - d_w + 1}$, which yields
    \begin{align*}
        \pgr \ge 1 - \frac{d_\wts(N)}{d_w} - \frac{q_{\min}}{d_w} \frac{\rho_s}{\sigma^2} - \frac{q_{\min} + d_w}{d_w} \frac{\rho_w}{\sigma^2}.
    \end{align*}
    Meanwhile, viewing $q$ as a positive real, the lower bound of $\opr$ is maximized when
    \begin{align*}
        q = \sqrt{\frac{(d_w + 1)\rbr{d_\wts(N) + d_w \rho_w / \sigma^2}}{(\rho_w + \rho_s)/\sigma^2}}.
    \end{align*}
    Substituting the optimal $q$ back into the expression yields the best lower bound for $\opr$:
    \begin{align*}
        \opr \ge &\rbr{\frac{d_\wts(N) + \frac{d_w \rho_w}{\sigma^2}}{d_s} + \frac{(d_w + 1)(\rho_w + \rho_s)}{d_s \sigma^2} + 2 \sqrt{\frac{d_\wts(N) + \frac{d_w \rho_w}{\sigma^2}}{d_s}\ \frac{(d_w + 1)(\rho_w + \rho_s)}{d_s \sigma^2}}}^{-1} \\
        = &d_s \rbr{\sqrt{d_\wts(N) + \frac{d_w \rho_w}{\sigma^2}} + \sqrt{\frac{(d_w + 1)(\rho_w + \rho_s)}{\sigma^2}}}^{-2}.
    \end{align*}
\end{proof}

\section{Ridge regression analysis}\label{apx:ridge_regression}
In this section, we investigate the more realistic scenario where the weak and strong feature covariances are not exactly low-rank but admit a small number of dominating eigenvalues. 

Concretely, we consider the same data distribution $(\xb, y) \sim \Dcal(f_*)$ with $y = f_*(\xb) + z$ for some independent Gaussian label noise $z \sim \Ncal(0, \sigma^2)$ and an unknown ground truth function $f_*: \Xcal \to \R$ as in \Cref{sec:ridgeless_regression}.
Under the same sub-gaussian feature assumption as in \Cref{asm:features}, we adapt \Cref{def:low_intrinsic_dim,def:correlation_dim} to the ridge regression setting as follows.
\begin{assumption}[Data distribution]\label{asm:ridge_regression}
    Let $\phi_s: \Xcal \to \R^d$ and $\phi_w: \Xcal \to \R^d$ be the strong and weak pretrained models that take $\xb \sim \Dcal$ and output pretrained features $\phi_s(\xb), \phi_w(\xb) \in \R^d$, respectively.
    Under \Cref{asm:features}, we further assume the following conditions:
    \begin{enumerate}[label=(\roman*)]
        \item \b{Gaussian fourth-moment tensor}: Notice that under \Cref{asm:features}, there exists a random vector $\gammab = \gammab(\xb) \in \R^d$ with $\E[\gammab] = \b0_d$ and $\E[\gammab \gammab^\top] = \Ib_d$ such that
        \begin{align}\label{eq:asm_feature_gaussian_4th_moment}
            \rbr{\phi_w(\xb), \phi_s(\xb), \phi_*(\xb)} \stackrel{d}{=} \rbr{\Sigmab_w^{1/2} \gammab, \Sigmab_s^{1/2} \gammab, \Sigmab_*^{1/2} \gammab}.
        \end{align}
        We assume that $\gammab$ has the same fourth-moment tensor as a standard Gaussian random vector, \ie, for any $\ab, \bb, \cb, \db \in \R^d$,
        \begin{align}\label{eq:asm_feature_4th_moment}
            \E\sbr{(\ab^\top \gammab)(\bb^\top \gammab)(\cb^\top \gammab)(\db^\top \gammab)} = (\ab^\top \bb)(\cb^\top \db) + (\ab^\top \cb)(\bb^\top \db) + (\ab^\top \db)(\bb^\top \cb).
        \end{align}
        \item \b{Covariance normalization}: We assume without loss of generality that $\nbr{\Sigmab_w}_2 \asymp 1$, $\nbr{\Sigmab_s}_2 \asymp 1$, and $\nbr{\Sigmab_*}_2 \asymp 1$.
        \item \b{Low intrinsic dimension}: Let $\Sigmab_s$ and $\Sigmab_w$ both be \b{positive-definite} with spectral decompositions $\Sigmab_s = \Vb_s \Lambdab_s \Vb_s^\top$ and $\Sigmab_w = \Vb_w \Lambdab_w \Vb_w^\top$, where $\Lambdab_s, \Lambdab_w \in \R^{d \times d}$ are diagonal matrices with positive eigenvalues in decreasing order; while $\Vb_s \in \R^{d \times d}$ and $\Vb_w \in \R^{d \times d}$ are orthogonal matrices consisting of the corresponding orthonormal eigenvectors. The low intrinsic dimension of FT implies that $\Lambdab_s = \diag(\lambda^s_1,\cdots,\lambda^s_d)$ and $\Lambdab_w = \diag(\lambda^w_1,\cdots,\lambda^w_d)$ consist of a few dominating eigenvalues, while the rest of the eigenvalues are negligible, \ie, there exist $d_s, d_w \ll d$ such that $\sum_{i > d_s} \lambda^s_i \ll \tr(\Sigmab_s)$ and $\sum_{i > d_w} \lambda^w_i \ll \tr(\Sigmab_w)$. Here, 
        \begin{align*}
            \tr(\Sigmab_s) \lesssim d_s \quad \t{and} \quad \tr(\Sigmab_w) \lesssim d_w
        \end{align*}
        effectively measure the intrinsic dimensions of $\phi_s$ and $\phi_w$.
    \end{enumerate}
\end{assumption}

\begin{remark}[Weak-strong similarity]
    In place of correlation dimension (\Cref{def:correlation_dim}) in the ridgeless setting, for the ridge regression analysis, we measure the similarity between the weak and strong models directly through $\tr(\Sigmab_s \Sigmab_w)$. Notice that 
    \begin{align*}
        \tr(\Sigmab_s \Sigmab_w) \le \min\cbr{\tr(\Sigmab_s)\nbr{\Sigmab_w}_2, \tr(\Sigmab_w)\nbr{\Sigmab_s}_2} \lesssim \min\cbr{\tr(\Sigmab_s), \tr(\Sigmab_w)}.
    \end{align*}
    In particular, when $\Sigmab_s$ and $\Sigmab_w$ admit low intrinsic dimensions, $\tr(\Sigmab_s \Sigmab_w)$ can be much smaller than $\min\cbr{\tr(\Sigmab_s), \tr(\Sigmab_w)}$ if their eigenvectors corresponding to the dominating eigenvalues are almost orthogonal.
\end{remark}

\begin{remark}[FT approximation errors]\label{rem:ft_approx_err_ridge}
    It is worth noting that under the positive-definite covariance assumptions in \Cref{asm:ridge_regression}(iii), the FT approximation errors in \Cref{def:ft_est_err} satisfy
    \begin{align}\label{eq:pf_ridge_ft_approx_err}
    \begin{split}
        &\rho_s = \min_{\thetab \in \R^d} \E_{\xb \sim \Dcal}\sbr{(\phi_s(\xb)^\top \thetab - f_*(\xb))^2} = 0 \quad (\t{when } \thetab = \Sigmab_s^{-1/2} \Sigmab_*^{1/2} \thetab_*), \\
        &\rho_w = \min_{\thetab \in \R^d} \E_{\xb \sim \Dcal}\sbr{(\phi_w(\xb)^\top \thetab - f_*(\xb))^2} = 0 \quad (\t{when } \thetab = \Sigmab_w^{-1/2} \Sigmab_*^{1/2} \thetab_*).
    \end{split}
    \end{align}
    In place of \Cref{def:ft_est_err}, with positive-definite covariances in \Cref{asm:ridge_regression}, we measure the alignment between the ground truth feature $\phi_*$ and the weak/strong feature $\phi_w, \phi_s$ through
    \begin{align*}
        \varrho_s = \|\Sigmab_s^{-1/2} \Sigmab_*^{1/2} \thetab_*\|_2^2, \quad \varrho_w = \|\Sigmab_w^{-1/2} \Sigmab_*^{1/2} \thetab_*\|_2^2.
    \end{align*}
    Intuitively, for $\Sigmab_s$ and $\Sigmab_w$ with a few dominating eigenvalues (\Cref{asm:ridge_regression}(iii)), $\varrho_s$ and $\varrho_w$ are small if the eigensubspace associated with non-negligible eigenvalues of $\Sigmab_*$ is fully covered by the eigensubspaces associated with the dominating eigenvalues of $\Sigmab_s$ and $\Sigmab_w$, respectively. 
\end{remark}

The W2S FT under ridge regression consists of two steps.
\begin{enumerate}[label=(\alph*)]
    \item First, the weak teacher $f_w(\xb) = \phi_w(\xb)^\top \thetab_w$ is supervisedly finetuned over $\wt\Scal$: 
    \begin{align}\label{eq:w2s_weak_ridge}
        \thetab_w = \argmin_{\thetab \in \R^d} \frac{1}{n}\nbr{\wt\Phib_w \thetab - \wt\yb}_2^2 + \alpha_w \nbr{\thetab}_2^2, \quad \alpha_w > 0.
    \end{align}
    \item Then, the W2S model $f_\wts(\xb) = \phi_s(\xb)^\top \thetab_\wts$ is finetuned over the strong feature $\phi_s$ through the unlabeled samples $\Scal_x$ and their pseudo-labels generated by the weak teacher model:
    \begin{align}\label{eq:w2s_strong_ridge}
        \thetab_\wts = \argmin_{\thetab \in \R^d} \frac{1}{N}\nbr{\Phib_s \thetab - \Phib_w \thetab_w}_2^2 + \alpha_\wts \nbr{\thetab}_2^2, \quad \alpha_\wts > 0.
    \end{align}
\end{enumerate}

\begin{theorem}[W2S under ridge regression, corrected]\label{thm:w2s_ridge}
    Under \Cref{asm:ridge_regression}, the W2S model $f_\wts$ in \eqref{eq:w2s_strong_ridge} satisfies $\exrisk(f_\wts) = \vari(f_\wts) + \bias(f_\wts)$ with
    \begin{align}
        &\vari(f_\wts) \le \frac{\sigma^2}{16 \alpha_w \alpha_\wts n} \mathsf{cd}_N(\Sigmab_w,\Sigmab_s), 
    \end{align}
    where $\mathsf{cd}_N(\Sigmab_w,\Sigmab_s)$ (abbreviation of correlation dimension) is an analogous measure of the similarity between the teacher and student features to the correlation dimension in \Cref{def:correlation_dim} for the ridgeless setting, defined as 
    \begin{align}
        \mathsf{cd}_N(\Sigmab_w,\Sigmab_s) = \tr(\Sigmab_s \Sigmab_w) + \frac{1}{N} \rbr{\tr(\Sigmab_s \Sigmab_w) + \tr(\Sigmab_s) \tr(\Sigmab_w)}.
    \end{align}
    Additionally, in the proportional asymptotic limit where $n, d \to \infty$ with $d/n \to \eta \in (0,1)$,
    \begin{align}\label{eq:w2s_ridge_bias}
        \bias(f_\wts) 
        \le \frac{\alpha_w}{2 (1-\sqrt{\eta})^2} \varrho_w + \frac{\alpha_\wts}{2} \varrho_s.
    \end{align}
    
    Let $\varrho_w = \|\Sigmab_w^{-1/2} \Sigmab_*^{1/2}\thetab_*\|_2^2$ and $\varrho_s = \|\Sigmab_s^{-1/2} \Sigmab_*^{1/2}\thetab_*\|_2^2$.
    Choosing 
    \begin{align*}
        \alpha_w^* =
        \rbr{\frac{\sigma^2 (1-\sqrt{\eta})^4 \varrho_s}{8 n \varrho_w^2} \mathsf{cd}_N(\Sigmab_w,\Sigmab_s)}^{1/3},
        \qquad
        \alpha_\wts^* =
        \rbr{ \frac{\sigma^2 \varrho_w}{8 (1-\sqrt{\eta})^2 n \varrho_s^2} \mathsf{cd}_N(\Sigmab_w,\Sigmab_s)}^{1/3}
    \end{align*}
    yields the optimal bias-variance trade-off in upper bounds above, with the resulting risk bound
    \begin{align*}
        \exrisk(f_\wts) 
        \le 3 \rbr{ \frac{\sigma^2 \varrho_w \varrho_s}{64 (1-\sqrt{\eta})^2 n} \mathsf{cd}_N(\Sigmab_w,\Sigmab_s)}^{1/3}.
    \end{align*}
\end{theorem}
\Cref{thm:w2s_ridge} conveys the same qualitative message as \Cref{thm:w2s_ft}: less similar teacher and student features, corresponding to lower $\mathsf{cd}_N(\Sigmab_w,\Sigmab_s)$, lead to smaller variance and therefore better W2S generalization. 

In \Cref{thm:w2s_ridge}, $N$ controls only the correction term in $\mathsf{cd}_N(\Sigmab_w,\Sigmab_s)$. For large enough $N$, $\mathsf{cd}_N(\Sigmab_w,\Sigmab_s) \lesssim \tr(\Sigmab_s \Sigmab_w)$, and the leading dependence is
\begin{align*}
    \exrisk(f_\wts) = O\rbr{\rbr{\frac{\sigma^2 \varrho_w \varrho_s \tr(\Sigmab_s \Sigmab_w)} {(1-\sqrt{\eta})^2 n}}^{1/3}}.
\end{align*}
Increasing $N$ cannot remove the leading inherited weak-teacher parameter-noise term $\tr(\Sigmab_s \Sigmab_w)$.

\begin{proof}[Proof of \Cref{thm:w2s_ridge}]
    Under \Cref{asm:ridge_regression}, there exist independent $\Gammab = [\gammab_1, \ldots, \gammab_N]^\top \in \R^{N \times d}$ and $\wt\Gammab = [\wt\gammab_1, \ldots, \wt\gammab_n]^\top \in \R^{n \times d}$ consisting of $\iid$ zero-mean isotropic rows such that
    \begin{align*}
        \Phib_s = \Gammab \Sigmab_s^{1/2},
        \qquad
        \Phib_w = \Gammab \Sigmab_w^{1/2},
        \qquad
        \wt\Phib_w = \wt\Gammab \Sigmab_w^{1/2},
    \end{align*}
    and
    \begin{align*}
        \fb_* = \Gammab \Sigmab_*^{1/2}\thetab_*,
        \qquad
        \wt\fb_* = \wt\Gammab \Sigmab_*^{1/2}\thetab_*.
    \end{align*}
    We define
    \begin{align*}
        \xib
        &= \Sigmab_w^{1/2} \thetab_w \\
        &= \Sigmab_w^{1/2} \rbr{\wt\Phib_w^\top \wt\Phib_w + \alpha_w n \Ib_d}^{-1} \wt\Phib_w^\top \wt\fb_* \\
        &= \rbr{\wt\Gammab^\top \wt\Gammab + \alpha_w n \Sigmab_w^{-1}}^{-1}
        \wt\Gammab^\top \wt\Gammab\, \Sigmab_*^{1/2}\thetab_*.
    \end{align*}

    \paragraph{Variance.}
    Let
    \begin{align*}
        \Hb_s = \Phib_s \rbr{\Phib_s^\top \Phib_s + \alpha_\wts N \Ib_d}^{-1} \Phib_s^\top,
    \end{align*}
    and
    \begin{align*}
        \zetab
        = \Sigmab_w^{1/2} \rbr{\wt\Phib_w^\top \wt\Phib_w + \alpha_w n \Ib_d}^{-1} \wt\Phib_w^\top \wt\zb.
    \end{align*}
    Then, 
    \begin{align*}
        \vari(f_\wts) = \E\sbr{\frac{1}{N} \nbr{\Hb_s \Gammab \zetab}_2^2}.
    \end{align*}

    We observe that for every $\lambda > 0$ and $t \ge 0$,
    \begin{align*}
        \frac{t}{(t+\lambda)^2} \le \frac{1}{4\lambda}.
    \end{align*}
    Analogously, for every positive semidefinite matrix $\Bb \succeq 0$,
    \begin{align}\label{eq:scalar_ridge_filter_inequality}
        (\Bb+\lambda \Ib_d)^{-1} \Bb (\Bb+\lambda \Ib_d)^{-1} \preceq \frac{1}{4\lambda} \Ib_d.
    \end{align}
    Applying \eqref{eq:scalar_ridge_filter_inequality} first to $\Bb = \Phib_s^\top \Phib_s$, we have
    \begin{align*}
        \Hb_s^2
        = \Phib_s (\Phib_s^\top \Phib_s + \alpha_\wts N \Ib_d)^{-1} \Phib_s^\top \Phib_s (\Phib_s^\top \Phib_s+\alpha_\wts N \Ib_d)^{-1} \Phib_s^\top
        \preceq \frac{1}{4\alpha_\wts N} \Phib_s \Phib_s^\top.
    \end{align*}
    Hence,
    \begin{align*}
        \vari(f_\wts)
        \le \frac{1}{4\alpha_\wts N^2} \tr\rbr{ \E_{\Scal_x}\sbr{\Gammab^\top \Phib_s \Phib_s^\top \Gammab}\E_{\wt\Scal,\wt\zb}\sbr{\zetab \zetab^\top} }.
    \end{align*}

    Next, conditioning on $\wt\Scal$, since $\wt\zb \sim \Ncal(\b0_n,\sigma^2 \Ib_n)$, we have 
    \begin{align*}
        \E\sbr{\zetab \zetab^\top \mid \wt\Scal}
        =
        \sigma^2
        \Sigmab_w^{1/2}
        (\Phib_w^\top \Phib_w + \alpha_w n \Ib_d)^{-1}
        \Phib_w^\top \Phib_w
        (\Phib_w^\top \Phib_w + \alpha_w n \Ib_d)^{-1}
        \Sigmab_w^{1/2}.
    \end{align*}
    Again by \eqref{eq:scalar_ridge_filter_inequality}, we have
    \begin{align*}
        \E\sbr{\zetab \zetab^\top \mid \wt\Scal} \preceq \frac{\sigma^2}{4\alpha_w n} \Sigmab_w,
    \end{align*}
    and therefore,
    \begin{align*}
        \E\sbr{\zetab \zetab^\top}
        \preceq
        \frac{\sigma^2}{4\alpha_w n} \Sigmab_w.
    \end{align*}

    Meanwhile, notice that
    \begin{align*}
        \Gammab^\top \Phib_s \Phib_s^\top \Gammab = \Gammab^\top \Gammab \Sigmab_s \Gammab^\top \Gammab,
    \end{align*}
    and then, \Cref{asm:ridge_regression}, \eqref{eq:asm_feature_4th_moment}, implies that
    \begin{align*}
        \E_{\Scal_x}\sbr{\Gammab^\top \Gammab \Sigmab_s \Gammab^\top \Gammab}
        = N^2 \rbr{ \rbr{1+\frac{1}{N}} \Sigmab_s + \frac{1}{N} \tr(\Sigmab_s) \Ib_d }.
    \end{align*}
    Therefore,
    \begin{align*}
        \vari(f_\wts)
        &\le \frac{\sigma^2}{16 \alpha_w \alpha_\wts n} \rbr{ \rbr{1+\frac{1}{N}} \tr(\Sigmab_s \Sigmab_w) + \frac{1}{N} \tr(\Sigmab_s) \tr(\Sigmab_w) } \\
        &= \frac{\sigma^2}{16 \alpha_w \alpha_\wts n} \mathsf{cd}_N(\Sigmab_w,\Sigmab_s).
    \end{align*}

    \paragraph{Bias.}
    We start by observing that 
    \begin{align*}
        \Hb_s \Gammab \xib - \Gammab \Sigmab_*^{1/2}\thetab_*
        =
        - \Hb_s \Gammab \rbr{\Sigmab_*^{1/2}\thetab_* - \xib} - (\Ib_N - \Hb_s) \Gammab \Sigmab_*^{1/2}\thetab_*.
    \end{align*}
    Therefore, the bias can be decomposed into two parts:
    \begin{align*}
        \bias(f_\wts)
        &= \E\sbr{\frac{1}{N} \nbr{\Hb_s \Gammab \xib - \Gammab \Sigmab_*^{1/2}\thetab_*}_2^2} \\
        &\le 2 \E\sbr{\frac{1}{N} \nbr{\Hb_s \Gammab \rbr{\Sigmab_*^{1/2}\thetab_* - \xib}}_2^2}
        + 2 \E\sbr{\frac{1}{N} \nbr{(\Ib_N - \Hb_s)\Gammab \Sigmab_*^{1/2}\thetab_*}_2^2}.
    \end{align*}

    For the first term, since $0 \preceq \Hb_s \preceq \Ib_N$,
    \begin{align*}
        \E\sbr{\frac{1}{N} \nbr{\Hb_s \Gammab \rbr{\Sigmab_*^{1/2}\thetab_* - \xib}}_2^2}
        \le
        \E\sbr{\frac{1}{N} \nbr{\Gammab \rbr{\Sigmab_*^{1/2}\thetab_* - \xib}}_2^2}.
    \end{align*}
    Conditioning on $\wt\Scal$, $\Sigmab_*^{1/2}\thetab_* - \xib$ is fixed and $\Gammab$ is independent of $\wt\Scal$. Hence
    \begin{align*}
        \E_{\Scal_x}\sbr{\frac{1}{N} \nbr{\Gammab \rbr{\Sigmab_*^{1/2}\thetab_* - \xib}}_2^2 \mid \wt\Scal}
        &= \rbr{\Sigmab_*^{1/2}\thetab_* - \xib}^\top \E\sbr{\frac{1}{N} \Gammab^\top \Gammab} \rbr{\Sigmab_*^{1/2}\thetab_* - \xib} \\
        &= \nbr{\Sigmab_*^{1/2}\thetab_* - \xib}_2^2.
    \end{align*}
    Therefore,
    \begin{align*}
        \E\sbr{\frac{1}{N} \nbr{\Hb_s \Gammab \rbr{\Sigmab_*^{1/2}\thetab_* - \xib}}_2^2}
        \le
        \E_{\wt\Scal}\sbr{\nbr{\Sigmab_*^{1/2}\thetab_* - \xib}_2^2}.
    \end{align*}

    For the second term, let $\thetab_s^\circ = \Sigmab_s^{-1/2} \Sigmab_*^{1/2}\thetab_*$.
    Then, $\Gammab \Sigmab_*^{1/2}\thetab_* = \Phib_s \thetab_s^\circ$, $\nbr{\thetab_s^\circ}_2^2 = \varrho_s$, and 
    \begin{align*}
        (\Ib_N - \Hb_s)\Phib_s \thetab_s^\circ
        =
        \alpha_\wts N \Phib_s (\Phib_s^\top \Phib_s+\alpha_\wts N \Ib_d)^{-1} \thetab_s^\circ.
    \end{align*}
    Consequently,
    \begin{align*}
        \frac{1}{N} \nbr{(\Ib_N - \Hb_s)\Phib_s \thetab_s^\circ}_2^2
        =
        \frac{(\alpha_\wts N)^2}{N}
        (\thetab_s^\circ)^\top
        (\Phib_s^\top \Phib_s+\alpha_\wts N \Ib_d)^{-1}
        \Phib_s^\top \Phib_s
        (\Phib_s^\top \Phib_s+\alpha_\wts N \Ib_d)^{-1}
        \thetab_s^\circ.
    \end{align*}
    Therefore, by \eqref{eq:scalar_ridge_filter_inequality}, we get
    \begin{align*}
        \frac{1}{N} \nbr{(\Ib_N - \Hb_s)\Gammab \Sigmab_*^{1/2}\thetab_*}_2^2
        \le \frac{\alpha_\wts N}{4N} \nbr{\thetab_s^\circ}_2^2
        = \frac{\alpha_\wts}{4} \varrho_s.
    \end{align*}
    Combining the two pieces, we have
    \begin{align}\label{eq:w2s_ridge_bias_intermediate}
        \bias(f_\wts)
        \le
        2 \E_{\wt\Scal}\sbr{\nbr{\Sigmab_*^{1/2}\thetab_* - \xib}_2^2} + \frac{\alpha_\wts}{2} \varrho_s.
    \end{align}

    In the proportional asymptotic limit where $n, d \to \infty$ with $d/n \to \eta \in (0,1)$, by the Marchenko-Pastur law, the smallest eigenvalue of $\frac{1}{n} \wt\Gammab^\top \wt\Gammab$ converges to $(1-\sqrt{\eta})^2$ almost surely.
    Therefore,
    \begin{align*}
        \frac{1}{n} \nbr{\wt\Gammab \rbr{\Sigmab_*^{1/2}\thetab_* - \xib}}_2^2 \ge (1-\sqrt{\eta})^2 \nbr{\Sigmab_*^{1/2}\thetab_* - \xib}_2^2 \quad a.s..
    \end{align*}

    On the other hand, the weak ridge residual on the labeled sample is
    \begin{align*}
        \wt\Gammab \rbr{\Sigmab_*^{1/2}\thetab_* - \xib} = (\Ib_n - \Hb_w)\wt\Gammab \Sigmab_*^{1/2}\thetab_*,
    \end{align*}
    where
    \begin{align*}
        \Hb_w = \wt\Phib_w \rbr{\wt\Phib_w^\top \wt\Phib_w + \alpha_w n \Ib_d}^{-1} \wt\Phib_w^\top.
    \end{align*}
    Let $\thetab_w^\circ = \Sigmab_w^{-1/2} \Sigmab_*^{1/2}\thetab_*$ such that $\nbr{\thetab_w^\circ}_2^2 = \varrho_w$.
    Since $\wt\Gammab \Sigmab_*^{1/2}\thetab_* = \wt\Phib_w \thetab_w^\circ$, 
    \begin{align*}
        &\frac{1}{n} \nbr{(\Ib_n - \Hb_w)\wt\Phib_w \thetab_w^\circ}_2^2 \\
        &= \frac{(\alpha_w n)^2}{n}
        (\thetab_w^\circ)^\top
        (\Phib_w^\top \Phib_w + \alpha_w n \Ib_d)^{-1}
        \Phib_w^\top \Phib_w
        (\Phib_w^\top \Phib_w + \alpha_w n \Ib_d)^{-1}
        \thetab_w^\circ \\
        &\le \frac{\alpha_w}{4} \varrho_w,
    \end{align*}
    where the last step uses $(\alpha_w n)^2 t/(t+\alpha_w n)^2 \le \alpha_w n/4$ for every $t \ge 0$. 
    Therefore,
    \begin{align*}
        \nbr{\Sigmab_*^{1/2}\thetab_* - \xib}_2^2 \le \frac{\alpha_w}{4 (1-\sqrt{\eta})^2} \varrho_w.
    \end{align*}
    Taking expectation over $\wt\Scal$ yields
    \begin{align*}
        \E_{\wt\Scal}\sbr{\nbr{\Sigmab_*^{1/2}\thetab_* - \xib}_2^2} \le \frac{\alpha_w}{4 (1-\sqrt{\eta})^2} \varrho_w.
    \end{align*}
    Substituting this bound into \eqref{eq:w2s_ridge_bias_intermediate} gives
    \begin{align*}
        \bias(f_\wts)
        \le
        \frac{\alpha_w}{2 (1-\sqrt{\eta})^2} \varrho_w + \frac{\alpha_\wts}{2} \varrho_s.
    \end{align*}

    \paragraph{Viariance-bias trade-off.}
    Optimizing the upper bound
    \begin{align*}
        \exrisk(f_\wts) \le
        \frac{\sigma^2}{16 \alpha_w \alpha_\wts n} \mathsf{cd}_N(\Sigmab_w,\Sigmab_s)
        +
        \frac{\alpha_w}{2 (1-\sqrt{\eta})^2} \varrho_w
        +
        \frac{\alpha_\wts}{2} \varrho_s
    \end{align*}
    with respect to $\alpha_w$ and $\alpha_\wts$ gives the stated formulas for $\alpha_w^*$, $\alpha_\wts^*$, and the resulting optimized risk bound. 
\end{proof}

\section{Canonical angles}\label{apx:canonical_angles}
In this section, we review the concept of canonical angles between two subspaces that connect the formal definition of the correlation dimension $d_{s \wedge w} = \nbr{\Vb_s^\top \Vb_w}_F^2$ in \Cref{def:correlation_dim} to the intuitive notion of the alignment between the corresponding subspaces $\Vcal_s$ and $\Vcal_w$ in the introduction: $\sum \cos(\angle(\Vcal_s, \Vcal_w)) = \nbr{\Vb_s^\top \Vb_w}_F^2$.
\begin{definition}[Canonical angles \cite{golub2013matrix}, adapting from \cite{dong2024efficient}]\label{def:canonical_angles}
    Let $\Vcal_s,\Vcal_w \subseteq \R^d$ be two subspaces with dimensions $\dim\rbr{\Vcal_s}=d_s$ and $\dim\rbr{\Vcal_w}=d_w$ (assuming $d_w \geq d_s$ without loss of generality). The canonical angles $\angle\rbr{\Vcal_s,\Vcal_w}=\diag\rbr{\nu_1,\dots,\nu_{d_s}}$ are $d_s$ angles that jointly measure the alignment between $\Vcal_s$ and $\Vcal_w$, defined recursively as follows:
    \begin{align*}
        &\ub_i, \vb_i ~\triangleq~
        \argmax~\ub_i^*\vb_i \\
        \t{s.t.}~
        &\ub_i \in \rbr{\Vcal_s \setminus \spn\cbr{\ub_{\iota}}_{\iota=1}^{i-1}} \cap \SSS^{d-1},\\ 
        &\vb_i \in \rbr{\Vcal_w \setminus \spn\cbr{\vb_{\iota}}_{\iota=1}^{i-1}} \cap \SSS^{d-1}\\
        &\cos (\nu_i) = \ub_i^* \vb_i \quad \forall~ i=1,\dots,k,
    \end{align*}
    such that $0 \leq \nu_1 \leq \dots \leq \nu_k \leq \pi/2$.

    Given two subspaces $\Vcal_s,\Vcal_w \subseteq \R^d$, let $\Vb_s \in \R^{d \times d_s}$ and $\Vb_w \in \R^{d \times d_w}$ be the matrices whose columns form orthonormal bases for $\Vcal_s$ and $\Vcal_w$, respectively. Then, the canonical angles $\angle(\Vcal_s, \Vcal_w)$ are determined by the singular values of $\Vb_s^\top \Vb_w$~\citep[\S 3]{bjorck1973numerical}:
    \begin{align*}
        \cos(\angle_i(\Vcal_s, \Vcal_w)) = \sigma_i(\Vb_s^\top \Vb_w) \quad \forall~ i=1,\dots,d_s,
    \end{align*}
    where $\sigma_i(\Vb_s^\top \Vb_w)$ denotes the $i$-th singular value of $\Vb_s^\top \Vb_w$.
\end{definition}

In particular, since $\Vb_s, \Vb_w$ consist of orthonormal columns, the singular values of $\Vb_s^\top \Vb_w$ fall in $[0,1]$, and therefore,
\begin{align*}
    d_{s \wedge w} = \sum \cos(\angle(\Vcal_s, \Vcal_w)) = \nbr{\Vb_s^\top \Vb_w}_F^2 \in [0, \min\cbr{d_s, d_w}].
\end{align*}

\section{Additional experiments}\label{apx:exp_details}

\subsection{Additional experiments and details on UTKFace regression}\label{apx:exp_img_reg}
This section provides some additional details and results for the UTKFace regression experiments in \Cref{sec:exp_img_reg}. 

\begin{figure}[!h]
    \centering
    \includegraphics[width=\columnwidth]{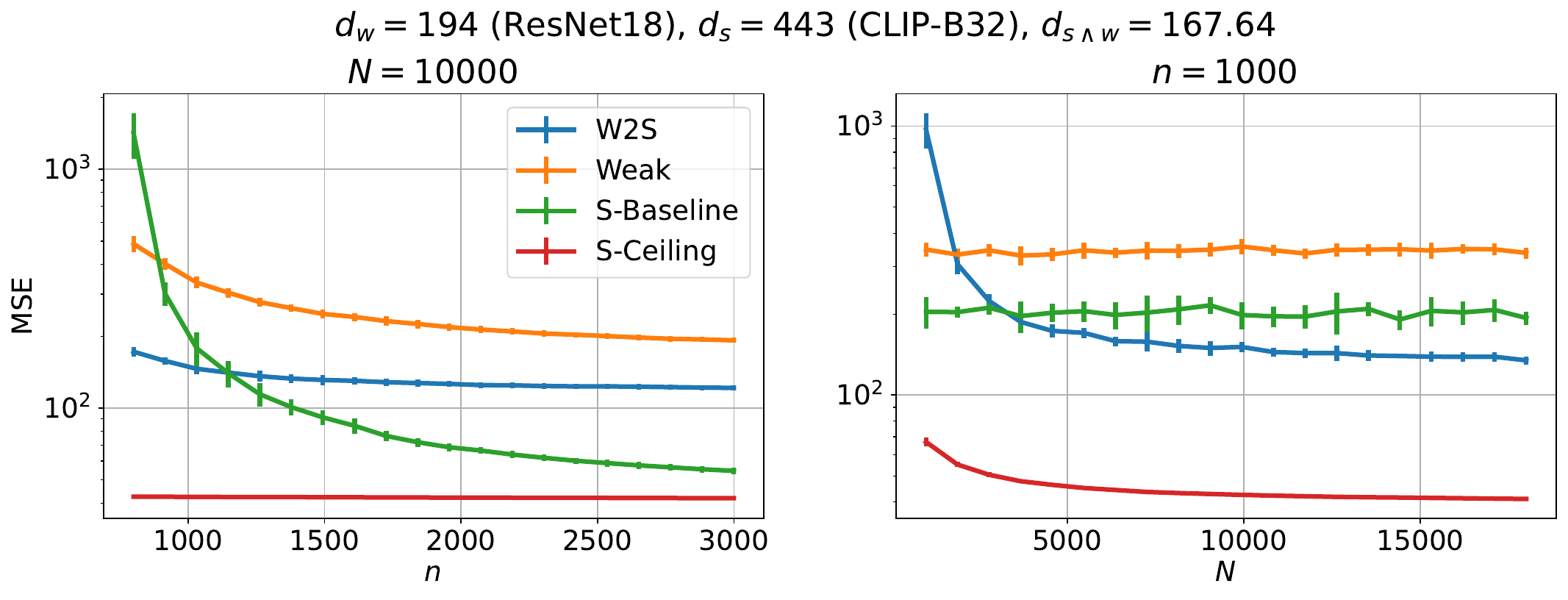}
    \caption{Scaling for MSE on UTKFace with \texttt{CLIP-B32} as the strong student and \texttt{ResNet18} as the weak teacher}\label{fig:mse_utkface_resnet18-clip}
\end{figure}

\begin{figure}[!h]
    \centering
    \includegraphics[width=\columnwidth]{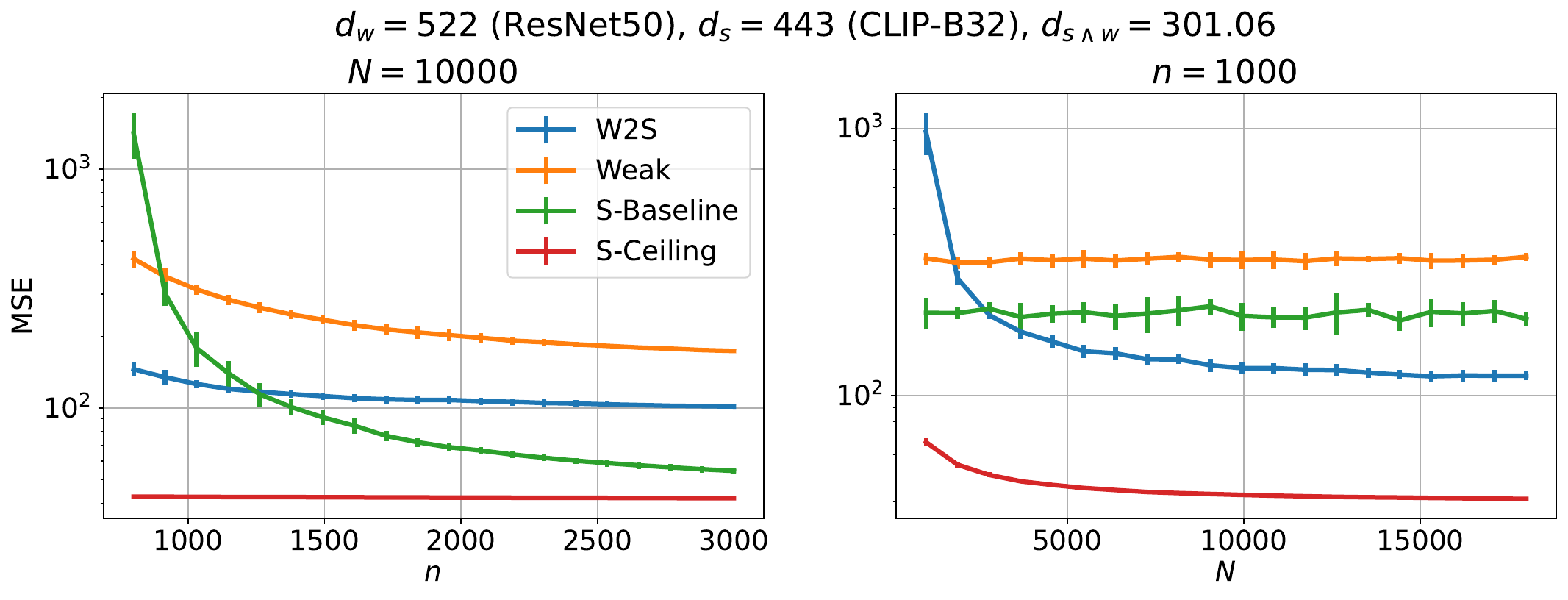}
    \caption{Scaling for MSE on UTKFace with \texttt{CLIP-B32} as the strong student and \texttt{ResNet50} as the weak teacher}\label{fig:mse_utkface_resnet50-clip}
\end{figure}

\begin{figure}[!h]
    \centering
    \includegraphics[width=\columnwidth]{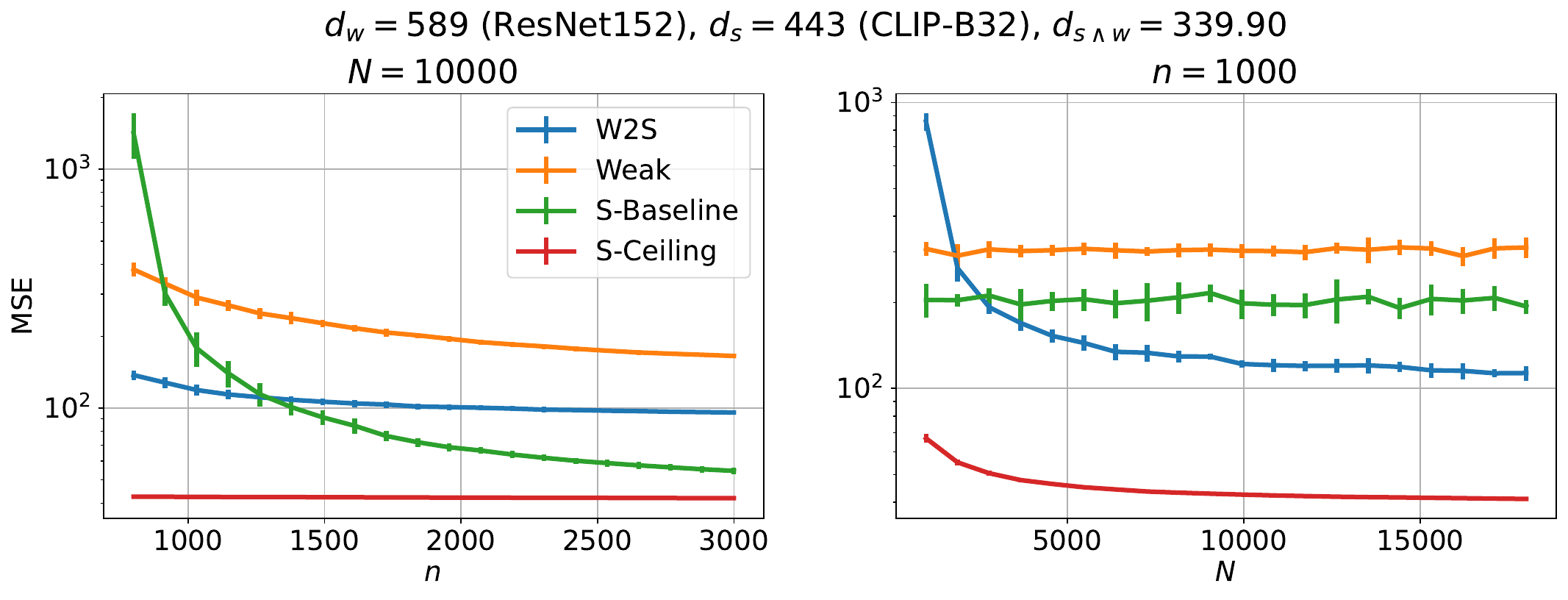}
    \caption{Scaling for MSE on UTKFace with \texttt{CLIP-B32} as the strong student and \texttt{ResNet152} as the weak teacher}\label{fig:mse_utkface_resnet152-clip}
\end{figure}

We summarize the relevant dimensionality in \Cref{tab:img_reg_dim}. We observe the following:
\begin{itemize}
    \item The intrinsic dimensions of the pretrained features are significantly smaller than the ambiance feature dimensions, which is consistent with our theoretical analysis and the empirical observations in \cite{aghajanyan2020intrinsic}. 
    \item The correlation dimensions $d_{s \wedge w}$ are considerably smaller than the corresponding intrinsic dimensions, indicating that the subspaces spanned by the weak and strong features are not aligned in practice. As shown in \Cref{sec:exp_img_reg}, such discrepancies in the weak and strong features facilitate W2S generalization.
\end{itemize}

\begin{table}[!ht]
    \centering
    \caption{Summary of the pretrained feature dimensions, along with the intrinsic dimensions $d_s, d_w$ and correlation dimensions $d_{s \wedge w}$ (with respect to the strong student \texttt{CLIP-B32}) computed over the entire UTKFace dataset (including training and testing).}\label{tab:img_reg_dim}
    \begin{tabular}{c|ccc}
        \toprule
        Pretrained Model & Feature Dimension & Intrinsic Dimension ($\tau=0.01$) & Correlation Dimension \\
        \midrule
        \texttt{ResNet18} & 512 & 194 & 167.64 \\
        \texttt{ResNet34} & 512 & 150 & 129.97 \\
        \texttt{ResNet50} & 2048 & 522 & 301.06 \\
        \texttt{ResNet101} & 2048 & 615 & 354.52 \\
        \texttt{ResNet152} & 2048 & 589 & 339.90 \\
        \midrule
        \texttt{CLIP-B32} & 768 & 443 & $\times$ \\
        \bottomrule
    \end{tabular}
\end{table}

For reference, we provide the scaling for MSE losses of three representative teacher-student pairs in \Cref{fig:mse_utkface_resnet18-clip,fig:mse_utkface_resnet50-clip,fig:mse_utkface_resnet152-clip}. 
\begin{itemize}
    \item It is worth highlighting that while the MSE loss of $f_\wts$ monotonically decreases with respect to both sample sizes $n,N$, the different rates of convergence compared to $f_w$, $f_s$, and $f_c$ lead to the distinct scaling behavior of the relative W2S performance ($\pgr$ and $\opr$) with respect to $n$ versus $N$ in \Cref{fig:pgr_opr_utkface_resnet-clip,fig:pgr_opr_utkface_vardom_resnet-clip}.
    \item When the strong student has a lower intrinsic dimension than the weak teacher (\cf \Cref{fig:mse_utkface_resnet18-clip} versus \Cref{fig:mse_utkface_resnet50-clip,fig:mse_utkface_resnet152-clip}), $d_s < d_w$, the W2S model $f_\wts$ tends to achieve better generalization in terms of the test MSE. This is consistent with our analysis in \Cref{sec:generalization_errors}.
    \item When $d_s < d_w$, the W2S model $f_\wts$ tends to achieve (slightly) better generalization for (slightly) smaller correlation dimension $d_{s \wedge w}$ (\cf \Cref{fig:mse_utkface_resnet50-clip} versus \Cref{fig:mse_utkface_resnet152-clip}), again coinciding with our analysis in \Cref{sec:generalization_errors}.
    \item W2S generalization generally happens (\ie $f_\wts$ is able to outperform $f_w$) with sufficiently large sample sizes $n, N$. However, as the labeled sample size $n$ increases, the test MSE of $f_\wts$ converges slower than that of the strong baseline and ceiling models, $f_s$ and $f_c$, leading to the inverse scaling for $\pgr$ and $\opr$ with respect to $n$ in \Cref{fig:pgr_opr_utkface_resnet-clip,fig:pgr_opr_utkface_vardom_resnet-clip}. When $n$ is too large, the W2S model $f_\wts$ may not be able to achieve better generalization than the strong baseline $f_s$.
\end{itemize}

\subsection{Experiments on image classification}\label{apx:exp_img_cls}

\paragraph{Dataset.} ColoredMNIST \citep{arjovsky2019invariant} consists of groups of different colors and reassign the label to be binary (digits 0-4 vs 5-9). We pool together the groups to form one dataset. The choice is to bring diversity to the feature space with additional color features and thus potential feature discrepancies. We hold out a test set of 7000 samples and use the rest 63000 samples for training.

\paragraph{Linear probing over pretrained features.} We fix a strong student as DINOv2-s14 \citep{oquab2023dinov2} and vary the weak teacher among the ResNet-d series and ResNet series (ResNet18D, ResNet34D, ResNet101, ResNet152) \citep{he2018resnetd,he2015deepresiduallearningimage}. We replace ResNet18 and ResNet34 used in \Cref{sec:exp_img_reg} to experiment on weak models with similar intrinsic dimensions but different correlation dimensions. We treat the backbone of the models (excluding the classification layer) as $\phi_s$ and $\phi_w$ and finetune them via linear probing. We train the models with cross-entropy loss and AdamW optimizer. We tune the training hyperparameters of weak and strong models using a validation set and train them for 800 steps with a learning rate 1e-3 and weight decay 1e-6. 

\begin{table}[!ht]
    \centering
    \caption{Summary of the pretrained feature dimensions, along with the intrinsic dimensions $d_s, d_w$ and correlation dimensions $d_{s \wedge w}$ (with respect to the strong student \texttt{DINOv2-S14}) computed over the entire ColoredMNIST dataset (including training and testing).}\label{tab:img_cls_dim_coloredmnist}
    \begin{tabular}{c|ccc}
        \toprule
        Pretrained Model & Feature Dimension & Intrinsic Dimension ($\tau=0.01$) & Correlation Dimension \\
        \midrule
        \texttt{ResNet-18-D} & 512 & 117 & 6.23 \\
        \texttt{ResNet-34-D} & 512 & 127 & 7.07 \\
        \texttt{ResNet101} & 2048 & 121 & 1.74 \\
        \texttt{ResNet152} & 2048 & 128 & 1.88 \\
        \midrule
        \texttt{DINOv2-S14} & 384 & 28 & $\times$ \\
        \bottomrule
    \end{tabular}
\end{table}

\begin{figure}[!h]
    \centering
    \includegraphics[width=\columnwidth]{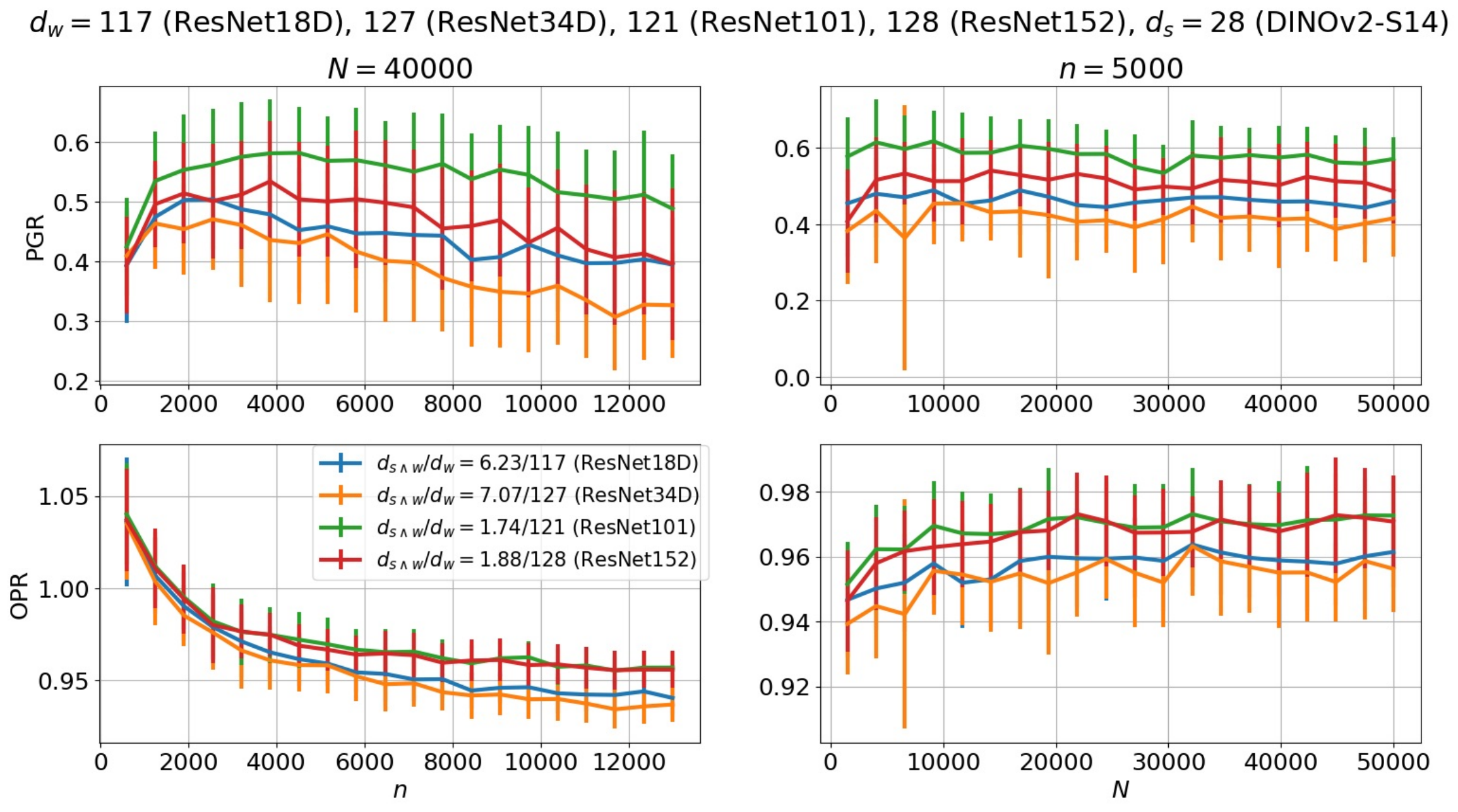}
    \caption{Scaling for $\pgr$ and $\opr$ of different weak teachers with a fixed strong student on ColoredMNIST.}\label{fig:coloredmnist_dscapw}
\end{figure}

\begin{figure}[!h]
    \centering
    \includegraphics[width=\columnwidth]{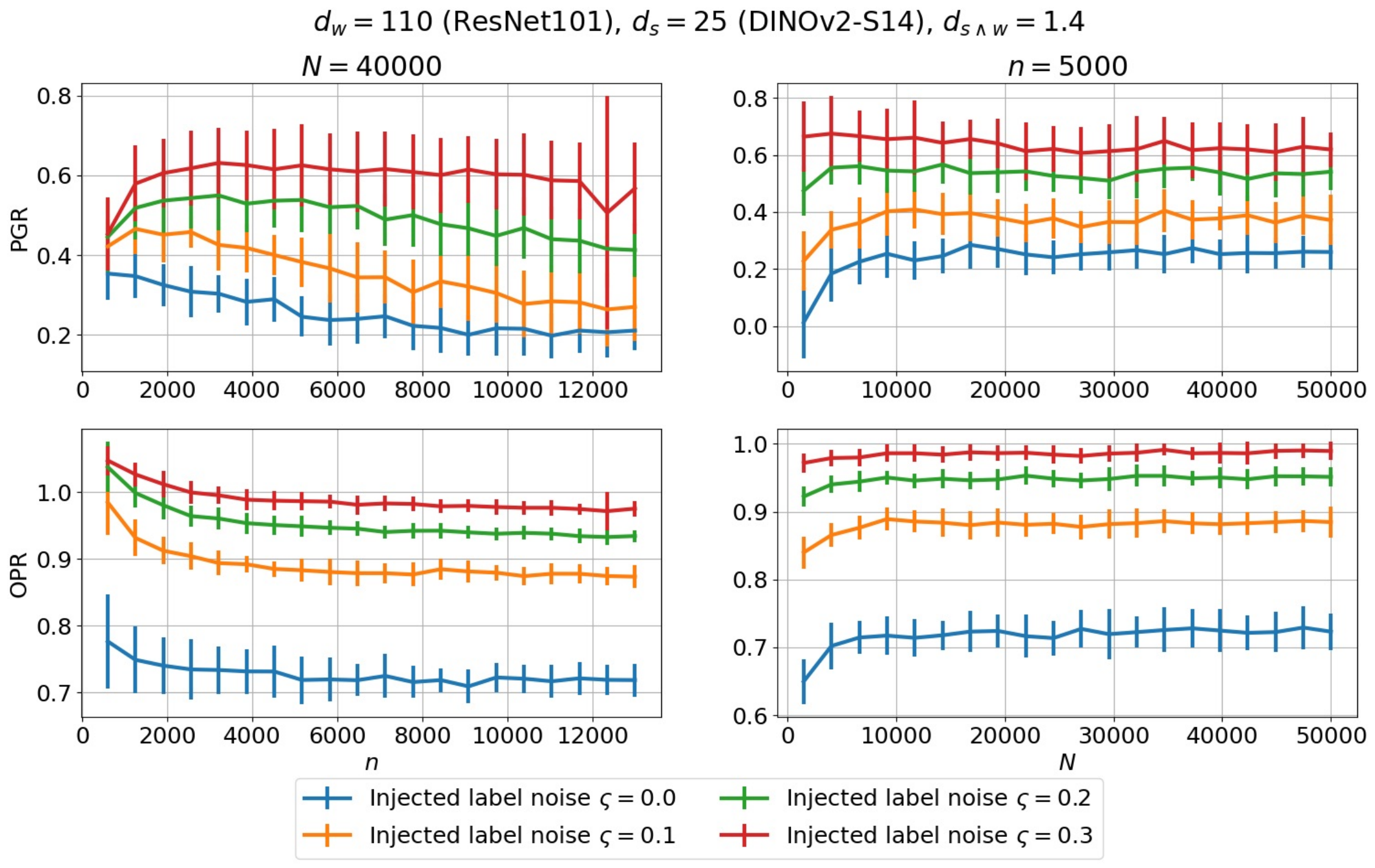}
    \caption{Scaling for $\pgr$ and $\opr$ of W2S on ColoredMNIST with injected label noise.}\label{fig:coloredmnist_variance}
\end{figure}

\paragraph{Discrepancies lead to better W2S.}
\Cref{fig:coloredmnist_dscapw} shows the scaling of $\pgr$ and $\opr$ with respect to the sample sizes $n, N$ for different weak teachers in the ResNet series with respect to a fixed student, DINOv2-s14. 
As in \Cref{sec:exp_img_reg}, we observe that with similar intrinsic dimensions $d_s, d_w$, W2S achieves better relative performance in terms of $\pgr$ and $\opr$ when the correlation dimension $d_{s \wedge w}$ is smaller.

\paragraph{Variance reduction is a key advantage of W2S.}
We inject noise to the labels of the original ColoredMNIST training samples by randomly flipping the ground truth labels with probability $\varsigma \in [0,1]$ (following \cite{arjovsky2019invariant}). 
\Cref{fig:coloredmnist_variance} shows the scaling of $\pgr$ and $\opr$ with respect to $n$ and $N$ when taking DINOv2-S14 as the strong student and ResNet101 as the weak teacher. We observe that the larger artificial label noise $\varsigma$ leads to higher $\pgr$ and $\opr$. 



\subsection{Experiments on sentiment classification}\label{apx:exp_nlp_cls}

\paragraph{Dataset.} The Stanford Sentiment Treebank \citep{socher-etal-2013-sst2} is a corpus with fully labeled parse trees that allows for a complete analysis of the compositional effects of sentiment in language. The corpus is based on the dataset introduced by \citet{pang-lee-2005-sst_original_corpus} and consists of 11,855 single sentences extracted from movie reviews. It was parsed with the Stanford parser and includes a total of 215,154 unique phrases from those parse trees, each annotated by 3 human judges. 
We construct a binary sentiment dataset from the SST parse trees by subsampling and splitting the dataset into training and testing sets of sizes 63000 and 4349. Generalization errors are estimated with the 0-1 loss over the test set.

\paragraph{Full finetuning.} We fix the strong student as Electra-base-discriminator \citep{clark2020electra} and vary the weak teacher among the Bert series \citep{turc2019bert-tiny} (Bert-Tiny, Bert-Mini, Bert-Small, Bert-Medium). 
With manageable model sizes, 
we conduct full finetuning experiments following the setup in \cite{burns2023weak}.
We use the standard cross-entropy loss for supervised finetuning. 
When training strong students on weak labels (W2S), we use the confidence-weighted loss proposed by \cite{burns2023weak}, which is suggested to be able to improve weak-to-strong generalization on many NLP tasks.
All training is conducted via Adam optimizers~\citep{kingma2014adam} with a learning rate of 5e-5, a cosine learning rate schedule, and 40 warmup steps. We train for 3 epochs, which is sufficient for the train and validation losses to stabilize. 

\paragraph{Intrinsic dimension.} The intrinsic dimensions $d_w,d_s$ are measured based on the Structure-Aware Intrinsic Dimension (SAID) method proposed by \cite{aghajanyan2020intrinsic}. We first train the full models on the whole training set, and then train the models with only $d$ trainable parameters based on SAID transformation. The $d_w$ or $d_s$ are the smallest number of parameters under SAID that is necessary to retain 90\% accuracy of the full models. Here, the 90\% accuracy is a common threshold used to estimate intrinsic dimensions in the literature \citep{li2018measuring}.

\begin{figure}[!h]
    \centering
    \includegraphics[width=\columnwidth]{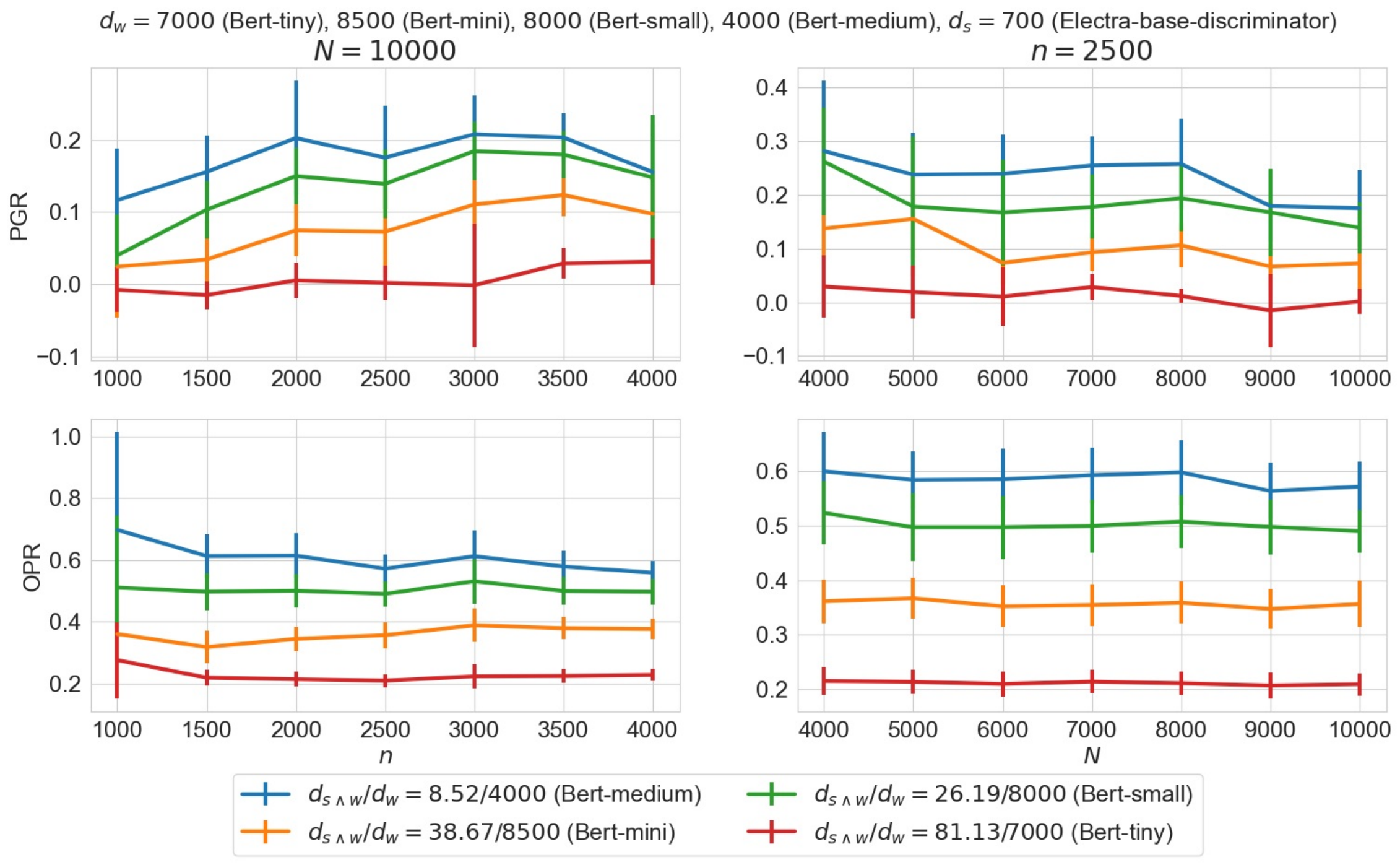}
    \caption{Scaling for $\pgr$ and $\opr$ of different weak teachers with a fixed strong student on SST-2.}\label{fig:sst2_dsw}
\end{figure}

\begin{figure}[!h]
    \centering
    \includegraphics[width=\columnwidth]{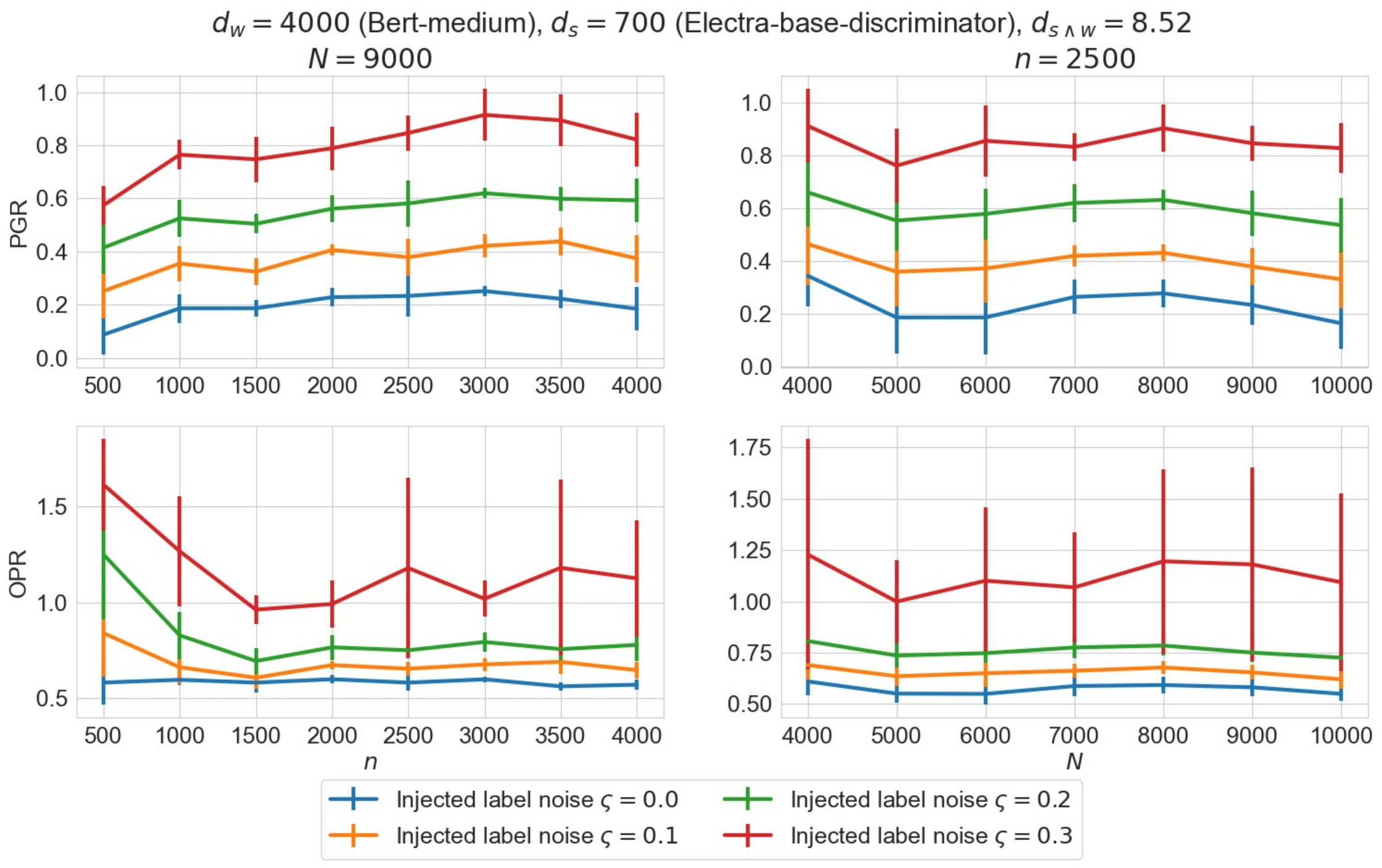}
    \caption{Scaling for $\pgr$ and $\opr$ of W2S on SST-2 with injected label noise.}\label{fig:sst2_var}
\end{figure}

\paragraph{Correlation Dimension.} 
Let $D_s, D_w \in \N$ be the finetunable parameter counts of the strong and weak models, respectively. For full FT whose dynamics fall in the kernel regime, as explained in \Cref{rmk:lp_to_general_ft}, the strong and weak ``features'' become the gradients\footnote{
    Notice that $f_s, f_w$ are scalar-valued functions for binary classification tasks like SST-2, and thus the gradients $\nabla_{\thetab} f_s$ and $\nabla_{\thetab} f_w$ are row vectors.
    For multi-class classification tasks where $f_s, f_w$ output vectors of logits, a common heuristic to keep $\Phib_s, \Phib_w$ as matrices of manageable sizes (in constrast to tensors) is to replace gradients of the models, $\nabla_{\thetab} f_s$ and $\nabla_{\thetab} f_w$, with gradients of MSE losses at the pretrained initialization. 
    The gradients of MSE can be viewed as a weighted sum of the model gradients for each class.
}, $\Phib_s = \nabla_{\thetab} f_s(\Xb | \theta_s^{(0)}) \in \R^{N \times D_s}$ and $\Phib_w = \nabla_{\thetab} f_w(\Xb | \theta_w^{(0)}) \in \R^{N \times D_w}$, of the respective models at the pretrained initialization, $\theta_s^{(0)} \in \R^{D_s}$ and $\theta_w^{(0)} \in \R^{D_w}$.

A practical challenge is that $D_s, D_w, N$ are all huge for full FT on most NLP tasks, making it infeasible to compute the $D_s \times D_s$ and $D_w \times D_w$ Gram matrices and their spectral decompositions. 
As a remedy, we leverage the significantly lower intrinsic dimensions $d_s \ll D_s, d_w \ll D_w$ (see \Cref{tab:img_cls_dim_coloredmnist}) to accelerate estimation of $d_{s \wedge w}$ via sketching~\citep{halko2011finding,woodruff2014sketching}.
\begin{enumerate}[label=(\roman*)]
    \item We first reduce both $D_s, D_w$ to the same lower dimension $D = 0.01 \min\{D_s, D_w\}$ (with $D \gg \max\{d_s, d_w\}$) by subsampling columns of $\Phib_s, \Phib_w$ (uniformly for efficiency, or adaptively via sketching-based interpolative decomposition~\citep{dong2023simpler} when affordable) to obtain $\Phib_s', \Phib_w' \in \R^{N \times D}$.
    \item Then, we use randomized rangefinder~\citep[Algorithm 4.1]{halko2011finding} to approximate the first $d_s, d_w$ right singular vectors, $\Vb_s \in \R^{D \times d_s}$ and $\Vb_w \in \R^{D \times d_w}$, of $\Phib_s', \Phib_w'$. Taking the evaluation of $\Vb_s$ as an example, we draw a Gaussian random matrix $\Gb_s \in \R^{N \times (d_s+p)}$ with a small oversampling of size $p=O(1)$ and compute an orthornormal basis for the approximated dimension-$d_s$ leading singular value subspace, $\Vb_s = \ortho_{d_s}((\Phib_s')^\top \Gb_s)$, via the column pivoted QR decomposition.
    \item Finally, we compute the correlation dimension $d_{s \wedge w} = \nbr{\Vb_s^\top \Vb_w}_F^2$.
\end{enumerate}

\begin{table}[!ht]
    \centering
    \caption{Summary of finetunable parameter counts $D_s, D_w$, intrinsic dimensions $d_s, d_w$, and correlation dimensions $d_{s \wedge w}$ (with respect to the strong student \texttt{Electra}) computed over the entire SST-2 dataset (including training and testing).}\label{tab:sst2_dim}
    \begin{tabular}{c|ccc}
        \toprule
        Pretrained Model & $D_s,D_w$ & Intrinsic Dimension ($\tau=0.01$) & Correlation Dimension \\
        \midrule
        \texttt{Bert-Tiny} & 4.4M & 7000 & 81.13 \\
        \texttt{Bert-Mini} & 11.2M & 8500 & 38.67 \\
        \texttt{Bert-Small} & 28.8M & 8000 & 26.19 \\
        \texttt{Bert-Medium} & 41.4M & 4000 & 8.52 \\
        \midrule
        \texttt{Electra} & 109.5M & 700 & $\times$ \\
        \bottomrule
    \end{tabular}
\end{table}

\paragraph{Discrepancies lead to better W2S.}
\Cref{fig:sst2_dsw} shows the scaling of $\pgr$ and $\opr$ with respect to $n$ and $N$ for different $d_{s \wedge w}$. 
As in \Cref{sec:exp_img_reg,apx:exp_img_cls}, we observe the better relative W2S performance in terms of $\pgr$ and $\opr$ when $d_{s \wedge w}/d_w$ is smaller.

\paragraph{Variance reduction is a key advantage of W2S.}
We inject noise to the labels of training samples by randomly flipping labels with probability $\varsigma = 0, 0.1, 0.2, 0.3$. 
\Cref{fig:sst2_var} shows the scaling of $\pgr$ and $\opr$ with respect to $n$ and $N$ when taking \texttt{Electra} as the strong student and \texttt{Bert-Medium} as the weak teacher. We observe that the larger artificial label noise $\varsigma$ leads to higher $\pgr$ and $\opr$.

\end{document}